\documentclass{article} 
\usepackage{iclr2027_conference,times}
\usepackage{iclr2027_conference_added_preprint_flag}
\usepackage{longtable}

\usepackage{amsmath,amsfonts,bm}

\def\eqref#1{equation~\ref{#1}}

\def\1{\bm{1}}

\def\eps{{\epsilon}}

\DeclareMathAlphabet{\mathsfit}{\encodingdefault}{\sfdefault}{m}{sl}
\SetMathAlphabet{\mathsfit}{bold}{\encodingdefault}{\sfdefault}{bx}{n}

\DeclareMathOperator*{\argmax}{arg\,max}
\DeclareMathOperator*{\argmin}{arg\,min}

\usepackage{hyperref}
\usepackage{etoc}
\usepackage{url}
\usepackage{comment}

\usepackage{empheq}
\usepackage{array}
\usepackage{microtype}
\usepackage{empheq}
\usepackage{array}

\usepackage{kbordermatrix}
\usepackage{forest}
\usetikzlibrary{arrows.meta}

\usepackage{amsthm,amsmath,bbm,amsfonts,amssymb}
\usepackage{dsfont} 
\usepackage{mathtools}
\usepackage{commath}
\mathtoolsset{showonlyrefs=false}

\usepackage{xspace}
\usepackage[T1]{fontenc}
\usepackage[utf8]{inputenc}
\usepackage[usenames,dvipsnames]{xcolor} 

\usepackage{hyperref,url}
\usepackage{cleveref} 

\usepackage{wrapfig}
\usepackage[export]{adjustbox}
\usepackage{algorithm,algorithmic}
\usepackage{algorithmic}

\usepackage{graphicx,tabularx}
\usepackage{multirow,hhline}
\usepackage{booktabs}

\usepackage{capt-of}

\usepackage{enumitem}

\allowdisplaybreaks 

\usepackage{pbox} 

\usepackage[export]{adjustbox}

\usepackage{pifont}
\newcommand{\kj}[1]{{\color{RedOrange}[#1]}}

\usepackage{framed}
\usepackage[most]{tcolorbox}
\definecolor{kjgray}{rgb}{.7,.7,.7}

\newtheoremstyle{kjstyle}
{1ex} 
{\topsep} 
{\itshape} 
{} 
{\bfseries} 
{.} 
{.5em} 
{} 

\tcbset{kjboxstyle/.style={title={},breakable,colback=white,enhanced jigsaw,boxrule=1.3pt,sharp corners,colframe=kjgray,boxsep=0pt,coltitle={black},attach title to upper={},left=.8ex,bottom=.4em}}    

\newtheoremstyle{kjstylenoitalic}
{1ex} 
{\topsep} 
{} 
{} 
{\bfseries} 
{.} 
{.5em} 
{} 

\theoremstyle{kjstyle}\newtheorem{thm}{Theorem}
\AtBeginEnvironment{thm}{\begin{tcolorbox}[kjboxstyle]}%
  \AtEndEnvironment{thm}{\end{tcolorbox}}%
\theoremstyle{kjstyle}
\AtBeginEnvironment{lem}{\begin{tcolorbox}[kjboxstyle]}%
  \AtEndEnvironment{lem}{\end{tcolorbox}}%
\theoremstyle{kjstyle}
\AtBeginEnvironment{prop}{\begin{tcolorbox}[kjboxstyle]}%
  \AtEndEnvironment{prop}{\end{tcolorbox}}%
\theoremstyle{kjstyle}\newtheorem{cor}[thm]{Corollary}
\AtBeginEnvironment{cor}{\begin{tcolorbox}[kjboxstyle]}%
  \AtEndEnvironment{cor}{\end{tcolorbox}}%

\theoremstyle{kjstyle}
\AtBeginEnvironment{defn}{\begin{tcolorbox}[kjboxstyle]}%
  \AtEndEnvironment{defn}{\end{tcolorbox}}%

\theoremstyle{kjstyle}
\AtBeginEnvironment{ass}{\begin{tcolorbox}[kjboxstyle]}%
  \AtEndEnvironment{ass}{\end{tcolorbox}}%

\theoremstyle{kjstyle}
\AtBeginEnvironment{question}{\begin{tcolorbox}[kjboxstyle]}%
  \AtEndEnvironment{question}{\end{tcolorbox}}%

\theoremstyle{kjstylenoitalic}\newtheorem{remark}{Remark}
\AtBeginEnvironment{remark}{\begin{tcolorbox}[kjboxstyle]}%
  \AtEndEnvironment{remark}{\end{tcolorbox}}%

\usepackage{mdframed}
\usepackage{lipsum}
\definecolor{kjgray}{rgb}{.7,.7,.7}
\makeatletter



\makeatletter
\renewcommand{\paragraph}{%
  \@startsection{paragraph}{4}%
  {\z@}{0.50ex \@plus 1ex \@minus .2ex}{-1em}%
  {\normalfont\normalsize\bfseries}%
}
\makeatother

\usepackage[customcolors,shade]{hf-tikz} 

\def\ddefloop#1{\ifx\ddefloop#1\else\ddef{#1}\expandafter\ddefloop\fi}

\def\ddef#1{\expandafter\def\csname #1#1\endcsname{\ensuremath{\mathbb{#1}}}}
\ddefloop ABCDFGHIJKLMNORSTUWXYZ\ddefloop 

\def\ddef#1{\expandafter\def\csname c#1\endcsname{\ensuremath{\mathcal{#1}}}}
\ddefloop ABCDEFGHIJKLMNOPQRSTUVWXYZ\ddefloop

\def\ddef#1{\expandafter\def\csname b#1\endcsname{\ensuremath{{\mathbf{#1}}}}}
\ddefloop ABCDEFGHIJKLMNOPQRSTUVWXYZ\ddefloop  
\def\ddef#1{\expandafter\def\csname b#1\endcsname{\ensuremath{{\boldsymbol{#1}}}}}
\ddefloop abcdeghijklmnopqrtsuvwxyz\ddefloop  

\def\ddef#1{\expandafter\def\csname h#1\endcsname{\ensuremath{\hat{#1}}}}
\ddefloop ABCDEFGHIJKLMNOPQRSTUVWXYZabcdefghijklmnopqrsuvwxyz\ddefloop 
\def\ddef#1{\expandafter\def\csname hc#1\endcsname{\ensuremath{\hat{\mathcal{#1}}}}}
\ddefloop ABCDEFGHIJKLMNOPQRSTUVWXYZ\ddefloop
\def\ddef#1{\expandafter\def\csname hb#1\endcsname{\ensuremath{\hat{\mathbf{#1}}}}}
\ddefloop ABCDEFGHIJKLMNOPQRSTUVWXYZ\ddefloop %
\def\ddef#1{\expandafter\def\csname hb#1\endcsname{\ensuremath{\hat{\boldsymbol{#1}}}}}
\ddefloop abcdefghijklmnopqrstuvwxyz\ddefloop %

\def\ddef#1{\expandafter\def\csname t#1\endcsname{\ensuremath{\tilde{#1}}}}
\ddefloop ABCDEFGHIJKLMNOPQRSTUVWXYZabcdefgijklmnpqtsuvwxyz\ddefloop 
\def\ddef#1{\expandafter\def\csname tc#1\endcsname{\ensuremath{\tilde{\mathcal{#1}}}}}
\ddefloop ABCDEFGHIJKLMNOPQRSTUVWXYZ\ddefloop
\def\ddef#1{\expandafter\def\csname tb#1\endcsname{\ensuremath{\tilde{\mathbf{#1}}}}}
\ddefloop ABCDEFGHIJKLMNOPQRSTUVWXYZ\ddefloop
\def\ddef#1{\expandafter\def\csname tb#1\endcsname{\ensuremath{\tilde{\boldsymbol{#1}}}}}
\ddefloop abcdefghijklmnopqrstuvwxyz\ddefloop %

\def\ddef#1{\expandafter\def\csname bar#1\endcsname{\ensuremath{\bar{#1}}}}
\ddefloop ABCDEFGHIJKLMNOPQRSTUVWXYZabcdefghijklmnopqrtsuvwxyz\ddefloop
\def\ddef#1{\expandafter\def\csname barc#1\endcsname{\ensuremath{\bar{\mathcal{#1}}}}}
\ddefloop ABCDEFGHIJKLMNOPQRSTUVWXYZ\ddefloop
\def\ddef#1{\expandafter\def\csname barb#1\endcsname{\ensuremath{\bar{\mathbf{#1}}}}}
\ddefloop ABCDEFGHIJKLMNOPQRSTUVWXYZ\ddefloop
\def\ddef#1{\expandafter\def\csname barb#1\endcsname{\ensuremath{\bar{\boldsymbol{#1}}}}}
\ddefloop abcdefghijklmnopqrstuvwxyz\ddefloop %

\def\ddef#1{\expandafter\def\csname war#1\endcsname{\ensuremath{\overline{#1}}}}
\ddefloop ABCDEFGHIJKLMNOPQRSTUVWXYZabcdefghijklmnopqrtsuvwxyz\ddefloop
\def\ddef#1{\expandafter\def\csname warc#1\endcsname{\ensuremath{\overline{\mathcal{#1}}}}}
\ddefloop ABCDEFGHIJKLMNOPQRSTUVWXYZ\ddefloop
\def\ddef#1{\expandafter\def\csname warb#1\endcsname{\ensuremath{\overline{\mathbf{#1}}}}}
\ddefloop ABCDEFGHIJKLMNOPQRSTUVWXYZ\ddefloop
\def\ddef#1{\expandafter\def\csname warb#1\endcsname{\ensuremath{\overline{\boldsymbol{#1}}}}}
\ddefloop abcdefghijklmnopqrstuvwxyz\ddefloop %

\DeclareMathOperator{\EE}{\mathbb{E}}

\DeclareMathOperator{\PP}{\mathbb{P}}

\DeclareMathOperator{\one}{\mathds{1}}

\def\RR{{\mathbb{R}}}

\def\eps{\ensuremath{\epsilon}\xspace}

\def\T{\ensuremath{\top}}  

\def\eps{\ensuremath{\epsilon}\xspace}

\makeatletter
\newcommand{\vast}{\bBigg@{3}}
\newcommand{\Vast}{\bBigg@{4}}
\makeatother

\def\cX{\ensuremath{\mathcal{X}}\xspace}

\def\cD{\ensuremath{\mathcal{D}}\xspace}

\def\cS{{\ensuremath{\mathcal{S}}}}

\def\cH{{\ensuremath{\mathcal{H}}}}

\def\onec#1{\one\cbr{#1}}

\newcommand{\spacex}{\;\;\;\;}

\newcommand{\kapmode}[1]{{\color{blue}[kap: #1]}}

\newcommand{\inner}[2]{\left\langle #1, #2 \right\rangle}
\usepackage{parskip}

\usepackage{rotating}
\RequirePackage[OT1]{fontenc}
\usepackage{amsmath}
\usepackage{eqnarray}
\usepackage{subcaption}

\usepackage{lmodern}
\usepackage{booktabs}       
\usepackage{nicefrac}       
\usepackage{microtype}      

\usepackage{anyfontsize}
\usepackage{newtxtext}  
\usepackage{MnSymbol} 

\usepackage{enumitem}
\setlist[itemize]{topsep=.5pt,itemsep=0pt,parsep=2pt}
\setlist[enumerate]{topsep=.5pt,itemsep=0pt,parsep=2pt}

\usepackage[normalem]{ulem}
\tcbset{
	highlight math style={
		enhanced,
		colframe=red!60!black,
		colback=yellow!50,
		arc=4pt,
		boxrule=1pt,
		drop fuzzy shadow
	}
}
\newtheorem{assumption}{Assumption}[section]

\newtheorem{definition}{Definition}[section]
\newtheorem{theorem}{Theorem}[section]
\newtheorem{lemma}{Lemma}[section]

\newcommand{\REG}{\mathrm{Reg}}
\newcommand{\HREG}{\widehat{\mathrm{Reg}}}
\newcommand{\edpm}{\textsc{E2D.PMSide}\xspace}
\newcommand{\oedpm}{\textsc{OE2D.PMSide}\xspace}
\newcommand{\sqpm}{\textsc{SquareCB.PMSide}\xspace}
\newcommand{\egqpm}{\textsc{$\epsilon$-Greedy.PMSide}\xspace}

\definecolor{bg}{rgb}{0.91, 0.91, 0.91}
\newcommand{\edit}[2]{{\xspace\textcolor{blue}{\sout{#1}}}{ \textcolor{red}{#2}}}

\newcommand\chicheng[1]{{\color{red}{Chicheng:#1}}}
\newcommand\hao[1]{{\color{cyan}{Hao:#1}}}

\newcommand\kaptodo[1]{{\color{blue}{Kap-TODO:#1}}}
\newcommand\kapquestion[1]{{\color{magenta}{Kap-question:#1}}}

\newif\ifFINAL
 \FINALtrue 

\ifFINAL
  
  \def\guide#1{}
  \def\kj#1{}
  \def\kjnew#1{}
  \renewcommand{\edit}[2]{#2}

  \renewcommand\chicheng[1]{}
  \renewcommand\hao[1]{}
  \renewcommand\kaptodo[1]{}
  \renewcommand\kapquestion[1]{}
\else
  \usepackage{transparent}
  \usepackage[inline]{showlabels}
  \usepackage{rotating}
  \renewcommand{\showlabelfont}%
  {\transparent{0.8}\scriptsize\bfseries\slshape\color{Lavender}}
\fi

\title{Efficient contextual partial monitoring with general function approximation}

\author{Kapilan Balagopalan, Hao Qin \\
	Department of Computer Science \\
	University of Arizona \\
	Tucson, AZ, USA \\
	\texttt{\{kapilanbgp, hqin\}@arizona.edu} \\
	\And
	Kwang-Sung Jun \\
	Graduate School of Artificial Intelligence \\
	POSTECH \\
	Pohang, South Korea \\
	\texttt{kwangsungjun@gmail.com} \\
	\And
	Chicheng Zhang \\
	Department of Computer Science \\
	University of Arizona \\
	Tucson, AZ, USA \\
	\texttt{chichengz@arizona.edu}
}
\iclrfinalcopy

\begin{document}

\maketitle

\lhead{Preprint.}
\begin{abstract}

	We study finite stochastic partial monitoring (PM) games with side information, a paradigm of sequential learning with partial feedback. 
    On each round $t \in [T]$, both player and environment observe a context and player chooses an action and environment simultaneously chooses an outcome. The player neither observes the outcome nor the loss; instead they only observe a feedback symbol
    which is partially informative about the outcome. We utilize the  Estimation-to-Decisions (E2D) meta-algorithm of~\cite{foster21thestatistical} to design \edpm, a computationally-efficient framework for contextual partial monitoring, which uses an online regression oracle to efficiently estimate the outcome distributions based on historical (context, action, feedback symbol) tuples. Since the exploration rule given by the original E2D algorithm is intractable to compute, we propose two computationally-efficient  exploration rules that achieve optimal regret orders in $T$: in the globally observable setting, our exploration rule is based on $\epsilon$-greedy, while in the locally observable setting, our exploration rule uses inverse gap weighting~\citep{foster20beyond} which was previously only known to be effective in the bandit setting. We show that a variant of \edpm named \oedpm can be combined with offline regression oracles to give an efficient algorithm with optimal regret when the contexts are drawn independently from a fixed distribution.  We evaluate our algorithm on a synthetic label-efficient prediction game and an LLM routing task, in which a learner decides from prompt, uncertainty and internal state features whether to escalate a query from a weak model to a strong one. We show that our method has superior performance than prior approaches.
	
\end{abstract}


\section{Introduction}\label{section_main_intro}
The contextual stochastic partial monitoring problem~\citep{bartok2012partial} is an online learning problem where the learner repeatedly takes actions so as to minimize its loss based on the environment's partial feedback. 
At the beginning of the game, 
the learner and the environment are given
a pair $\mathcal{G} = (\boldsymbol{L}, \boldsymbol{\Phi})$ of $k \times d$
matrices, where $k$ is the number of actions available to the learner and $d$ is the number of outcomes
available to the environment. The \emph{loss matrix} $\boldsymbol{L} \in [0,1]^{k \times d}$ records the loss
$L_{ao}$ incurred when the learner plays action $a$ and the environment's outcome is $o$. The \emph{feedback matrix} $\boldsymbol{\Phi} \in \Sigma^{k \times d}$ records what the learner
is shown instead: playing $a$ against outcome $o$ reveals the feedback symbol $\Phi_{ao}$ from a finite alphabet
$\Sigma = \{1, \dots, m\}$. The interaction between the learner and the environment proceeds over $T$ rounds. At round $t$ the learner observes a context $x_t$ (which may be chosen adversarially), samples an action
$A_t \sim p_t(\cdot \mid x_t)$ according to a policy $p_t : \mathcal{X} \to \Delta(k)$, and the environment
draws an outcome $O_t$ from some stochastic distribution $q^{\ast}(x_t)$ independently of the learner's randomization. The learner then
observes the feedback symbol $\sigma_t = \Phi_{A_t O_t}$ and incurs the loss $L_{A_t O_t}$, which remains
hidden. The goal of the learner is to minimize its regret, which is the difference between its expected cumulative loss and that of the best policy in hindsight:
\begin{align*}
	\mathrm{Reg}_T := \sum_{t=1}^{T} \left( p_t^\T \boldsymbol{L} q^{\ast}(x_t)
	- \min_{\lambda \in \Delta(k)} \lambda^\T \boldsymbol{L} q^{\ast}(x_t) \right).
\end{align*}


Contextual stochastic partial monitoring has many applications, such as learning from one-sided feedback~\citep{helmbold00apple,sculley2007practical}, 
dynamic pricing~\citep{kleinberg2003value}, data procurement in crowdsourcing~\citep{singla2014contextual}, and label-efficient prediction~\citep{helmbold1997some}. Below we give two examples:


\paragraph{Example 1: LLM routing.} 
Modern Large language models (LLM)-based question-answering service optimizes its cost-efficiency by operating in a hybrid mode~\citep{ong2025routeLLM, chen2023frugalgpt,byeongwoo26efficient, li2026jev}
: it keeps two LLMs -- one weak but cheap and one strong but expensive; and given a prompt, it judiciously decides which LLMs to route the prompt to, obtaining its response.
At time step $t$, the service receives a new prompt represented by context $x_t$, and the weak model generates a response at negligible cost,
and the router then either serves the weak response ($a_1$) or queries the strong model and serves its response
($a_2$). The environment's (hidden) outcome is whether the weak response agrees with the strong one ($o_1$) or does not agree ($o_2$).
We take the 
strong model to be correct and to cost $1$ per query, and we incur cost of $\ell_{01} > 1$ when a
served weak model's response is wrong. Partial monitoring enters through the feedback: serving the weak response never
reveals whether it was correct, so both outcomes yield the same symbol $\odot$, and only querying the strong
model distinguishes them.
The game's loss matrix and feedback matrix are summarized below:

\begin{align*}
	\boldsymbol{L} = \kbordermatrix{ & o_1 & o_2 \\ a_1 & 0 & \ell_{01} \\ a_2 & 1 & 1 },
	\qquad
	\boldsymbol{\Phi} = \kbordermatrix{ & o_1 & o_2 \\ a_1 & \odot & \odot \\ a_2 & \bot & \wedge }.
\end{align*}
This game was also referred to as the Apple Tasting game~\cite{helmbold00apple} in earlier works.
\paragraph{Example 2: contextual label efficient prediction game~\citep{lattimore2020exploration,helmbold1997some}.}
Suppose we would like to learn a spam filter online, which predicts whether an email is spam ($o_1$) or not ($o_2$) based on its text features ($x_t$) and directs it to the inbox ($a_2$) or spam folder ($a_3$). There is also a third option ($a_1$) where the system can ask the recipient/expert for classification, which is accurate but incurs a fixed cost. Only when the learner chooses action $a_1$ does it get to see the true label of the email. The game's loss matrix and feedback matrix are summarized below:
\begin{align*}
	\boldsymbol{L} = \kbordermatrix{ & o_1 & o_2 \\ a_1 & c & c  \\ a_2 & 1 & 0 \\ a_3 & 0 & 1 },
	\qquad
	\boldsymbol{\Phi} = \kbordermatrix{ & o_1 & o_2 \\ a_1 & \bot & \wedge \\ a_2 & \odot & \odot \\ a_3 & \odot & \odot }.
\end{align*}
Depending on the feedback structure, finite stochastic partial-monitoring games are classified into four categories (classification theorem): trivial, easy, hard, and hopeless games \citep{bartok11pmminimax, lattimore2020exploration}.
In hopeless games, the feedback is too uninformative to support meaningful learning, and the regret might grow linearly with the time horizon ($\Omega(T)$). In contrast, in trivial games, there exists an action which is optimal for every outcome; consequently, they can be solved with zero regret. The main cases requiring more refined analysis are easy ($\Theta(\sqrt{T})$) and hard games ($\Theta(T^{2/3})$). Both classes satisfy global observability, which we define shortly. In addition, easy games satisfy local observability, whereas hard games do not.
\paragraph{Related work: partial monitoring with side information.}
We adopt the contextual partial monitoring setting of
\citet{bartok2012CBPside, heuillet24spm}, and use the terms \emph{contextual partial
	monitoring} and \emph{partial monitoring with side information} interchangeably for it. The classification of
easy and hard games established for finite stochastic non-contextual partial monitoring
\citep{bartok11pmminimax, lattimore2020exploration} extends to this setting, at least in the minimax regret's dependence on $T$. First,
the non-contextual lower bounds of~\cite{bartok11pmminimax} transfer immediately, since a non-contextual game can be viewed as a contextual game with a dummy context.
Second, for the upper bounds, \citet{foster2022complexity} shows the equivalence between the Decision-Estimation Coefficient~\cite{foster21thestatistical} and the exploration-by-optimization (ExO) objective of
\citet{lattimore2020exploration}, which implies that the contextual version of the Estimation-to-Decision algorithm~\cite[][Section 8]{foster21thestatistical} achieve regrets that match optimal rates given by ExO. These optimal statistical guarantees come at a computational cost, however: the algorithm attaining them selects each action
distribution by solving a minimax problem at every round, which is not known to be tractable (see Section~\ref{sec:e2d} for further discussion). Furthermore, our work is closely related to \citet{lattimore2019information, lattimore2020exploration, lattimore2022minimax, lattimore2021mirror} and \citet{foster2022complexity}. A detailed discussion is deferred to \Cref{app_related_works}.

Meanwhile, computationally efficient algorithms have been proposed for contextual stochastic partial monitoring, by imposing strong assumptions on the function class that models the true outcome distribution $q^*$. Utilizing generalized linear models for modeling (functions of) $q^*$, \citet{bartok2012CBPside} proposed \textsc{CBPSide}, a deterministic algorithm based on
confidence intervals that extends UCB-style methods from the bandit setting; it applies only to easy games,
where it achieves $O(\sqrt{T})$ regret. \citet{heuillet24spm} later introduced \textsc{CBPSide}$^*$ and its randomized
counterpart \textsc{RandCBPSide}$^*$, which cover both classes, attaining $O(\sqrt{T})$ regret on easy games and
$O(T^{2/3})$ on hard ones. \citet{kirschner2023linear} studies the linear partial monitoring and its contextual extension, which generalizes contextual finite partial monitoring; however, they require that the mappings from (context, action) to the expected reward and the signal feedback to be (context-independent) linear functions.


This motivates the question that drives our work: are there computationally efficient algorithms for contextual
partial monitoring that support general function approximation and attain the optimal rates? In this paper, we answer this question
affirmatively. Specifically, we make the following contributions:

\paragraph{Contributions.}
(1) we propose an algorithmic framework \edpm that reduces stochastic contextual partial monitoring to online regression. It is based on the E2D (Estimation to Decision) framework~\citep{foster21thestatistical}, where the algorithm aims at finding an exploration distribution that certifies a low Decision-Estimation Coefficient (See Section~\ref{sec:e2d} for its definition); 
our key observation is that this can be done by optimizing a surrogate objective that trades off between greediness and coverage, which paves the way for the design of efficient algorithms for contextual partial monitoring. 
(2) Based on the above relaxed objective, we instantiate \edpm 
in the globally observable and locally observable games to obtain \egqpm and \sqpm algorithms respectively, achieving regret of optimal order in $T$ in their respective settings. These, to our knowledge, are the first \emph{computationally efficient} contextual partial monitoring algorithms with \emph{general function approximation.}
(3) When the contexts are iid, we propose a variant of \edpm named \oedpm, which is an efficient stochastic contextual partial monitoring algorithm based on offline regressions, a computational primitive that can admit simpler implementations in practice.  
(4) We evaluate \sqpm in two contextual partial monitoring tasks: label-efficient prediction and LLM routing, and show that it outperforms several baselines.

\section{Preliminaries}

\paragraph{Basic Notation.}
We write $\boldsymbol{0}$ and $\boldsymbol{1}$ for the all-zeros and all-ones column vectors, $e_1, \dots, e_k$
for the standard basis vectors, and bold symbols for matrices. For a vector or matrix $A$,
$\lVert A \rVert_\infty = \max_{ij} |A_{ij}|$ is the entrywise maximum norm, $M_{i}$ denotes the vector generated by $i$-th row, and $M_{ij}$ denotes the element at $i$-th row and $j$-th column. For two distributions $u$ and $v$, $D^2_{\mathrm{H}}\big( u \,\big|\, v \big)$ denotes their squared Hellinger distance. Let $\mathrm{co}(\mathcal{S})$ denote the convex hull of a set $\cS$. For a subset $\mathcal{A} \subseteq [k]$ we write
$\Delta(\mathcal{A}) := \{\lambda \in \Delta(k) : \lambda_b = 0 \text{ for all } b \notin \mathcal{A}\}$
for the face of $\Delta(k)$ supported on $\mathcal{A}$. Elements of $\Delta(\mathcal{A})$ remain
$k$-dimensional vectors; they simply place no mass outside $\mathcal{A}$.

\paragraph{Signal matrix. (\cite{komiyama2015regret})} Given a feedback matrix $\boldsymbol{\Phi}$, it is convenient to describe the feedback of a single action as a linear map. The \emph{signal matrix} of
action $a$ is the $m \times d$ matrix $\boldsymbol{S}_a$ with
$\boldsymbol{S}_a[\sigma, o] = \onec{\Phi_{ao} = \sigma}$. Writing $\Delta(n)$ for the probability
simplex on $n$ elements, if the outcome is drawn from $q \in \Delta(d)$ then $\boldsymbol{S}_a q \in \Delta(m)$
is precisely the distribution of the symbol observed on playing $a$. Two candidate outcome distributions $q_1, q_2$ are
therefore distinguishable through action $a$ exactly to the extent that $\boldsymbol{S}_a q_1$ and
$\boldsymbol{S}_a q_2$ differ; and it is this quantity, rather than the gap between $q_1$ and $q_2$
themselves, that governs what the learner can hope to identify. Recall that in our stochastic contextual partial monitoring problem, each round is preceded by a context $x_t$ drawn from a set
$\mathcal{X}$, and the outcome distribution depends on it: outcomes follow $q^{\ast}(x_t)$ for an unknown
$q^{\ast} : \mathcal{X} \to \Delta(d)$.
The learner is given a model class
$\mathcal{Q} \subseteq \{q : \mathcal{X} \to \Delta(d)\}$, which we assume is well-specified.

\begin{assumption}[Realizability]\label{assumption_realizability}
	The model class $\mathcal{Q}$ contains the true outcome distribution $q^{\ast}$.
\end{assumption}

\subsection{Structure of partial monitoring games}
We now review some basic structures of partial monitoring games revealed in prior works.
\begin{definition}[Cell decomposition, \citet{bartokICML2012}] \label{def_cells}
	The \textit{cell} $\mathcal C_a $ of action $a$ is defined as the subset of outcome distributions where action $a$ is optimal. Formally, 
	$\mathcal C_a =  \{ q \in \Delta(d): (e_a-e_b)^\T\boldsymbol{L} q  \leq 0,  \forall b \in [k] \}.$ The collection $\{\mathcal{C}_a : a \in [k]\}$ is called the cell decomposition.
\end{definition}
\paragraph{Neighborhood graph~\citep{lattimore2020exploration}.}
Cells that meet in a face of dimension $d-2$ correspond to actions that can be optimal simultaneously, and this
adjacency structure organizes the analysis that follows. Call an action \emph{Pareto optimal} when its cell is
full dimensional, $\dim(\mathcal{C}_a) = d-1$, taking the dimension of a polytope to be that of its affine hull,
and write $\Pi$ for the set of such actions together with
$\Delta(\Pi) := \{\lambda \in \Delta(k) : \lambda_b = 0 \text{ for all } b \notin \Pi\}$ for the corresponding
face of $\Delta(k)$. The remaining actions are of two kinds: $a$ is \emph{degenerate} when
$\mathcal{C}_a \neq \emptyset$ but $\dim(\mathcal{C}_a) \leq d-2$, and \emph{dominated} when
$\mathcal{C}_a = \emptyset$. Since $\dim(\Delta(d)) = d-1$, neither kind is ever uniquely optimal; each is
optimal only on a set of measure zero. A game with no degenerate actions is called \emph{non-degenerate}, and
two actions are \emph{duplicates} when their loss rows coincide, $(e_a - e_b)^\T \boldsymbol{L} = \boldsymbol{0}$. Two Pareto optimal actions are \emph{neighbors} when their cells meet in a face of the largest dimension short
of coincidence, $\dim(\mathcal{C}_a \cap \mathcal{C}_b) = d-2$; the intersection reaches dimension $d-1$ only
for duplicates. Writing $\mathcal{E}$ for the resulting edge set, $(\Pi, \mathcal{E})$ is the
\emph{neighborhood graph} of $\cG$.

\subsection{Observability}
In partial-monitoring games, the learner observes feedback rather than the losses or rewards associated with its actions. Therefore, a central challenge is to infer (differences of) losses of actions from the observed feedback symbol. Whether this is possible depends on the interplay between the loss matrix and the feedback matrix.
\begin{definition}[Global observability, adapted from \citet{bartokICML2012} and \citet{lattimore2020exploration}]\label{gl_observability_graph}
	A partial monitoring game is \textit{globally observable} if, for \emph{every pair of actions}
	$(a,b) \in [k] \times [k]$, there exists a \emph{witness function}
	 $w_{(a,b)} : [k] \times \Sigma \to \mathbb{R}$ such that
	\begin{align*}
		L_{ao} - L_{bo}
		=
		\sum_{c=1}^{k} w_{(a,b)}(c,\Phi_{co}),
		\qquad \text{for all } o \in [d].
	\end{align*}
\end{definition}

\begin{definition}[Local observability, \citet{lattimore2020exploration}]
	\label{local_observability_graph}
	A globally observable, \emph{non-degenerate} partial monitoring game is \emph{locally observable} if \emph{for every edge}
	$e = (a,b) \in \mathcal{E}$ of the neighborhood graph the witness function $w_e$ of Definition~\ref{gl_observability_graph} may be
	chosen to be supported only on the endpoints, that is $w_e(c, \cdot) \equiv 0$ for all $c \notin \{a,b\}$, so that
	\begin{align*}
		L_{ao} - L_{bo} = \sum_{c \in \{a,b\}} w_e(c,\Phi_{co}),
		\qquad \text{for all } o \in [d].
	\end{align*}
\end{definition}
By definition, every locally observable game is a globally observable game. Under global observability, the loss difference
$L_{ao} - L_{bo}$ may depend on feedback from all actions $c \in [k]$, whereas under local observability, it can be expressed using only the feedback from the sequence of actions in $\cE$ connecting $a$ and $b$. Consequently, locally observable games permit more direct loss-difference estimation than globally observable games in general. Since trivial games and hopeless games are not interesting from an algorithm design perspective, for the remainder of the paper we focus on globally observable games.
\begin{assumption}[Global observability and non-degeneracy; \citet{lattimore2020exploration}] \label{assumption_ono_degen}
	We assume that the partial-monitoring game $\mathcal{G} = (\boldsymbol{L}, \boldsymbol{\Phi})$ is globally
	observable, non-degenerate, and contains no
	duplicate actions.
\end{assumption}
For simplicity, we maintain this
assumption throughout the remainder of the analysis, following \citet{lattimore2020exploration}. We discuss
implications of non-degeneracy and possible relaxations in Appendix~\ref{relax_non_degeneracy}.


\subsection{The E2D meta-algorithm for interactive decision making}
\label{sec:e2d}


~\cite{foster21thestatistical} proposes E2D (Estimation-to-Decision), a meta-algorithm for interactive decision making,
which reduces interactive decision making to online estimation. The E2D meta-algorithm yielded many practical algorithms for decision making, especially for contextual bandit problems~\citep[e.g.,][]{foster20beyond,zhu22contextual,zhu2022contextual,zhang2023practical}. In a nutshell, at each round, it calls an online model estimation oracle to obtain an estimated model $\hat{q}$; based on $\hat{q}$ and context $x$, 
it solves a minimax optimization problem to obtain an exploration distribution $p$ to take actions. Specialized to partial monitoring, the optimization problem can be written as:
\begin{align}
		\min_{p \in \Delta(k)}  \underbrace{\sup_{q \in \mathcal{Q}_{x}, \lambda \in \Delta(k)}
		\sbr{ p^\T \boldsymbol{L} q(x) - \lambda^\T \boldsymbol{L} q(x)
			- \gamma \cdot \EE_{a \sim p}\sbr{ D^2_{\mathrm{H}}\big( \boldsymbol{S}_a \hat{q}(x) \,\big|\, \boldsymbol{S}_a q(x) \big) } }}_{\mathcal{V}^{\mathrm{dec}}_{\gamma}(\hat{q}(x), \mathcal{Q}_{x}, p)},
    \label{eqn:e2d-minimax}
\end{align}
where $\mathcal{Q}_x := \left\{ q(x) : q \in \mathcal{Q} \right\} \subseteq \Delta(d)$ denotes the model class $\cQ$ restricted to $x$. The objective explicitly trades off between regret and information gain: the first part $p^\T \boldsymbol{L} q(x) - \lambda^\T \boldsymbol{L} q(x)$ encourages $p$ to have a low regret, while the second part $-\gamma \cdot \EE_{a \sim p}\sbr{ D^2_{\mathrm{H}}\big( \boldsymbol{S}_a \hat{q}(x) \,\big|\, \boldsymbol{S}_a q(x) \big) }$ encourages $p$ to explore diverse actions to correct $\hat{q}$ and gain new information about $q^*$. 
The minimax value of the objective is called the Decision-Estimation Coefficient (DEC)~\citep{foster21thestatistical}, denoted as $\mathrm{dec}_{\gamma}(\hat{q}(x), \mathcal{Q}_{x})$. To help the learner make decisions efficiently, E2D requires access to an oracle that can provide accurate estimation of the outcome distributions for all contexts online, which, when specialized to our stochastic contextual partial monitoring setting, is defined as:
\begin{definition}
\label{def:online-oracle}
	An \emph{online outcome estimation oracle} $\mathrm{Alg}_{\mathrm{On}}$ is an algorithm that estimates the unknown
	outcome distribution $q^{\ast}$ from the data in a sequential fashion. That is, at each round
	$t \in [T]$, given the feedback symbols collected so far, the oracle produces an estimate 
	\begin{align*}
		\hat{q}_t
		= \mathrm{Alg}_{\mathrm{On}}^{(t)}\!\left(\big\{(A_s, x_s, \sigma_s)\big\}_{s=1}^{t-1}\right) \in \mathrm{co}(\mathcal{Q}),
	\end{align*}
	where $\mathrm{Alg}_{\mathrm{On}}^{(t)}$ denotes the map applied at round $t$ and
	$\big\{(A_s, x_s, \sigma_s)\big\}_{s=1}^{t-1}$ is the history available at the beginning of round $t$ (note that $\sigma_s$ is an observed feedback symbol, but not the incurred loss).\\
    $\mathrm{Alg}_{\mathrm{On}}$ is said to satisfy \emph{a Hellinger regret bound} of $\overline{\mathrm{Reg}}_{\mathrm{On}}^{\mathrm{H}}\left( \mathcal{Q}, T, \delta\right)$ if its online Hellinger regret 
    \begin{align}\label{eq:hellinger_estimation_error}
    \mathrm{Reg}_{\mathrm{On}}^{\mathrm{H}}\left( \mathcal{Q}, T\right) := \sum_{t=1}^{T} \EE_{a \sim p_t(\cdot\mid x_t )} \sbr{ D^2_{\mathrm{H}}\big( \boldsymbol{S}_a\hat{q}_t(x_t) \,\big|\, \boldsymbol{S}_a q^\ast(x_t)\big) },
	\end{align}
    is at most $\overline{\mathrm{Reg}}_{\mathrm{On}}^{\mathrm{H}}\left( \mathcal{Q}, T, \delta\right)$ with probability $1-\delta$.
\end{definition}


We note that instead of requiring $\hat{q}_t(x_t)$ to be pointwise close to $q^*(x_t)$ on average, Definition~\ref{def:online-oracle} only requires their associated feedback distributions to be close on average. This is reasonable since we can only use the feedback $\sigma_s$'s to guide our online learning of $\hat{q}_t$; in the extreme case, it is possible that different $q$'s result in the same feedback distribution, making the underlying $q$ unidentifiable. Appendix \ref{app_section_oracles} gives two concrete examples of the online outcome estimation oracle. For example, For finite $\mathcal{Q}$, Vovk's aggregating algorithm~\citep{vovk1995game} attains
$\overline{\mathrm{Reg}}_{\mathrm{On}}^{\mathrm{H}}(\mathcal{Q}, T, \delta)
= O\!(\log(\lvert\mathcal{Q}\rvert/\delta))$. Equipped with an online outcome estimation oracle, E2D is known to have a regret bound of 
\[
\mathrm{Reg}_T \leq O\del{ T \max_{x \in \cX} \mathrm{dec}_{\gamma}(\hat{q}(x), \mathcal{Q}_{x}) + \gamma \overline{\mathrm{Reg}}_{\mathrm{On}}^{\mathrm{H}}\left( \mathcal{Q}, T, \delta\right) },
\]
which translates to $O(T^{2/3})$ and $O(\sqrt{T})$ regrets with optimal tuning of $\gamma$ in globally and locally observable settings, respectively~\citep{foster21thestatistical}.
However, a key drawback of the naive E2D algorithm is that directly solving the minimax optimization problem (\Cref{eqn:e2d-minimax}) is computationally intractable: although the objective is linear in $p$, it is not jointly concave in $(q, \lambda)$.
Nevertheless, it was shown in~\cite{foster21thestatistical} that as long as distribution $p$ \emph{certifies} a small value of $\mathrm{dec}_{\gamma}(\hat{q}(x), \mathcal{Q}_{x})$ (\Cref{eqn:e2d-minimax}), i.e., it achieves a small $\cV_\gamma^{\mathrm{dec}}(\hat{q}(x), \cQ_x, p)$ value, E2D will have small regret.
As we will see, our algorithm design will utilize this observation to address the computational challenge.

\section{Efficient algorithm for contextual partial monitoring}


In this section we present \edpm (Estimation-to-Decision for Partial Monitoring with Side information, Algorithm~\ref{algorithm_e2d_pmside}), which can be viewed as a specialization of the E2D meta-algorithm. At every round $t$, \edpm obtains an outcome distribution estimate $\hat{q}_t(\cdot)$ (line~\ref{lin_estimate}) from an online outcome estimation oracle with Hellinger regret guarantee. Then it uses $\hat{q}_t(\cdot)$ and observed context $x_t$ to construct an exploration distribution $p_t$ that
approximately minimizes an objective function $\mathcal{V}_{\gamma}(\hat{q}_t(x_t), p)$, a relaxation of the DEC objective $\mathcal{V}^{\mathrm{dec}}_{\gamma}(\hat{q}(x_t), \mathcal{Q}_{x_t}, p)$ (line~\ref{lin_obective}). The choice of $p_t$ balances two competing demands. The first term in the objective measures the loss of $p_t$ relative
to a comparator policy $\lambda$ under the estimated outcome distribution $\hat{q}_t(x_t)$, and is small when $p_t$ exploits what has been learned. The second term is a \emph{coverage} term (which we will define shortly) that measures the amenability of using feedback from following $p$ in evaluating $\lambda$'s loss; the smaller the coverage value the better $p$ is for exploration. The exploration parameter $\gamma$ fixes the rate of exchange between the two. 
We now define coverage formally.

\begin{definition}[Coverage]\label{def_contextual_doec_coverage}
	We define the coverage of action distribution $p \in \Delta(k)$ for action distribution $\lambda \in \Delta(k)$ for partial monitoring game $\cG = (\boldsymbol{L}, \boldsymbol{\Phi})$ as
	\begin{align*}
		\mathrm{Coverage}(p, \lambda)
		:= \sup_{q_1, q_2 \in \Delta(d)}
		\frac{ \left( \left(\lambda - p\right)^\T \boldsymbol{L} \left(q_1 - q_2\right) \right)^2 }
		{  \EE_{a \sim p} \sbr{ D^2_{\mathrm{H}}\big( \boldsymbol{S}_a q_1 \,\big|\, \boldsymbol{S}_a q_2 \big) } }. \tag{with convention $\frac{0}{0} = 0$}
	\end{align*}
\end{definition}

To make sense of this definition it may be helpful to think about the following hypothetical offline policy evaluation problem in non-contextual stochastic partial monitoring: suppose the outcome distribution $q^*$ is unknown; 
we use a behavior policy $p$ to take action and collect feedback symbols, so as to estimate the expected loss of $\lambda$ (more precisely, the expected loss difference between $\lambda$ and $p$); 
how much utility does each data point carry for such evaluation? Let $q_1$ be $q^*$ and $q_2$ be $\hat{q}$, our estimate of $q^*$ based on data. 
The numerator thus measures the squared policy evaluation error; the denominator measures the in-distribution estimation error, which we expect to decrease with more samples collected. A small $\mathrm{Coverage}(p, \lambda)$ implies that feedback signals via following $p$ is useful in estimating $p$ and $\lambda$'s loss difference. 

Our coverage in the above definition bears some similarities with the coverage defined in the contextual bandit setting by
\citet{qin2026tamingmonstercontextcomplexity}, but is different in two important ways: (1) here, the numerator and the denominator are with respect to the loss matrix (evaluation error) and the signal matrix (information gain), respectively, where in the bandit setting they are defined both using the loss function; (2) it is well-known that for some globally observable partial monitoring problems the absolute value of a loss of an action is not identifiable \cite[][Example 37.9]{lattimore20bandit}, and only loss differences between action pairs is identifiable; thus, the numerator in the coverage measures the difference of losses of two policies. Our coverage also shares some similarities with global and local alignment constants of~\cite[][Eqs. (15), (16)]{kirschner2023linear}, whose numerator $(\inner{\phi_a - \phi_b}{v})^2$ can be interpreted as some estimation error of the loss differences between two actions $a,b$, and the denominator $\| M_c v \|^2$
can be interpreted as the information gain of taking action $c$ in correcting the model estimation error $v$. Intead of measuring coverage between an action and a pair of actions, our definition  measures the coverage between two action distributions.

We now present the regret guarantee of \edpm:

\begin{algorithm}[t!]
	\caption{\edpm}
	\label{algorithm_e2d_pmside}
	\begin{algorithmic}[1]
		\STATE \textbf{parameters:}\\
		\spacex Online estimation oracle $\mathrm{Alg}_{\mathrm{On}}$ \\
		\spacex Exploration parameter $\gamma > 0$ 
		\FOR{$t = 1,2, \cdots, T$}
		\STATE Receive context $x_t$
		\STATE \label{lin_estimate}Compute estimate $\hat{q}_t = \mathrm{Alg}_{\mathrm{On}}^{{(t)}} (\{A_s, x_s,  \sigma_s \}_{s=1}^{{t-1}})$ 
		\STATE \label{lin_obective} Find $p_t$ that approximately minimizes
		\begin{align*}
			\mathcal{V}_{\gamma}(\hat{q}_t(x_t), p)
			:=
			\sup_{\lambda \in \Delta(\Pi)} \underbrace{p^\T \boldsymbol{L} \hat{q}_t(x_t)  -  \lambda^\T  \boldsymbol{L} \hat{q}_t(x_t) }_{\mathrm{exploit}}+  \frac{1}{\gamma} \cdot \underbrace{\mathrm{Coverage}(p, \lambda)}_{\mathrm{explore} } .
		\end{align*}
        using \egqpm (Alg.~\ref{alg:explore-go}) or \sqpm (Alg.~\ref{alg:explore-lo}) for globally and locally observable settings, respectively.
        
		\STATE Choose action $A_t \sim p_t$ and update the estimation oracle with $(A_t, x_t,  \sigma_t)$
		\ENDFOR
	\end{algorithmic}
	
\end{algorithm}

\begin{theorem}[Regret of \edpm]\label{theorem_doec_regret}
	Suppose $\mathrm{Alg}_{\mathrm{On}}$ has a Hellinger regret bound $\overline{\mathrm{Reg}}_{\mathrm{On}}^{\mathrm{H}}\left( \mathcal{Q}, T, \delta \right)$. Fix $\gamma > 0$. Then, for any $\delta \in (0,1)$,  with probability at least $1-\delta$,
	\begin{align*}
		\mathrm{Reg}_T
		\leq T \cdot \max_{t=1}^T \mathcal{V}_{\gamma}\big(\hat{q}_t(x_t), p_t \big)
		+ \gamma \cdot \overline{\mathrm{Reg}}_{\mathrm{On}}^{\mathrm{H}}\left( \mathcal{Q}, T, \delta \right).
	\end{align*}
\end{theorem}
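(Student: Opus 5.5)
We decompose the per-round regret, bound each round by $\mathcal{V}_\gamma$ plus a $\gamma$-weighted Hellinger term, and then sum over rounds using the oracle guarantee. Fix a round $t$ and write $q^\ast_t := q^\ast(x_t)$, $\hat q_t := \hat q_t(x_t)$. Let $\lambda^\ast_t \in \argmin_{\lambda\in\Delta(k)} \lambda^\T \boldsymbol{L} q^\ast_t$.

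The first step is to reduce the comparator to $\Delta(\Pi)$. The minimum of a linear function over $\Delta(k)$ is attained at a vertex $e_b$ with $q^\ast_t \in \mathcal{C}_b$. Under Assumption~\ref{assumption_ono_degen} (non-degenerate, no dominated action optimal except on measure-zero sets), $q^\ast_t$ lies in the cell of some Pareto optimal action. If $b$ is dominated then $\mathcal{C}_b=\emptyset$, which is impossible. If $b$ is degenerate, it is excluded by non-degeneracy. Since the cells cover $\Delta(d)$, some optimal action lies in $\Pi$, so we may take $\lambda^\ast_t \in \Delta(\Pi)$.

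Next comes the per-round decomposition. We write
\begin{align*}
(p_t - \lambda^\ast_t)^\T \boldsymbol{L} q^\ast_t
= (p_t - \lambda^\ast_t)^\T \boldsymbol{L} \hat q_t + (\lambda^\ast_t - p_t)^\T \boldsymbol{L}(\hat q_t - q^\ast_t).
\end{align*}
By Definition~\ref{def_contextual_doec_coverage}, applied with $q_1=\hat q_t$ and $q_2 = q^\ast_t$ (both in $\Delta(d)$, since $\hat q_t\in\mathrm{co}(\mathcal{Q})$ maps into $\Delta(d)$), the second term is at most
\[
\sqrt{\mathrm{Coverage}(p_t,\lambda^\ast_t)\, H_t}, \qquad H_t := \EE_{a\sim p_t} D^2_{\mathrm{H}}(\boldsymbol{S}_a\hat q_t \,|\, \boldsymbol{S}_a q^\ast_t).
\]
The $0/0$ convention handles $H_t=0$: then either the coverage is infinite and the bound is trivial, or the numerator is zero. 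Applying AM-GM, $\sqrt{ab}\le \frac{a}{\gamma} + \frac{\gamma b}{4}$, gives
\[
(p_t - \lambda^\ast_t)^\T \boldsymbol{L} q^\ast_t \le (p_t - \lambda^\ast_t)^\T \boldsymbol{L} \hat q_t + \tfrac1\gamma \mathrm{Coverage}(p_t,\lambda^\ast_t) + \tfrac{\gamma}{4} H_t .
\]
The first two terms are bounded by $\sup_{\lambda\in\Delta(\Pi)}[\cdots] = \mathcal{V}_\gamma(\hat q_t,p_t)$.

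Summing over $t$ bounds the sum of the first two terms by $T\max_t \mathcal{V}_\gamma(\hat q_t,p_t)$. The remaining term is $\frac{\gamma}{4}\sum_t H_t = \frac{\gamma}{4}\mathrm{Reg}^{\mathrm{H}}_{\mathrm{On}}(\mathcal{Q},T)$, which is at most $\gamma\,\overline{\mathrm{Reg}}^{\mathrm{H}}_{\mathrm{On}}(\mathcal{Q},T,\delta)$ with probability $1-\delta$ by Definition~\ref{def:online-oracle}. This yields the claim, since the regret as defined uses $p_t$ in expectation, so no martingale concentration is needed.

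The main subtlety I expect is the comparator reduction to $\Delta(\Pi)$: one must argue that the union of Pareto-optimal cells is all of $\Delta(d)$. This follows because the cells form a closed cover, and the degenerate/dominated cells are empty under Assumption~\ref{assumption_ono_degen}. A secondary care point is ensuring $\hat q_t(x_t)\in\Delta(d)$ so that it is admissible in the coverage supremum.
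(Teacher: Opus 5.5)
Your proposal is correct and follows the paper's proof essentially step for step: the same decomposition into $(p_t-\lambda^\ast_t)^\T\boldsymbol{L}\hat q_t + (\lambda^\ast_t-p_t)^\T\boldsymbol{L}(\hat q_t-q^\ast_t)$, the same coverage bound at $(q_1,q_2)=(\hat q_t,q^\ast_t)$ followed by AM-GM, and the same summation against the oracle's Hellinger regret. Your version is somewhat more careful than the paper's, since it justifies the restriction of the comparator to $\Delta(\Pi)$ explicitly, treats the $0/0$ case, and uses the sharper AM-GM constant $\gamma/4$; these are refinements of the same argument rather than a different route.
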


Its proof is deferred to Appendix~\ref{app_section_proof_thoerem_1}. In a nutshell, our theorem follows from the meta-regret theorem of~\cite{foster21thestatistical}: as long as all action distributions $p_t$ certify that $\mathrm{dec}_{\gamma}(\hat{q}(x), \mathcal{Q}_{x})$ has an upper bound, E2D has a small regret in terms of that bound. Indeed, our $p_t$ here certifies that $\mathrm{dec}_{\gamma}(\hat{q}(x), \mathcal{Q}_{x})$ is upper bounded by $\mathcal{V}_\gamma(\hat{q}_t(x_t), p_t)$ (Please see Remark~\ref{remark_dec_bound})
; and ideally, we should choose exploration distribution $p_t$'s that make $\mathcal{V}_\gamma(\hat{q}_t(x_t), p_t)$'s as small as possible.

Unfortunately, finding $p_t$ that exactly minimizes $\mathcal{V}_{\gamma}(\hat{q}_t(x_t), p)$
may still be computationally intractable; to see this, note that the definition of coverage involves  supremum over a pair of outcome distributions $q_1, q_2$, which is not a convex optimization problem.
This motivates the question: can we find computationally efficient exploration rules $x \mapsto p(x)$ that \emph{approximately minimize} $\mathcal{V}_{\gamma}\big(\hat{q}_t(x), p(x) \big)$ for all $x$? Indeed, we show in the following two subsections that such efficient exploration rule exist and yield efficient implementations of \edpm. 
\subsection{Efficient algorithm for globally observable games}
We show that in the globally observable setting \egqpm (Alg. \ref{alg:explore-go}), yields a policy $p$
satisfying $\mathcal{V}_{\gamma}(\hat{q}(x), p(x)) \leq O(\sqrt{1/\gamma})$, which gives an $O(T^{2/3})$ regret
bound when plugged into \edpm.  \egqpm identifies the empirical best action $\hat{b}$
(Line~\ref{lin_emp_best}) and assigns it probability $1 - \epsilon$, spreading the remaining $\epsilon$
uniformly over all $k$ actions (Line~\ref{lin_e_greedy}).
\begin{algorithm}[t]
\caption{\egqpm for globally observable games}
\label{alg:explore-go}
\begin{algorithmic}[1]
\STATE\textbf{parameters:}\\
		\STATE\spacex Context $x$ with predicted outcome distribution $\hat{q}(x)$ \\
		\STATE\spacex Exploration parameter $\gamma > 16 k^2 \beta^2$\\
        \STATE \label{lin_emp_best}Let $\hat{b} \in \argmin_{a \in \Pi} e_a^\T \boldsymbol{L} \hat{q}(x)$ 
        \RETURN \label{lin_e_greedy} $p = (1-\eps) e_{\hat{b}} + \eps \tfrac{1}{k}\boldsymbol{1}$ for $\eps = k\beta / \sqrt{\gamma}$
\end{algorithmic}
\end{algorithm}

\begin{lemma}[Guarantee for Alg.~\ref{alg:explore-go} for globally observable games]\label{lemma_globally_observable_doec}
	Let $\mathcal{G} = (\boldsymbol{L}, \boldsymbol{\Phi})$ be a globally observable partial monitoring game in the
	sense of Definition~\ref{gl_observability_graph}, and set
	$\beta := \max_{(a,b) \in [k] \times [k]} \lVert w_{(a,b)} \rVert_{\infty}$. 
    We have for any $\hat{q} \in \Delta(d)$,
    \[
    \mathcal{V}_{\gamma}\big(\hat{q}, p \big)
    \leq 
    \max_{\lambda \in \Delta(\Pi)} p^T \boldsymbol{L} \hat{q} - \lambda^T \boldsymbol{L} \hat{q}
    + \frac{16\beta^2}{\gamma}  \sum_{c=1}^k \frac{1}{p_c}.
    \]
    
    Thus, for
	every $\gamma > 16 k^2 \beta^2$, every $\hat{q} \in \Delta(d)$, Alg.~\ref{alg:explore-go} outputs $p$ such that
	\begin{align*}
        \mathcal{V}_{\gamma}\big(\hat{q}, p \big)
		\leq O\!\left( k\beta \sqrt{\frac{1}{\gamma}} \right).
	\end{align*}
\end{lemma}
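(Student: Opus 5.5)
The plan is to bound $\mathrm{Coverage}(p,\lambda)$ uniformly in $\lambda \in \Delta(\Pi)$ by $16\beta^2\sum_c 1/p_c$, which gives the first display directly. Then I will plug in the $\epsilon$-greedy $p$ of Alg.~\ref{alg:explore-go} and balance the two terms.

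\textbf{Step 1: express the numerator through witnesses.} Write $\lambda - p = \sum_{a,b} \mu_{ab}(e_a - e_b)$. One convenient choice is $\mu_{ab} = \lambda_a p_b$, which gives $\sum_{a,b}\lambda_a p_b (e_a-e_b) = \lambda - p$ with $\sum_{a,b}|\mu_{ab}| \le 1$. By Definition~\ref{gl_observability_graph},
\[
(e_a-e_b)^\T \boldsymbol{L}(q_1-q_2) = \sum_{c} \sum_{\sigma} w_{(a,b)}(c,\sigma)\big(\boldsymbol{S}_c q_1 - \boldsymbol{S}_c q_2\big)_\sigma .
\]
Hence $(\lambda-p)^\T \boldsymbol{L}(q_1-q_2) = \sum_c \langle v_c, \boldsymbol{S}_c q_1 - \boldsymbol{S}_c q_2\rangle$, where $v_c = \sum_{a,b}\mu_{ab} w_{(a,b)}(c,\cdot)$ satisfies $\|v_c\|_\infty \le \beta$. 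Each inner product is then at most $\beta\,\|\boldsymbol{S}_c q_1 - \boldsymbol{S}_c q_2\|_1$.

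\textbf{Step 2: pass from $\ell_1$ to Hellinger and apply Cauchy--Schwarz.} I will use $\|u-v\|_1 \le 2\sqrt{2}\, D_{\mathrm H}(u\,|\,v)$, which holds up to the normalization convention; the constant is absorbed into the 16. Multiplying and dividing by $\sqrt{p_c}$ and applying Cauchy--Schwarz gives
\[
\Big(\sum_c \beta \|\cdot\|_1\Big)^2 \le 8\beta^2 \Big(\sum_c \tfrac{1}{p_c}\Big)\Big(\sum_c p_c D^2_{\mathrm H}(\boldsymbol{S}_c q_1\,|\,\boldsymbol{S}_c q_2)\Big).
\]
Dividing by the denominator $\EE_{a\sim p}[D^2_{\mathrm H}]$ yields $\mathrm{Coverage}(p,\lambda) \le 16\beta^2\sum_c 1/p_c$. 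The $0/0$ case is harmless: a zero denominator forces all feedback distributions to agree, so the numerator vanishes by the witness identity. The bound does not depend on $\lambda$, so adding it to the exploit term inside the sup gives the first display.

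\textbf{Step 3: plug in the $\epsilon$-greedy $p$.} With $p = (1-\epsilon)e_{\hat b} + \frac{\epsilon}{k}\boldsymbol{1}$, every $p_c \ge \epsilon/k$, so $\sum_c 1/p_c \le k^2/\epsilon$.

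For the exploit term, $\hat b$ minimizes $e_a^\T\boldsymbol{L}\hat q$ over $\Pi$, so $\min_{\lambda\in\Delta(\Pi)}\lambda^\T\boldsymbol{L}\hat q = e_{\hat b}^\T \boldsymbol{L}\hat q$. Therefore
\[
p^\T\boldsymbol{L}\hat q - e_{\hat b}^\T\boldsymbol{L}\hat q = \epsilon\Big(\tfrac{1}{k}\boldsymbol{1} - e_{\hat b}\Big)^\T\boldsymbol{L}\hat q \le \epsilon,
\]
using $\boldsymbol{L}\in[0,1]^{k\times d}$.

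The total is then $\epsilon + 16\beta^2k^2/(\gamma\epsilon)$. With $\epsilon = k\beta/\sqrt\gamma$ this equals $17k\beta/\sqrt\gamma = O(k\beta\sqrt{1/\gamma})$. The condition $\gamma > 16k^2\beta^2$ gives $\epsilon < 1/4 \le 1$, so $p$ is a valid distribution.

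\textbf{Main obstacle.} The delicate step is Step 1, getting a clean $\lambda$-independent constant. It requires decomposing $\lambda - p$ into pairwise differences with total weight at most 1, and tracking the $\ell_1$/Hellinger constant convention so the factor lands at 16. If the product decomposition gives a worse constant, I would instead route through a single reference action, writing $\lambda - p = \sum_a \lambda_a(e_a - e_{\hat b}) - \sum_b p_b(e_b - e_{\hat b})$. That has total weight at most 2, and the factor 4 from squaring, together with 4 from the $\ell_1$-to-Hellinger step, gives exactly 16.
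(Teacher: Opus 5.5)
Your proof is correct, and its skeleton is the paper's. The witness identity turns $(\lambda-p)^\T\boldsymbol{L}(q_1-q_2)$ into $\sum_c\langle v_c,\boldsymbol{S}_c(q_1-q_2)\rangle$. This is followed by H\"older, $\ell_1$-to-Hellinger, Cauchy--Schwarz with weights $1/p_c$, and the same $\epsilon$-greedy balancing; your Step~3 is the paper's computation.

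The genuine difference is how you decompose $\lambda-p$. The paper fixes a reference action $a_0$, sets $G_b=w_{(b,a_0)}$, and invokes Lemma~\ref{lem:coverage-g-ub}. In your notation this is $\lambda-p=\sum_b(\lambda_b-p_b)(e_b-e_{a_0})$, with total weight $\sum_b|\lambda_b-p_b|\le 2$, which is exactly your fallback. The paper's $16=2^2\cdot 4$ comes from that weight together with $\mathrm{TV}\le D_{\mathrm H}$ (Lemma~\ref{dist_ineq}).

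Your product coupling $\mu_{ab}=\lambda_a p_b$ has total weight $1$, so $\|v_c\|_\infty\le\beta$. You therefore obtain $\mathrm{Coverage}(p,\lambda)\le 4\beta^2\sum_c 1/p_c$ under the paper's convention, a factor four better. With your conservative $\|u-v\|_1\le 2\sqrt{2}\,D_{\mathrm H}$ you still get $8\beta^2\sum_c 1/p_c$. This tells you which route to keep: the fallback reaches $16$ only under the paper's convention, and would give $32$ with the inequality you actually wrote.

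What the paper's route buys is reuse. Lemma~\ref{lem:coverage-g-ub} is phrased for a single-reference witness precisely so that it applies verbatim in the locally observable case. There $G^{\mathcal T}_b$ is the sparse sum of edge witnesses along the in-tree path from $b$ to the root.
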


Proof is deferred to Appendix~\ref{app_section_efficient_global}. Furthermore, by combining Theorem~\ref{theorem_doec_regret} and Lemma~\ref{lemma_globally_observable_doec}, our algorithm achieves the following minimax rate for globally observable games.
\begin{cor}[Regret of \edpm for globally observable games]
	\label{corollary_regret_e2d_global} 
	Suppose $\mathrm{Alg}_{\mathrm{On}}$ has a Hellinger regret bound $\overline{\mathrm{Reg}}_{\mathrm{On}}^{\mathrm{H}}( \mathcal{Q}, T, \delta)$. For a globally observable partial monitoring game $(\boldsymbol{L}, \boldsymbol{\Phi})$, set
	$\beta = \max_{(a,b) \in [k] \times [k]} \lVert w_{(a,b)} \rVert_{\infty}$. Then, for any
	$\delta \in (0,1)$, and for any large enough $T$ that satisfies $\gamma > 16 k^2 \beta^2$ with 	$\gamma = ( T k \beta \,/\,
	\overline{\mathrm{Reg}}_{\mathrm{On}}^{\mathrm{H}}(\mathcal{Q}, T, \delta) )^{2/3}$, with probability of at least $1-\delta$, \edpm run with Alg~\ref{alg:explore-go} as the exploration rule
	satisfies,
	\begin{align*}
	\mathrm{Reg}_T
		\leq O\!\left( \sqrt[3]{T^2k^2\beta^2\overline{\mathrm{Reg}}_{\mathrm{On}}^{\mathrm{H}}(\mathcal{Q}, T, \delta) }
	\right).
	\end{align*}
\end{cor}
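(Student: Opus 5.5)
The corollary follows by plugging Lemma~\ref{lemma_globally_observable_doec} into Theorem~\ref{theorem_doec_regret} and then balancing the two terms. On the event of probability at least $1-\delta$ where the oracle's Hellinger regret is at most $\overline{\mathrm{Reg}}_{\mathrm{On}}^{\mathrm{H}}(\mathcal{Q}, T, \delta)$, Theorem~\ref{theorem_doec_regret} gives
\[
\mathrm{Reg}_T \le T \max_{t} \mathcal{V}_\gamma(\hat{q}_t(x_t), p_t) + \gamma\, \overline{\mathrm{Reg}}_{\mathrm{On}}^{\mathrm{H}}(\mathcal{Q}, T, \delta).
\]
At every round $t$ the exploration rule is Alg.~\ref{alg:explore-go} applied to $\hat{q}_t(x_t)\in\Delta(d)$. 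The lemma holds for every $\hat{q}\in\Delta(d)$, and $\hat{q}_t \in \mathrm{co}(\mathcal{Q})$ means $\hat{q}_t(x_t)$ is a convex combination of distributions, hence lies in $\Delta(d)$. So whenever $\gamma > 16k^2\beta^2$, the lemma gives $\mathcal{V}_\gamma(\hat{q}_t(x_t),p_t) \le C k\beta/\sqrt{\gamma}$ uniformly in $t$, for an absolute constant $C$.

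Writing $R := \overline{\mathrm{Reg}}_{\mathrm{On}}^{\mathrm{H}}(\mathcal{Q},T,\delta)$, this yields
\[
\mathrm{Reg}_T \le C\,T k\beta\,\gamma^{-1/2} + \gamma R .
\]
The two terms are balanced when $\gamma^{3/2} \asymp Tk\beta/R$, which is exactly the prescribed choice $\gamma = (Tk\beta/R)^{2/3}$. With this $\gamma$:
\begin{itemize}
\item the first term is $T k\beta\,(Tk\beta/R)^{-1/3} = (Tk\beta)^{2/3} R^{1/3}$;
\item the second term is $R\,(Tk\beta/R)^{2/3} = (Tk\beta)^{2/3} R^{1/3}$.
\end{itemize}
Their sum is $O\big((Tk\beta)^{2/3}R^{1/3}\big) = O\big(\sqrt[3]{T^2k^2\beta^2 R}\big)$, which is the claimed bound.

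The argument has no real obstacle; the only thing to check is that the lemma's hypothesis is met. Its condition $\gamma > 16k^2\beta^2$ is assumed in the corollary for large $T$, and that condition also ensures $\eps = k\beta/\sqrt{\gamma} < 1/4$, so $p$ is a valid distribution. It is also worth noting that the lemma's bound does not depend on $\hat{q}$, so the maximum over $t$ is handled with no further work.
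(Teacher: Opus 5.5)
Your proposal is correct and is the argument the paper intends; the paper only says the corollary follows by combining Theorem~\ref{theorem_doec_regret} with Lemma~\ref{lemma_globally_observable_doec}. Concretely, the lemma gives the uniform bound $\mathcal{V}_\gamma \le k\beta/\sqrt{\gamma} + 16k\beta/\sqrt{\gamma} = 17k\beta/\sqrt{\gamma}$, this is plugged into the theorem, and the prescribed $\gamma=(Tk\beta/R)^{2/3}$ makes both terms equal to $(Tk\beta)^{2/3}R^{1/3}$. Your check that $\hat{q}_t(x_t)\in\Delta(d)$, via $\hat{q}_t\in\mathrm{co}(\mathcal{Q})$, is a small detail the paper leaves implicit.
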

In the setting when $\cQ$ is finite, setting the estimation oracle to be Vovk's aggregating algorithm, 
the above regret bound is of order $O( \sqrt[3]{T^2k^2\beta^2\log|\mathcal{Q}|})$.
In the noncontextual setting, when $\mathcal{Q} = \Delta(d_{\mathcal{Q}})$, we can set the estimation oracle to be any efficient exp-concave optimization algorithm~\citep{hazan07logarithmic}, which has $\overline{\mathrm{Reg}}_{\mathrm{On}}^{\mathrm{H}}(\mathcal{Q}, T, \delta) = O(d_{\mathcal{Q}}\ln T)$, implying \edpm has a regret of $\tilde{O}( \sqrt[3]{T^2k^2\beta^2 d_{\mathcal{Q}}})$. This matches the lower bound of non-contextual stochastic partial monitoring~\citep{bartok11pmminimax} with a simple computationally efficient approach.

\subsection{Efficient algorithm for locally observable games}
In the locally observable setting, we aim to give an efficient exploration rule that finds $p$ such that $\mathcal{V}_{\gamma}(\hat{q}_t(x), p(x) ) \leq O (\frac 1 \gamma)$, which yields a $O(\sqrt{T})$ regret when plugged into \edpm.
Under global observability, to estimate the loss difference between two actions $(a,b)$, its associated witness function $w_{(a,b)}$ may be supported on all actions (Definition~\ref{gl_observability_graph}),
so recovering the difference between a single pair of actions can require playing all $k$ of them; this is why
the certifying policy of Algorithm~\ref{alg:explore-go} mixes with the uniform distribution over
$[k]$. In contrast, under local observability, the loss difference between two \emph{neighboring} Pareto
optimal actions can be expressed using the feedback of those two actions alone (Definition~\ref{local_observability_graph}), and the difference between an
arbitrary pair of Pareto optimal actions is then recovered by chaining such local relations along a path
joining them in the neighborhood graph, so that only the actions encountered along that path need be played. Perhaps surprisingly, we show that Inverse Gap Weighting, an exploration rule that was previously known to be effective for exploration in contextual bandits (\sqpm, Alg.~\ref{alg:explore-lo}), yields a optimal-regret algorithm here without modification.

\begin{lemma}[Guarantee for Alg.~\ref{alg:explore-lo} for locally observable games]\label{lemma_locally_observable_doec}
    Let $\mathcal{G} = (\boldsymbol{L}, \boldsymbol{\Phi})$ be a locally observable partial monitoring game in the
sense of Definition~\ref{local_observability_graph}. For
every $\gamma > 0$, and every $\hat{q} \in \Delta(d)$, Alg.~\ref{alg:explore-lo} run with $\kappa = \frac{\gamma}{8k m^2}$ and $\mu = k$, outputs $p$ such that
\begin{align*}
	\mathcal{V}_{\gamma}\big(\hat{q}, p \big)
	\leq O\!\left(  \frac{m^2 k^2}{\gamma} \right).
\end{align*}
\end{lemma}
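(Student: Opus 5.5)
We need to guess Alg. 3, which is presumably inverse gap weighting restricted to $\Pi$. With $\hat{b} = \argmin_{a\in\Pi} e_a^\T \boldsymbol{L}\hat q$ and gaps $\hat\Delta_a = (e_a - e_{\hat b})^\T \boldsymbol{L}\hat q$, it sets $p_a = 1/(\mu + \kappa\hat\Delta_a)$ for $a\in\Pi\setminus\{\hat b\}$ and puts the remaining mass on $\hat b$.

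The plan is to bound the coverage term by a weighted sum of inverse probabilities, but now only along neighborhood-graph paths, and then run the standard IGW calculation.

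\textbf{Step 1 (decomposition).} Since the objective is linear in $\lambda$ apart from coverage, I first bound $\mathrm{Coverage}(p,\lambda)$ for $\lambda\in\Delta(\Pi)$. Write
\[
(\lambda-p)^\T\boldsymbol{L}(q_1-q_2)=\sum_{a\in\Pi}(\lambda_a-p_a)\,(e_a-e_{\hat b})^\T\boldsymbol{L}(q_1-q_2),
\]
which holds because $\lambda$ and $p$ both sum to one and are supported on $\Pi$. For each $a$, fix a path $\hat b=c_0,c_1,\dots,c_r=a$ in $(\Pi,\cE)$. Local observability gives
\[
(e_{c_{i}}-e_{c_{i-1}})^\T\boldsymbol{L}(q_1-q_2)=\sum_{c\in\{c_{i-1},c_i\}}\langle w_e(c,\cdot),\boldsymbol{S}_c(q_1-q_2)\rangle .
\]
Each such inner product is at most $\lVert w_e\rVert_\infty\sqrt m\,\lVert \boldsymbol{S}_c(q_1-q_2)\rVert_2$, and this is in turn $\lesssim \beta\sqrt m\, D_{\mathrm H}(\boldsymbol{S}_cq_1|\boldsymbol{S}_cq_2)$, using $\lVert u-v\rVert_1\le 2D_{\mathrm H}$. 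The $m^2$ in $\kappa$ suggests this is where the $m$-factors come from, possibly with $\beta$ absorbed into constants.

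\textbf{Step 2 (Cauchy--Schwarz).} Let $h_c := D^2_{\mathrm H}(\boldsymbol{S}_cq_1|\boldsymbol{S}_cq_2)$. Then
\[
\Bigl(\sum_a|\lambda_a-p_a|\sum_{c\in\mathrm{path}(a)}\sqrt{h_c}\Bigr)^2\le\Bigl(\sum_c p_ch_c\Bigr)\cdot\sum_c\frac{\bigl(\sum_{a:\,c\in\mathrm{path}(a)}|\lambda_a-p_a|\bigr)^2}{p_c}.
\]
Since paths have length at most $k$ and $\sum_a|\lambda_a-p_a|\le2$, the bad case is when an interior path node $c$ has tiny $p_c$. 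To avoid this I would choose paths that are \emph{monotone in estimated gap}. The key geometric fact is that from any Pareto optimal $a$ there is a neighbor path to the $\hat q$-optimal action along which $\hat\Delta$ is nonincreasing: move within the segment from a point of $\mathcal C_a$ toward $\hat q$, and at each crossing of a cell boundary the new cell's action has a smaller gap at $\hat q$. This uses non-degeneracy and the fact that generic segments cross $(d-2)$-faces only. With such paths, every interior node $c$ on the path of $a$ has $\hat\Delta_c\le\hat\Delta_a$, so $1/p_c\le\mu+\kappa\hat\Delta_a$. The coverage then bounds to
\[
\lesssim m^2k\sum_a|\lambda_a-p_a|\,(\mu+\kappa\hat\Delta_a)\cdot(\text{const}),
\]
with the extra $k$ accounting for path length and multiplicity.

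\textbf{Step 3 (IGW balance).} Dividing by $\gamma$ and using $\kappa=\gamma/(8km^2)$, the term $\lambda_a\kappa\hat\Delta_a\cdot km^2/\gamma$ becomes $\lambda_a\hat\Delta_a/8$. The exploit term contributes $-\lambda^\T\boldsymbol{L}\hat q+\hat\ell_{\hat b}$, i.e. $-\sum\lambda_a\hat\Delta_a$, which dominates it. The $p_a$ parts are handled the usual way: $\sum_a p_a\hat\Delta_a\le\sum_a\hat\Delta_a/(\mu+\kappa\hat\Delta_a)\le k/\kappa=O(k^2m^2/\gamma)$ and $p_a\kappa\hat\Delta_a\le1$. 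The $\mu$ terms give $O(\mu km^2/\gamma)=O(k^2m^2/\gamma)$. Summing yields $\mathcal V_\gamma\le O(m^2k^2/\gamma)$.

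\textbf{Main obstacle.} I expect the hardest step to be Step 2's monotone-path lemma: guaranteeing that intermediate actions have gaps no larger than the endpoint's, so that their IGW probabilities are not small. Without it, the Cauchy--Schwarz weights blow up. I would first try the segment-crossing argument through the cell decomposition, which relies on Assumption~\ref{assumption_ono_degen}.
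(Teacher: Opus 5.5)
Your plan has the same structure as the paper's proof. Telescoping the edge witnesses along a path from $a$ to $\hat b$ is exactly the paper's reference witness $G^{\mathcal{T}}$ fed into Lemma~\ref{lem:coverage-g-ub}. Your gap-monotone paths are the $\boldsymbol{L}\hat q$-decreasing in-tree; the paper does not prove this but imports it from \citet{lattimore2020exploration} (their Lemma~19). Your segment-crossing argument is a valid substitute if you make the segment generic by perturbing its start inside $\mathcal{C}_a$. You should also aim it at an interior point of $\mathcal{C}_{\hat b}$ close to $\hat q$ and pass to weak inequalities by continuity, which handles ties at $\hat q$. Step~3 is the same IGW balance.

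The gap is in the witness constants, and it matters because the statement fixes $\kappa=\gamma/(8km^2)$. After substituting $\kappa$, the coefficient $\rho$ multiplying $\sum_a\lambda_a\hat\Delta_a$ in the coverage term must satisfy $\rho\le 1$. Otherwise your upper bound on $\mathcal{V}_\gamma$ keeps $(\rho-1)\max_a\hat\Delta_a$ after the supremum over $\lambda$, which is $\Theta(1)$ rather than $O(1/\gamma)$. Your Step~1 gives $\rho$ of order $\beta^2/m$, where $\beta=\max_e\lVert w_e\rVert_\infty$. The estimate $\lVert w\rVert_\infty\sqrt{m}\,\lVert\cdot\rVert_2$ wastes a factor $\sqrt m$, and nothing in the proposal ties $\beta$ to $m$. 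So the claim that the term ``becomes $\lambda_a\hat\Delta_a/8$'' is unsupported, and absorbing $\beta$ into constants is not an option.

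The missing ingredient is Lemma~\ref{lem:bounds}: in a non-degenerate locally observable game the edge witnesses can be chosen with $\lVert w_e\rVert_\infty\le m/2$. This, not a Cauchy--Schwarz over symbols, is where the $m^2$ comes from. Combine it with three facts:
\begin{itemize}
\item $\lvert\langle w,\boldsymbol{S}_c(q_1-q_2)\rangle\rvert\le\lVert w\rVert_\infty\lVert\boldsymbol{S}_c(q_1-q_2)\rVert_1\le 2\lVert w\rVert_\infty D_{\mathrm{H}}(\boldsymbol{S}_cq_1|\boldsymbol{S}_cq_2)$ by Lemma~\ref{dist_ineq};
\item each node lies on at most two edges of a simple path;
\item $X_c:=\sum_{a:\,c\in\mathrm{path}(a)}\lvert\lambda_a-p_a\rvert\le 2$.
\end{itemize}
Your Cauchy--Schwarz step then gives $\mathrm{Coverage}(p,\lambda)\le 16\beta^2\sum_c X_c^2/p_c\le 32\beta^2k\sum_a(\lambda_a+p_a)/p_a\le 8m^2k\sum_a(\lambda_a+p_a)/p_a$. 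This matches the paper's bound $32\max_e\lVert w_e\rVert_\infty^2\,k\sum_b(\lambda_b/p_b+1)$.

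Since $p_{\hat b}\ge 1/k$ (because $\mu=k\ge\lvert\Pi\rvert$), you get $\lambda_a/p_a\le\lambda_a(\mu+\kappa\hat\Delta_a)$ for every $a$. With $\kappa=\gamma/(8km^2)$ the coefficient is exactly $\rho=1$, with no slack. What remains is $\sum_a p_a\hat\Delta_a+16m^2k^2/\gamma\le k/\kappa+16m^2k^2/\gamma=24m^2k^2/\gamma$, which is the claimed bound.
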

Proof is deferred to Appendix~\ref{app_section_efficient_local}.
Furthermore, by combining Theorem~\ref{theorem_doec_regret} and Lemma~\ref{lemma_locally_observable_doec}, we can achieve the following minimax rate for locally observable games.
\begin{cor}[Regret of \edpm for locally observable games]
	\label{corollary_regret_e2d_local}
	Suppose $\mathrm{Alg}_{\mathrm{On}}$ has a Hellinger regret bound $\overline{\mathrm{Reg}}_{\mathrm{On}}^{\mathrm{H}}( \mathcal{Q}, T, \delta)$. Let Assumption~\ref{assumption_ono_degen} hold, and suppose in addition that the game is locally
	observable. Then, for any $\delta \in (0,1)$, with probability of at least $1-\delta$, \edpm run with Alg.~\ref{alg:explore-lo} as the exploration rule (with $\kappa = \frac{\gamma}{8k m^2}$ and $\mu = k$), with
	$\gamma = ( T m^2 k^2 \,/\,\overline{\mathrm{Reg}}_{\mathrm{On}}^{\mathrm{H}}(\mathcal{Q}, T, \delta) )^{1/2}$ 
	satisfies
	\begin{align*}
		\mathrm{Reg}_T
		\leq O\!\left( \sqrt{ T \cdot m^2 k^2 \cdot
			\overline{\mathrm{Reg}}_{\mathrm{On}}^{\mathrm{H}}(\mathcal{Q}, T, \delta) } \right).
	\end{align*}
\end{cor}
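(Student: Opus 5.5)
The corollary follows by plugging Lemma~\ref{lemma_locally_observable_doec} into Theorem~\ref{theorem_doec_regret} and tuning $\gamma$. First, check that the exploration rule applies at every round. Assumption~\ref{assumption_ono_degen} together with local observability puts us in the setting of Definition~\ref{local_observability_graph}. Lemma~\ref{lemma_locally_observable_doec} holds for every $\gamma>0$ and every $\hat q\in\Delta(d)$. It therefore applies with $\hat q=\hat q_t(x_t)$ for each $t$, because $\hat q_t\in\mathrm{co}(\mathcal{Q})$ and so $\hat q_t(x_t)\in\Delta(d)$. Running Alg.~\ref{alg:explore-lo} with $\kappa=\gamma/(8km^2)$ and $\mu=k$ then produces $p_t$ with
\[
\mathcal{V}_{\gamma}\big(\hat q_t(x_t),p_t\big)\le C\,\frac{m^2k^2}{\gamma}
\]
for an absolute constant $C$, uniformly in $t$ and in the realized context. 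Unlike the globally observable case, no lower bound on $\gamma$ is needed.

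Second, apply Theorem~\ref{theorem_doec_regret}. On the event of probability at least $1-\delta$ where the oracle's Hellinger regret bound holds,
\[
\mathrm{Reg}_T\le T\cdot C\,\frac{m^2k^2}{\gamma}+\gamma\,\overline{\mathrm{Reg}}_{\mathrm{On}}^{\mathrm{H}}(\mathcal{Q},T,\delta).
\]
The max over $t$ in the theorem is handled by the uniform bound from the first step. The per-round bound is deterministic given $\hat q_t$, so the only randomness to control is the oracle event, and no union bound is needed.

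Third, balance the two terms. With $R:=\overline{\mathrm{Reg}}_{\mathrm{On}}^{\mathrm{H}}(\mathcal{Q},T,\delta)$, choose $\gamma=(Tm^2k^2/R)^{1/2}$. Each term then equals $\sqrt{Tm^2k^2R}$ up to the constant $C$, which gives the stated $O\big(\sqrt{T m^2k^2 R}\big)$.

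There is no real obstacle, since all the substantive work is in Lemma~\ref{lemma_locally_observable_doec} and Theorem~\ref{theorem_doec_regret}. The one point to note is that $\gamma$ depends on $R$, which may itself depend on $T$ and $\delta$; this is fine because $T$ and $\delta$ are fixed in advance.
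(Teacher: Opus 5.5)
Your proposal is correct and follows the paper's argument: you plug the uniform certificate $\mathcal{V}_\gamma(\hat q_t(x_t),p_t)=O(m^2k^2/\gamma)$ from Lemma~\ref{lemma_locally_observable_doec} into Theorem~\ref{theorem_doec_regret}, then balance the two terms with the stated $\gamma$. Your explicit check that both terms are $O\big(\sqrt{Tm^2k^2R}\big)$ is the computation the paper leaves implicit when it says the corollary follows by combining the two results.
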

For finite $\mathcal{Q}$, Vovk's aggregating algorithm gives
$O(\sqrt{T m^2 k^2 \log\lvert\mathcal{Q}\rvert})$; in the non-contextual setting with
$\mathcal{Q} = \Delta(d_{\mathcal{Q}})$, an exp-concave oracle~\citep{hazan07logarithmic} gives
$\tilde{O}(\sqrt{T m^2 k^2 d_{\mathcal{Q}}})$. Both match the minimax rate in $T$ for this class.
We next explain the key idea underlying the proof of Lemma~\ref{lemma_locally_observable_doec}, which hinges on a sparse relaxation of the objective $\mathcal{V}_{\gamma}\big(\hat{q}, p \big)$.

\subsubsection{Sparse relaxation of $\mathcal{V}_{\gamma}\big(\hat{q}, p \big)$}

We aim to find tractable relaxations of $\mathcal{V}_\gamma$ which sheds lights on finding efficient exploration strategies for approximately minimizing $\mathcal{V}_\gamma$. In particular, we aim to exploit the structured feedback mechanism of locally observable games to restrict exploratory actions to those lying along paths that connect pairs of actions whose loss differences are of interest. The neighborhood graph $(\Pi, \mathcal{E})$ is connected. Assign an arbitrary action $\mathrm{root}_{\mathcal{T}}$ as the root and choose a spanning in-tree  $\mathcal{T}$ (See Figure~\ref{fig:in-tree}) such that, there is a unique directed $\mathrm{path}_{\mathcal{T}}(a)$ from any action $a \in \Pi$ to $\mathrm{root}_{\mathcal{T}} \in \Pi$. Now, we can relax the $\mathcal{V}_{\gamma}\big(\hat{q}, p \big)$ as follows,
\begin{lemma}\label{lemma_locally_observable_doec_part_1}
	Let $\mathcal{G} = (\boldsymbol{L}, \boldsymbol{\Phi})$ be a locally observable partial monitoring game in the
	sense of Definition~\ref{local_observability_graph}.
	Let $\cT$ be an in-tree of the neighborhood graph. For all $p \in \Delta(\Pi)$, 
	we have 
	\begin{align}
		\mathcal{V}_{\gamma}\big(\hat{q}(x), p \big)
		\leq &
		\max_{\lambda \in \Delta(\Pi)} p^T \boldsymbol{L} \hat{q}(x) - \lambda^T \boldsymbol{L} \hat{q}(x)
		+ \frac{32}{\gamma} \max_{e \in \mathcal{E}} \lVert w_e \rVert^2_{\infty}  \sum_{(b \in \Pi,c \in \Pi):c \in \mathrm{path}_{\mathcal{T}}(b) } \left( \frac{\lambda_b}{ p_c }+  \frac{p_b}{p_c} \right).
		\label{eqn:V-ub-lo}
	\end{align}
\end{lemma}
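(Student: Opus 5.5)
The plan is to fix $\lambda \in \Delta(\Pi)$ and bound $\mathrm{Coverage}(p,\lambda)$ pointwise by the sum in \eqref{eqn:V-ub-lo}; taking the supremum over $\lambda$ then gives the claim. The exploit term is left untouched.

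\textbf{Step 1: telescoping along the tree.} Fix $q_1,q_2 \in \Delta(d)$ and set $v := q_1 - q_2$. Then
\[
(\lambda-p)^\T \boldsymbol{L} v = \sum_{b\in\Pi}(\lambda_b - p_b)\,(e_b - e_{\mathrm{root}_{\mathcal{T}}})^\T \boldsymbol{L} v,
\]
since $\sum_b(\lambda_b-p_b)=0$. For each $b$, telescope along $\mathrm{path}_{\mathcal{T}}(b)$ as a sum over its edges $e=(c,c')$ of $(e_c-e_{c'})^\T\boldsymbol{L}v$. Each such edge lies in $\mathcal{E}$. By local observability (Definition~\ref{local_observability_graph}),
\[
(e_c-e_{c'})^\T\boldsymbol{L}v = \sum_{u\in\{c,c'\}} w_e(u,\cdot)^\T \boldsymbol{S}_u v ,
\]
so $|(e_c-e_{c'})^\T\boldsymbol{L}v| \le \|w_e\|_\infty\big(\|\boldsymbol{S}_c v\|_1+\|\boldsymbol{S}_{c'}v\|_1\big)$. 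Write $W:=\max_{e}\|w_e\|_\infty$. Charge each edge term to its endpoints, both of which lie on $\mathrm{path}_{\mathcal{T}}(b)$; each vertex is charged at most twice. This gives
\[
|(e_b-e_{\mathrm{root}})^\T \boldsymbol{L}v| \le 2W\sum_{c\in \mathrm{path}_{\mathcal{T}}(b)} \|\boldsymbol{S}_c v\|_1 .
\]

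\textbf{Step 2: from $\ell_1$ to Hellinger.} Use the standard inequality $\|P-Q\|_1 \le 2\sqrt{2}\,D_{\mathrm{H}}(P\,|\,Q)$, which gives $\|\boldsymbol{S}_c v\|_1^2 \le 8 D^2_{\mathrm{H}}(\boldsymbol{S}_c q_1|\boldsymbol{S}_c q_2)$. Write $h_c := D^2_{\mathrm{H}}(\boldsymbol{S}_c q_1|\boldsymbol{S}_c q_2)$. Then
\[
|(\lambda-p)^\T\boldsymbol{L}v| \le 2\sqrt{8}\,W\sum_{b}(\lambda_b+p_b)\sum_{c\in\mathrm{path}(b)}\sqrt{h_c}.
\]

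\textbf{Step 3: Cauchy--Schwarz with weights $p_c$.} Write $\sqrt{h_c} = \sqrt{(\lambda_b+p_b)/p_c}\cdot\sqrt{(\lambda_b+p_b)\,p_c h_c}$ inside the double sum and apply Cauchy--Schwarz over pairs $(b,c)$ with $c\in\mathrm{path}(b)$:
\[
\Big(\sum_{b,c}(\lambda_b+p_b)\sqrt{h_c}\Big)^2 \le \Big(\sum_{b,c}\frac{\lambda_b+p_b}{p_c}\Big)\Big(\sum_{b,c}(\lambda_b+p_b)\,p_c h_c\Big).
\]
For the second factor, swap the order of summation and use $\sum_b(\lambda_b+p_b)\le 2$. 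This bounds it by $2\sum_c p_c h_c = 2\,\EE_{a\sim p}[h_a]$. Dividing by the denominator of the coverage gives
\[
\mathrm{Coverage}(p,\lambda)\le 8\cdot 8\cdot 2\,W^2 \sum_{b,c}\frac{\lambda_b+p_b}{p_c}.
\]
The constant is $128$ with these crude choices, versus the target $32$. To reach $32$, tighten the constants: avoid double-charging vertices by bounding edge terms directly, use $\|\cdot\|_1\le 2 D_{\mathrm H}$-type inequalities via $\|\cdot\|_1^2\le 4 D_{\mathrm H}^2$ under the paper's normalization of Hellinger, and so on. The $0/0$ convention handles $p_c=0$, where the right-hand side is infinite anyway.

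\textbf{Main obstacle.} The main difficulty is the bookkeeping in Step 1: converting edge-wise witness bounds into per-vertex signal terms along the path. The first attempt is the endpoint-charging argument above. The rest is Cauchy--Schwarz.
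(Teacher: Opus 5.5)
Your route is essentially the paper's, written out inline. The paper packages your path telescoping as the reference witness $G^{\mathcal{T}}_b(c,\cdot)=\sum_{e\in\mathrm{path}_{\mathcal{T}}(b)}w_e(c,\cdot)$ and applies the generic bound of Lemma~\ref{lem:coverage-g-ub}. That bound is Cauchy--Schwarz over $c$ with weights $p_c$, followed by $(x-y)^2\le 2x^2+2y^2$ and Jensen over $b$, which is equivalent to your single Cauchy--Schwarz over pairs $(b,c)$. The paper then uses the same two-edges-per-vertex count to get $\|G^{\mathcal{T}}_b(c,\cdot)\|_\infty\le 2\max_{e}\|w_e\|_\infty$ for $c\in\mathrm{path}_{\mathcal{T}}(b)$, and $0$ otherwise.

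The one loose end is the constant, and it needs no further tightening. With the paper's convention $\|u-v\|_1\le 2D_{\mathrm{H}}(u\,|\,v)$ from Lemma~\ref{dist_ineq}, your chain gives exactly $(2\cdot 2)^2\cdot 2=32$. So keep the vertex double-charging factor $2$; it is precisely the paper's factor $4=2^2$. Also, with your $2\sqrt{2}$ inequality the count is $(2\sqrt{8})^2\cdot 2=64$, not $128$.
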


\begin{figure}[t]
	\centering
	\begin{tikzpicture}[scale=0.7,
		vertex/.style ={circle, fill=black, inner sep=0pt, minimum size=5pt},
		rootv/.style  ={circle, fill=red!85!black, inner sep=0pt, minimum size=6pt},
		srcv/.style   ={circle, fill=blue!80!black, inner sep=0pt, minimum size=6pt},
		tedge/.style  ={-{Stealth[length=2.2mm]}, line width=0.9pt},
		pedge/.style  ={-{Stealth[length=2.6mm]}, line width=1.5pt, blue!80!black, dashed},
		]
		\node[rootv] (r)  at ( 0.00,  0.00) {};
		\node[vertex](v1) at (-2.00, -0.55) {};
		\node[vertex](v2) at (-2.20,  1.70) {};
		\node[vertex](v3) at (-1.30, -2.05) {};
		\node[vertex](v4) at (-0.25,  1.45) {};
		\node[vertex](v5) at ( 1.75,  1.15) {};
		\node[vertex](b)  at ( 0.70, -1.35) {};
		\node[vertex](v6) at ( 2.15, -0.95) {};
		\node[vertex](v7) at ( 1.60, -2.35) {};
		\node[srcv]  (a)  at ( 0.35, -3.20) {};
		
		\node[anchor=west]  at ( 0.18,  0.08) {$\mathrm{root}_{\mathcal{T}}$};
		\node[anchor=north] at ( 0.35, -3.35) {$a$};
		
		\draw[tedge] (v2) -- (v1);
		\draw[tedge] (v3) -- (v1);
		\draw[tedge] (v1) -- (r);
		\draw[tedge] (v4) -- (r);
		\draw[tedge] (v5) -- (r);
		\draw[tedge] (v6) -- (b);
		\draw[tedge] (v7) -- (b);
		
		\draw[pedge] (a) -- (b);
		\draw[pedge] (b) -- (r);
	\end{tikzpicture}
	\caption{A spanning in-tree $\mathcal{T}$ of the neighborhood graph $(\Pi, \mathcal{E})$, with root
	$\mathrm{root}_{\mathcal{T}}$; every edge is directed toward the root. The dashed edges form
	$\mathrm{path}_{\mathcal{T}}(a)$, the directed path from action $a$ to the root. The construction follows
	\citet{lattimore2020exploration}.}
	\label{fig:in-tree}
\end{figure}
Proof is deferred to Appendix~\ref{app_section_efficient_local}. Above result is true for any in-tree with an arbitrary root. However, in general, we do not want the paths to be arbitrary: for a ``bad'' action $c$ and a ``good'' action $b$ (i.e., $e_c^\T L \hat{q}$ is large while $e_b^\T L \hat{q}$ is small), we would like $p_c$ to be small and $p_b$ to be large to ensure $p^T \boldsymbol{L} \hat{q}(x) - \lambda^T \boldsymbol{L} \hat{q}(x)$ to be small; however, if $c$ happens to lie in $\mathrm{path}_{\mathcal{T}}(b)$, that can cause the $\frac{\lambda_b}{p_c}$ term to be very large. How should the in-tree $\mathcal{T}$ be chosen so that the actions lying on each path are ones we are willing
to play? Our answer is to let the tree follow the estimated losses. We take $\mathcal{T}$ to be
$\boldsymbol{L}\hat{q}(x)$-decreasing (that is for a directed edge $e=(b,c)$, $e_b^\T \boldsymbol{L}\hat{q}(x) \geq e_c^\T \boldsymbol{L}\hat{q}(x)$; See Def~\ref{definition_decreasing_tree} for formal definition), so that estimated loss decreases along every path toward the root and,
in particular, the empirical best action sits at the root. 
With this choice of $\cT$, and the observation that $p$ according to \sqpm is monotonically decreasing of in the loss, the coverage term is upper bounded by $k \sum_b (\frac{\lambda_b}{p_b} + 1)$, which allows us to bound $\cV_\gamma(q,p)$ subsequently.

\begin{algorithm}[t]
\caption{\sqpm for Locally observable games}
\label{alg:explore-lo}
\begin{algorithmic}[1]
\STATE\textbf{parameters:}\\
		\STATE\spacex Context $x$ with predicted outcome distribution $\hat{q}(x)$ \\
		\STATE\spacex Exploration parameter $\gamma > 0$, $\mu = k$ and $\kappa = \frac{\gamma}{8k m^2} $ \\
        \STATE For each action $a \in \Pi$, compute $\hat{\ell}_{a} = e_a^\T \boldsymbol{L} \hat{q}(x)$ and let $b = \argmin_{a \in \Pi} \hat{\ell}_{t,a}$
		\STATE\label{lin_igw} For each $a \in \Pi \setminus \{b\}$, define $p_{a} = \frac{1}{\mu + \kappa (\hat{\ell}_{a} - \hat{\ell}_{b})}$; let $p_{b} = 1 - \sum_{a \in \Pi \setminus \{b\}} p_{a}$. 
		\RETURN $p$
\end{algorithmic}
\end{algorithm}

\paragraph{Offline oracle-efficient algorithms} 

In Appendix~\ref{app_section_offline}, we present a close variant of \edpm, namely \oedpm (Offline Estimation to Decisions for Partial Monitoring with Side Information), that efficiently reduces contextual partial monitoring to \emph{offline} estimation. Our \oedpm generalizes \textsc{FALCON}~\citep{simchilevi22bypassing} and its linear version~\citep{xu2020upper}.
Similar to \edpm, \oedpm has two versions, one that employs Alg.~\ref{alg:explore-go} for the globally observable setting, the other employs Alg.~\ref{alg:explore-lo} for the locally observable setting, which enjoys $O(T^{2/3})$ and $O(\sqrt{T})$ regret in the respective settings.
Compared with \edpm that makes $O(T)$ calls to the online outcome estimation oracle, \oedpm only makes $O(\log T)$ calls to the offline outcome estimation oracle, making it suitable for applications with low computational cost requirements. 

\section{Empirical studies}\label{main_section_experiments}

We evaluate \sqpm in two settings we discussed in Section~\ref{section_main_intro} (Examples 1 \& 2): LLM routing and contextual label-efficient prediction game. 

\paragraph{LLM routing.}
We evaluate the routing problem on a cached version of the BoolQ benchmark containing \(12{,}648\) valid responses from both models. We use \texttt{Qwen2.5-0.5B-Instruct} as the weak model and \texttt{Llama-2-13B-Chat} as the strong model. The routing outcome records whether their cached responses disagree. Each prompt is represented by a \(138\)-dimensional context comprising \(10\) weak-model uncertainty statistics, \(64\) PCA components of its hidden states, and \(64\) PCA components of a context-free prompt embedding.

We compare linear-logistic CBPSide~\citep{bartok2012CBPside, byeongwoo26efficient}, \sqpm (ours), ETC
(Explore-Then-Commit, adapted for apple tasting)~\citep{garivier16etc} with both linear logistic regression
and a nonlinear $15$-leaf histogram gradient-boosting (HGB) model, Bayesian-logistic
PG-TS~\citep{grant21apple}, and a random baseline. We consider nine values of $\ell_{01}$ between $1.8$ and
$3.3$ and evaluate each configuration over $20$ paired random permutations. \Cref{fig:llm_routing} reports the
mean realized total cost.

For each value of $\ell_{01}$ we tune every algorithm separately, selecting the configuration with the lowest
total cost. For all methods except the random baseline this amounts to choosing a multiplier
$s \in \{0.1, 0.3, 1, 3, 10\}$, which scales the confidence radius in CBPSide, the exploration parameter
$s \cdot \kappa$ in both \sqpm variants, the initial exploration budget $s \cdot T^{2/3}$ in ETC, and the prior
standard deviation in PG-TS. The random baseline instead routes each query to the strong model with a fixed
probability, independent of context, swept over $\{0, 0.2, 0.4, 0.6, 0.8, 1\}$. All methods, ours included,
are therefore tuned with hindsight knowledge of the loss. Further implementation details and results are
provided in \Cref{app_section_experiments}.

Across the evaluated loss settings, both the linear-logistic and nonlinear HGB instantiations of \sqpm achieve lower mean realized total cost than \textsc{CBPSide}, while the nonlinear HGB variant also outperforms PG-TS. These results illustrate a practical advantage of the oracle-efficient \sqpm formulation: it can incorporate flexible nonlinear prediction models without modifying the surrounding routing policy. Furthermore, generally ETC is expected to perform worse due to $O(T^{2/3})$ regret, however here due to limited test data size  \(12{,}648\) compared to larger dimension \(138\), it performs competitively. See Figure~\ref{fig:llm_routing}.

\begin{figure}[t]
	\centering
	\begin{subfigure}[b]{0.48\textwidth}
		\centering
		\includegraphics[width=\linewidth]{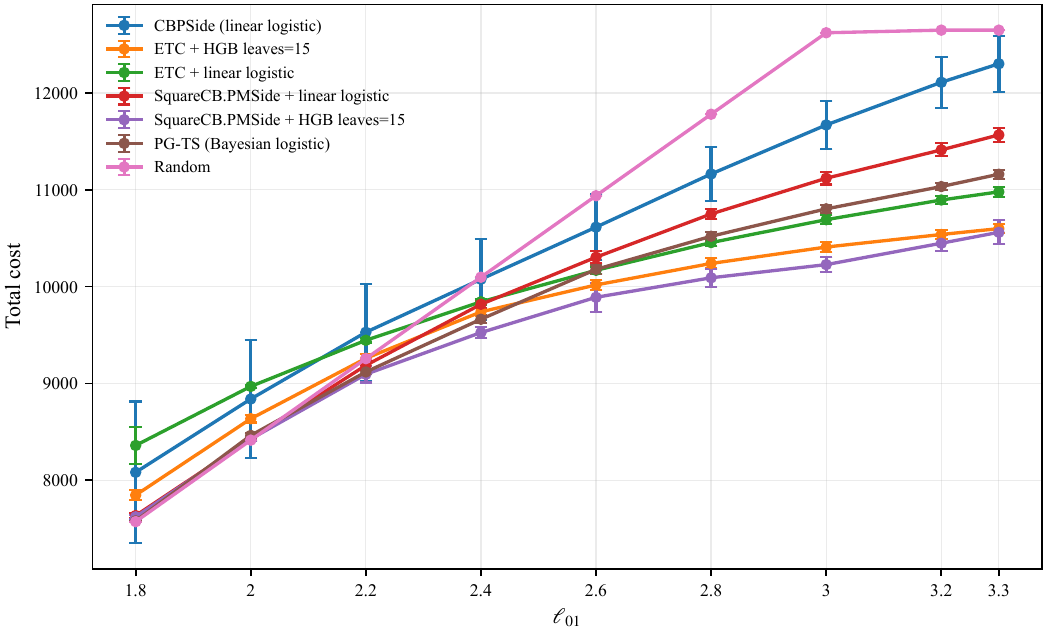}
		\caption{LLM routing: Total cost vs $\ell_{01}$}
		\label{fig:llm_routing}
	\end{subfigure}
	\hfill
	\begin{subfigure}[b]{0.45\textwidth}
		\centering
		\includegraphics[width=\linewidth]{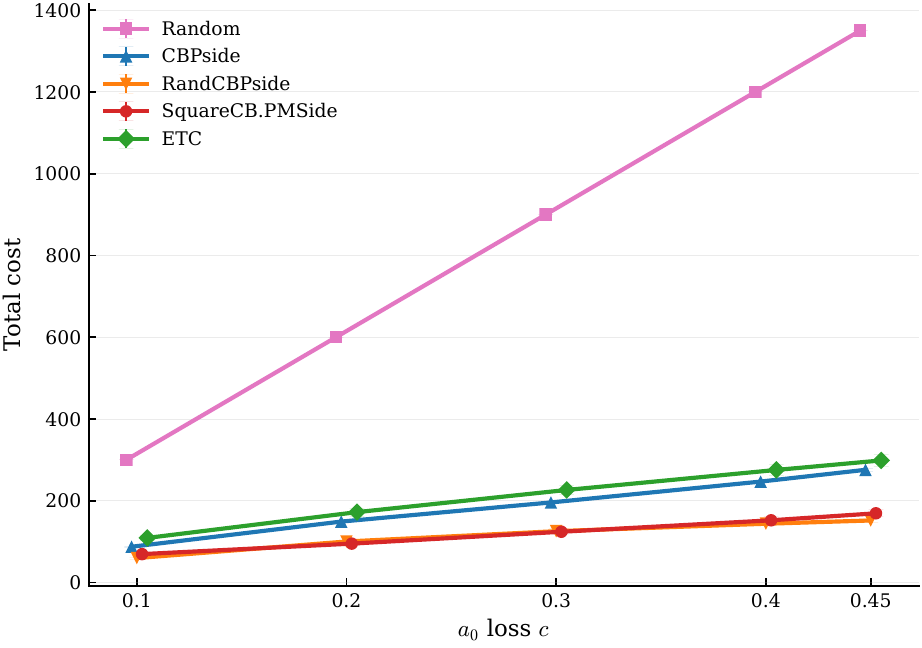}
		\caption{LE game: Total cost vs $c$}
		\label{fig:le_regret}
	\end{subfigure}
	\caption{Comparison of algorithms for and LLM routing and Label efficient prediction game}
	\label{fig:experiments}
\end{figure}

\paragraph{Contextual label-efficient prediction.}
We conduct an analogous study for the contextual label-efficient prediction game. Local observability requires the expert-query cost to satisfy $c<0.5$; accordingly, we evaluate five values of $c$ in the range $0.1 \leq c < 0.5$. Because PG-TS is designed specifically for apple-tasting games and is thus inapplicable to this problem, we replace it with RandCBPSide~\citep{heuillet24spm}. We generate the data synthetically from a linear logistic model with 10-dimensional context vectors. Further, details are provided in Appendix~\ref{app_section_experiments}. Our method consistently outperforms ETC and CBPSide across the evaluated settings and remains competitive with RandCBPSide, outperforming it for some of the expert-query costs. Moreover, RandCBPSide requires comparatively expensive confidence-width calculations, whereas our method is computationally cheaper. See Figure~\ref{fig:le_regret}.

\section{Conclusion and future work}

Our work gives the first efficient algorithms for stochastic contextual partial monitoring with optimal regret with general function approximation.
Experiments on LLM routing and a modified label-efficient game support these guarantees. Our analysis is restricted to non-degenerate games. Lemma~\ref{lem:bounds}, which underlies the $O(1/\gamma)$ bound
for locally observable games, relies on non-degeneracy. We conjecture that our \sqpm might work for degenerate games as well with minor modification. However, their analysis need a careful approach which is left to future work. Another interesting question is, can our algorithm be generalized to the setting when the loss matrix and the feedback matrix is context-dependent? This seems practically relevant for e.g. LLM routing applications, since the cost of wrongly following the weak model may depend on the importance of the prompt. Furthermore, extending our work to contextual linear partial monitoring~\citep{kirschner2023linear} and contextual partial monitoring with infinite outcomes~\citep{lattimore2022minimax} are also interesting future directions.

\subsection*{AI use statement}

In this work, we used generative AI tools for coding for experiments, writing those experiment details and polishing the writing. We did reviewed those coding implementation for correctness.
We have not used generative AI tools for writing proof or related works,
and the rest of the required disclosure tasks are not applicable to this work. We have reviewed all AI-assisted work. We take responsibility for the final content of this work,
including text, claims or artifacts produced with the aid of generative AI.

\subsection*{Ethics statement}
This work develops learning algorithms for routing queries among large language models. Such methods may reduce inference costs and computational resource consumption while improving access to capable models. However, routing decisions may propagate or amplify biases present in the underlying models and data, potentially producing unequal performance across demographic groups, languages, or application domains. Furthermore, queries may contain sensitive information, particularly when routed to externally hosted models. Our methods optimize a specified performance objective and do not independently guarantee fairness, privacy, reliability, or safety. Practical deployments should therefore incorporate appropriate data protection, subgroup evaluation, human oversight, and fallback mechanisms.

\subsection*{Reproducibility statement}
We provide complete definitions, assumptions, algorithm descriptions, and regret analyses in the main paper and appendices. The experimental section reports the datasets, preprocessing procedures, model configurations, baselines, evaluation metrics, and hyperparameters used in our study. We will release the implementation, experiment configurations, and evaluation scripts later to facilitate reproduction of our results, subject to the licenses and access conditions of the underlying datasets and language models. We also report results across multiple random seeds to account for experimental variability.

\subsubsection*{Acknowledgments}

This work was partly supported by the National Research Foundation of Korea (NRF) grant funded by the Korean government (MSIT) in 2026 (Grant No. RS-2026-25612350, Project Title: Development of an End-to-End Design and Validation Framework for Mathematically Principled Next-Generation AI Architectures and Learning Algorithms, contribution 45\%).
This work was partly supported by Institute of Information \& communications Technology Planning \& Evaluation (IITP) grant funded by the Korea government(MSIT) (RS-2019-II191906, Artificial Intelligence Graduate School Program(POSTECH), contribution 5\%; RS-2026-25616313, AI Star Fellowship(POSTECH, Interdisciplinary Program), contribution 50\%).
This work was supported in part by the National Science Foundation under grant CCF-2327013, IIS-2440266 (CAREER) and Meta Platforms, Inc.
\bibliography{library-shared}
\bibliographystyle{iclr2027_conference}

\newpage
\begin{center}
	\hrule height 4pt
	\vskip 0.25in
	
	{\fontsize{17}{20}\selectfont \bf Appendix}
	\vskip 0.25in 
	\hrule height 1pt
\end{center}
\newcommand{\apptoc}[1]{%
	\par\noindent
	\hyperref[#1]{Appendix~\ref*{#1}\quad \nameref*{#1}}%
	\ \dotfill\ \pageref{#1}%
}

\newcommand{\appsubtoc}[1]{%
	\par\noindent\hspace*{1.8em}
	\hyperref[#1]{\ref*{#1}\quad \nameref*{#1}}%
	\ \dotfill\ \pageref{#1}%
}

\appendix
\begingroup
\parskip 3pt
\apptoc{app_related_works}
\appsubtoc{doec_vs_ExO}
\appsubtoc{doec_vs_dec}
\apptoc{app_section_oracles}
\appsubtoc{app_section_online_oracles}
\appsubtoc{app_section_offline_estimation_oracles}
\apptoc{app_section_analysis_e2dpmside}
\appsubtoc{app_section_proof_thoerem_1}
\appsubtoc{app_section_efficient_global}
\appsubtoc{app_section_efficient_local}
\apptoc{app_section_offline}
\appsubtoc{app_section_oe2dpmside}
\appsubtoc{app_section_oe2d_regret_analysis}
\apptoc{app_section_experiments}
\apptoc{app_section_decreasing_tree}
\apptoc{section_lemmata_contextual}
\endgroup

\clearpage

\DeclarePairedDelimiter{\brk}{[}{]}
\DeclarePairedDelimiter{\crl}{\{}{\}}
\DeclarePairedDelimiter{\prn}{(}{)}
\DeclarePairedDelimiter{\nrm}{\|}{\|}
\DeclarePairedDelimiter{\tri}{\langle}{\rangle}
\DeclarePairedDelimiter{\dtri}{\llangle}{\rrangle}

\DeclarePairedDelimiter{\nrmlip}{\|}{\|_{\mathrm{Lip}}}

\newcommand{\Ep}{\wt{\bbE}}
\newcommand{\opt}{\operatorname{opt}}
\newcommand{\Eh}{\wh{\bbE}}
\newcommand{\bias}{\operatorname{bias}}
\newcommand{\sind}{\mathds{1}}
\newcommand{\algind}{\hspace{0.5cm}}
\newenvironment{simplealg}{\begin{mdframed}[roundcorner=1pt,backgroundcolor=black!5!white]\setlength{\parindent}{0cm}\setlength{\parskip}{0.2cm}\tt\vspace{-0.15cm}}{\end{mdframed}\vspace{-0.55cm}}
\setcitestyle{square}

\section{Additional Discussions on Important Related Works} \label{app_related_works}

\subsection{Exploration-by-optimization (ExO) versus our approach.}\label{doec_vs_ExO}

\citet{lattimore2020exploration} propose the exploration-by-optimization (ExO) algorithm for adversarial
non-contextual partial monitoring. They first form an exponentially weighted benchmark policy \edit{}{$\lambda_t$} over the Pareto
optimal actions from unbiased loss estimates,
\begin{align*}
	\lambda_{ta} = \frac{\sind_\Pi(a) \exp\left(-\eta \sum_{s=1}^{t-1} \hat y_{sa}\right)}
	{\sum_{b \in \Pi} \exp\left(-\eta \sum_{s=1}^{t-1} \hat y_{sb}\right)},
\end{align*}
where $\sind_\Pi$ is the indicator of $\Pi$ and $\hat y_t = G_t(A_t, \sigma_t) / p_{t A_t}$ is an
importance-weighted loss estimate, with $G_t$ (\Cref{app_eq_g_function}) obtained by solving \Cref{eq:opt} at
round $t$. This generalizes the EXP3 algorithm \citep{auer03nonstochastic} for adversarial bandits. 

They then improve this benchmark by solving
\begin{equation}
	\label{eq:opt}
	\begin{alignedat}{3}
		&\underset{G \in \cH,\, p \in \cP}{\text{minimise}} \qquad && \max_{o \in [d]} \Bigg[
		(p-\lambda)^\top \boldsymbol{L} e_o + \bias_{\lambda}(G ; o)
		+ \frac{1}{\eta} \sum_{a=1}^k p_a \Psi_{\lambda}\!\left(\frac{\eta G(a, \Phi_{ao})}{p_a}\right)
		\Bigg],
	\end{alignedat}
\end{equation}
where $\eta$ plays the role of our $\gamma$,
$\Psi_q(z) = \langle q, \exp(-z) + z - 1\rangle$, and
\begin{align*}
	\bias_{\lambda}(G ; o) = \left\langle \lambda,\, \boldsymbol{L} e_o - \sum_{a=1}^k G(a, \Phi_{ao})\right\rangle
	+ \max_{c \in \Pi} \left(\sum_{a=1}^k G(a, \Phi_{ao})_c - L_{co}\right).
\end{align*}

We may write this equivalently, following the notation of \citet{lattimore2021mirror}, as
\[
\underset{G \in \cH,\, p \in \cP}{\text{minimise}} \ \max_{o \in [d],\, c \in \Pi} \Bigg[
(p-\lambda)^\top \boldsymbol{L} e_o + \inner{ e_{c} - p }{ \sum_{a=1}^k G(a, \Phi_{ao}) - \boldsymbol{L} e_o }
+ \frac{1}{\eta} \sum_{a=1}^k p_a \Psi_\lambda \!\left(\frac{\eta G(a, \Phi_{ao})}{p_a}\right)
\Bigg].
\]
This step reallocates probability away from the benchmark policy and toward actions whose feedback is
informative. Writing $\opt_\lambda(\eta)$ for the optimal value of \Cref{eq:opt}, the analogue of our
$\mathcal{V}_\gamma$, \citet{lattimore2020exploration} bound it for both locally and globally observable
games without exhibiting an explicit or approximate solution to the optimization problem.

They first restrict the class of estimators $G$. Let
\[
\cH_0 = \cbr{ G \in \cH: \inner{e_b - e_c}{ \sum_{a=1}^k G(a, \Phi_{ao}) - \boldsymbol{L} e_o } = 0
	\ \ \text{for all } b, c }
\]
denote the unbiased estimators, for which $\bias_{\lambda}(G ; o) = 0$ for every $o$. The minimax objective is
then bounded above by
\[
\underset{G \in \cH_0, \, p \in \cP}{\min} \ \max_{o \in [d]} \Bigg[
(p- \lambda)^\top \boldsymbol{L} e_o
+ \frac{1}{\eta} \sum_{a=1}^k p_a \Psi_{\lambda}\!\left(\frac{\eta G(a, \Phi_{ao})}{p_a}\right)
\Bigg],
\]
and, since the bracketed expression is linear in $e_o$, maximizing over outcomes is equivalent to maximizing
over outcome distributions:
\begin{align}
	\underset{G \in \cH_0, \, p \in \cP}{\min} \ \max_{q \in \cD} \Bigg[
	(p- \lambda)^\top \boldsymbol{L} q
	+ \frac{1}{\eta} \sum_{o=1}^d q_o \sum_{a=1}^k p_a
	\Psi_{\lambda}\!\left(\frac{\eta G(a, \Phi_{ao})}{p_a}\right)
	\Bigg].
	\label{eqn:exo-obj}
\end{align}
\Cref{eqn:exo-obj} is concave in $q$ and convex in $(G, p)$, so to (nonconstructively) upper bound the value of \Cref{eqn:exo-obj}, Sion's minimax theorem permits
exchanging the two operations. Writing $F(G, p, q)$ for the bracketed expression,
\[
\underset{G \in \cH_0,\, p \in \cP}{\min} \ \max_{q \in \cD} F(G, p, q)
\;=\; \max_{q \in \cD} \ \underset{G \in \cH_0,\, p \in \cP}{\min} \ F(G, p, q).
\]
To upper bound this expression, it now suffices to choose $G$ and $p$ as functions of $q$ and $\lambda$: they build an
$\boldsymbol{L}q$-decreasing tree, take $G$ to be the loss estimator induced by that tree, and set
$p = W_{q}(\lambda)$, where $W_q$ is the water-transfer operator (see \Cref{lemma_water_transfer} for a recap). The operator
redistributes the benchmark policy $\lambda$ so that the resulting $p$ incurs no greater loss under $q$, while
allocating probability that increases toward the root of the tree. 

In contrast, \emph{given an outcome distribution $q$, }
our exploration strategy approximately solves the following optimization problem:
\[
    \min_{p \in \cP} \max_{\lambda \in \Delta(\Pi)} (p - \lambda)^\T \boldsymbol{L} \hat{q} 
    +  
    \frac{1}{\gamma} \cdot \mathrm{Coverage}(p, \lambda)
\]
Given that for any $G \in \cH_0$, Lemma~\ref{lem:coverage-g-ub} implies that
$\mathrm{Coverage}(p, \lambda) \leq 
\sum_a \sum_b \frac{\lambda_b + p_b}{p_a} \| G_b(a,:)\|_\infty^2
$, and thus it suffices to find exploration distribution $p$ that solves, 
\begin{align}
    \min_{p \in \cP} \max_{\lambda \in \Delta(\Pi)} (p - \lambda)^\T \boldsymbol{L} \hat{q} 
    +  
    \frac{1}{\gamma} \cdot \del{ \sum_a \sum_b \frac{\lambda_b + p_b}{p_a} \| G_b(a,:)\|_\infty^2 }
    \label{eqn:relaxed-doec-obj}
\end{align}

Comparing \Cref{eqn:exo-obj} with \Cref{eqn:relaxed-doec-obj}, the two objectives share a shape: each
consists of a greediness term of the form $(p - \lambda)^\T \boldsymbol{L} \hat{q}$ and a coverage term
measuring how well $p$ covers $\lambda$. They differ in three respects:
\begin{itemize}
\item  First, in the minimax players: we minimize over exploration policies against a maximizing comparator policy
$\lambda$, whereas \Cref{eqn:exo-obj} minimizes against a maximizing adversarial outcome distribution  $q$.

\item Second, in the form of the coverage term. The one in \Cref{eqn:exo-obj} is exponential,
$\Psi_{\lambda}\!\left(\eta G(a, \Phi_{ao}) / p_a\right)$, and \citet{lattimore2020exploration} relax it to
$\eta^2 \sum_b \lambda_b \lVert G(a, \cdot)_b \rVert_\infty^2 / p_a^2$, at the cost of requiring $p$ to be
bounded below entrywise by $\eta \max_{a,b} \lVert G(a, \cdot)_b \rVert_\infty$.

\item Third, in what the analysis delivers. ExO is tractable but gives no explicit exploration policy $p$. For our
objective we show that the inverse-gap-weighting rule of \sqpm certifies a small value of $\mathcal{V}_\gamma$
directly: it already allocates probability increasing toward the root of an $\boldsymbol{L}\hat{q}$-decreasing
tree, so neither the optimization over $G$ nor the water-transfer step is needed, and the in-tree only appears
in the analysis.
\end{itemize}
We also refer the reader to Table~\ref{tab:exo-comparison} for a summary. 

\begin{table}[h!]
	\centering
	\caption{Comparison with exploration-by-optimization \citep{lattimore2020exploration}. $\uparrow$ indicates a better result. NA indicates not applicable}
	\label{tab:exo-comparison}
	\begin{tabular}{lll}
		\toprule
		& \textbf{ExO} & \textbf{Ours} \\
		\midrule
		Side information      & No                                  & Yes $\uparrow$ \\
		Outcome distribution  & Adversarial	$\uparrow$	 			& Stochastic 
        \\
		Objective 			  & $\opt_q(\eta)$						& $\mathcal{V}_\gamma$\\
		Minimax players       & Policy vs.\ adversarial outcome $x$ & Policy vs.\ comparator policy $\lambda$ \\
		Optimization variables& $G$ and $p$ jointly                 & $p$ only $\uparrow$\\
		Per-round computation & Exact solution of \Cref{eq:opt}    & IGW for locally observable;\\
		&&$\eps$-greedy for globally observable $\uparrow$\\
		In-tree               & Optimized at every round          	& Analysis only $\uparrow$\\
		Water-transfer        & Required                            & Not required $\uparrow$\\
		Oracle-efficient		& No								& Efficient $\uparrow$\\
		Offline oracle compatibility &NA							& Compatible $\uparrow$\\
		Locally observable regret & ---                             & $\sqrt{k}$ smaller $\uparrow$\\
		\bottomrule
	\end{tabular}
\end{table}
\subsection{Relation to the decision-estimation coefficient}\label{doec_vs_dec}

The decision-estimation coefficient (DEC) was introduced by \citet{foster21thestatistical} as a complexity
measure for stochastic sequential decision making. \citet{foster2022complexity} extend it to the adversarial
setting, showing that the DEC applied to the convex hull of the model class characterizes the fundamental statistical complexity of that setting as well.
On the algorithmic side they propose $\mathrm{ExO}^{+}$, a high-probability variant of the
exploration-by-optimization algorithm of \citet{lattimore2020exploration, lattimore2021mirror}. Whereas the
original ExO gives in-expectation guarantees, $\mathrm{ExO}^{+}$ gives high-probability guarantees that hold
against adaptive adversaries. Like ExO, the exploration rule and the loss estimator are defined through an optimization problem rather than a closed-form
policy.

Their analysis does not bound the $\mathrm{ExO}^{+}$ or DEC objective directly for locally and globally
observable games. Instead they establish an equivalence, for general divergences $D$, between the convexified
DEC, a parameterized variant of the generalized information ratio of \citet{lattimore2021mirror}, and the
$\mathrm{ExO}^{+}$ objective. Since
\citet{lattimore2019information, lattimore2021mirror, lattimore2022minimax} bound the regret for locally and
globally observable games in terms of the generalized information ratio under the KL divergence, in agreement
with the classification theorem \citep{bartok11pmminimax, lattimore2020exploration}, those bounds transfer to
$\mathrm{ExO}^{+}$ through this equivalence. The equivalence itself holds for general $D$, but the
information-ratio bounds are established for the KL divergence; we are not aware of corresponding bounds for
other divergences, including the squared Hellinger.

To summarize, \citet{foster2022complexity}'s result implicitly implies that the E2D algorithm~\cite{foster21thestatistical} has optimal order-in-$T$ regret for stochastic contextual partial monitoring. However, 
as we discussed in Section~\ref{sec:e2d}, the E2D algorithm may be computationally inefficient. Our paper addresses the computational challenge.
For locally observable games we give
a closed-form inverse-gap-weighting rule (\sqpm), and for globally observable games a closed-form
$\varepsilon$-greedy rule (\egqpm). Both extend to be compatible with offline estimation oracles, because the exploration term of
our objective, the $\mathrm{coverage}$, does not involve the current estimate.

\section{Online and offline estimation oracles}\label{app_section_oracles}

\subsection{Online Oracles}\label{app_section_online_oracles}

In this section, we provide examples for the online estimation oracles \(\mathrm{Alg}_{\mathrm{On}}\) and prove their bounds for both finite and infinite model classes.

\begin{lemma}[Finite-class online Hellinger guarantee via Vovk's algorithm]
	\label{lem_vovk_online_estimation_finite_contextual}
	Suppose that the candidate conditional outcome-model class
	\[
	\mathcal Q
	\subseteq
	\left\{
	q:\mathcal X\to\Delta(d)
	\right\}
	\]
	is finite and realizable, so that \(q^*\in\mathcal Q\) and the oracles are allowed to output predictors in $\mathrm{co}(\mathcal{Q})$. Then there
	exists a contextual online estimation oracle
	\(\mathrm{Alg}_{\mathrm{On}}\) such that,
	\begin{align*}
		\overline{\mathrm{Reg}}^{\mathrm{H}}_{\mathrm{On}}(\mathcal{Q}, T, \delta)
		=
		2\log\left(\frac{|\mathcal Q|}{\delta}\right).
	\end{align*}
	The contextual online estimation oracle can be instantiated by applying
	Vovk's \citep{vovk1995game} aggregating algorithm to the conditional outcome models in
	\(\mathcal Q\).
\end{lemma}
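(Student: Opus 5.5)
The plan is to reduce online Hellinger estimation to online log-loss prediction of the feedback symbols, and then use the standard conversion from log-loss regret to Hellinger error.

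\emph{Step 1 (log-loss regret).} Each $q \in \mathcal{Q}$ induces a conditional density for the observed symbol: given context $x_t$ and action $A_t$, the symbol $\sigma_t$ has distribution $\boldsymbol{S}_{A_t} q(x_t) \in \Delta(m)$. I would run Vovk's aggregating algorithm, i.e.\ exponential weights with learning rate $1$ and uniform prior over $\mathcal{Q}$, under the log loss $-\log(\boldsymbol{S}_{A_t} q(x_t))_{\sigma_t}$. The posterior weights $w_t(q) \propto \prod_{s<t} (\boldsymbol{S}_{A_s} q(x_s))_{\sigma_s}$ define the mixture $\hat q_t = \sum_q w_t(q) q \in \mathrm{co}(\mathcal{Q})$. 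By linearity of $\boldsymbol{S}_a$, $\boldsymbol{S}_a \hat q_t(x) = \sum_q w_t(q)\, \boldsymbol{S}_a q(x)$ is exactly the Bayesian mixture prediction. The usual telescoping argument then gives, deterministically and for every sequence,
\[
\sum_{t=1}^T -\log(\boldsymbol{S}_{A_t}\hat q_t(x_t))_{\sigma_t} \le \sum_{t=1}^T -\log(\boldsymbol{S}_{A_t} q^*(x_t))_{\sigma_t} + \log|\mathcal{Q}|.
\]
This step uses realizability only to single out $q^*$ as the comparator.

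\emph{Step 2 (martingale conversion to Hellinger).} Let $\mathcal{F}_{t-1}$ contain the history together with $x_t$. Conditionally on $\mathcal{F}_{t-1}$, we have $A_t \sim p_t(\cdot\mid x_t)$ and $\sigma_t \sim \boldsymbol{S}_{A_t} q^*(x_t)$, by the stochastic outcome model. Define
\[
Z_t = \tfrac12 \log\frac{(\boldsymbol{S}_{A_t}\hat q_t(x_t))_{\sigma_t}}{(\boldsymbol{S}_{A_t} q^*(x_t))_{\sigma_t}}.
\]
Step 1 gives $\sum_t Z_t \ge -\tfrac12\log|\mathcal{Q}|$. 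The standard exponential-supermartingale lemma (e.g.\ Foster et al., Lemma A.4) states that with probability $1-\delta$,
\[
\sum_t Z_t \le \sum_t \log \EE_{t-1}[e^{Z_t}] + \log(1/\delta).
\]
Moreover,
\[
\EE_{t-1}[e^{Z_t}] = \EE_{a\sim p_t}\sum_\sigma \sqrt{(\boldsymbol{S}_a\hat q_t)_\sigma (\boldsymbol{S}_a q^*)_\sigma} = 1 - \tfrac12\,\EE_{a\sim p_t} D^2_{\mathrm H}(\boldsymbol{S}_a\hat q_t(x_t)\,|\,\boldsymbol{S}_a q^*(x_t)),
\]
using the convention $D^2_{\mathrm H} = \sum(\sqrt{u}-\sqrt{v})^2$, so that the Bhattacharyya coefficient equals $1 - D^2_{\mathrm H}/2$. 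Applying $\log(1-x)\le -x$ and combining the bounds gives
\[
-\tfrac12 \log|\mathcal{Q}| \le -\tfrac12 \mathrm{Reg}^{\mathrm H}_{\mathrm{On}} + \log(1/\delta),
\]
that is, $\mathrm{Reg}^{\mathrm H}_{\mathrm{On}} \le \log|\mathcal{Q}| + 2\log(1/\delta) \le 2\log(|\mathcal{Q}|/\delta)$.

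\emph{Main obstacle.} The conversion in Step 2 is the only delicate part, for two reasons. First, I must set up the filtration correctly: contexts may be adversarial, so $x_t$ is included in $\mathcal{F}_{t-1}$, and $\hat q_t$ must be $\mathcal{F}_{t-1}$-measurable. Second, the Hellinger normalization has to match the constant $2$; if the paper uses the factor-$\tfrac12$ convention, the constant only improves. Zero probabilities cause no trouble, since $q^*$ puts positive mass on every observed $\sigma_t$ and the mixture dominates $q^*/|\mathcal{Q}|$.
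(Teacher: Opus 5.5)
Your proposal is correct and follows the paper's route. Vovk's aggregating algorithm over the induced symbol distributions $\boldsymbol{S}_a q(x)$ gives log-loss regret $\log|\mathcal{Q}|$ against $q^*$, and where the paper invokes Lemma A.14 of Foster et al.\ for the log-loss-to-Hellinger conversion, you re-derive that step from the exponential-supermartingale bound and the Bhattacharyya identity, arriving at the same $\log|\mathcal{Q}| + 2\log(1/\delta) \le 2\log(|\mathcal{Q}|/\delta)$. One small slip: the mixture dominates $q^*/|\mathcal{Q}|$ only for the cumulative likelihood, not round by round, but almost-sure positivity of the posterior weight on $q^*$ is all your argument needs.
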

\begin{proof}
	Define $\mathcal{G} \subseteq \Delta(m)$ as a set of functions $g_q : \mathcal{X} \times [k] \to \Delta(m)$ induced by $q$, formally
	
	\begin{align*}
		\mathcal{G} = \{ g_q: g_q(\sigma \mid x ,a )  = \boldsymbol{S}_a q(x)[\sigma] \;\; \forall q \in \mathcal{Q}\}
	\end{align*}

	Hence, 
	
	\begin{align*}
		\lvert \mathcal{G} \rvert \leq \lvert \mathcal{Q} \rvert
	\end{align*}

	Specifically,

	\begin{align*}
		g_{q^*}(\sigma \mid x_t , A_t ) &= \PP\left(\sigma_t = \sigma \mid x_t, A_t\right) \\
		&=  S_{A_t} q^*(x_t)[\sigma]
	\end{align*}
	
	Vovk's algorithm calculate the $g_{\hat{q}_t}$ in the following way.
	
	Let, 
	\begin{align*}
		\ell^{(t)}_{\mathrm{log}}(g_q) &:= \log \left(\frac{1}{g_q ( \sigma_t \mid x_t, A_t ) }\right) \\
		&=  \log \left(\frac{1}{ \left(S_{A_t}  q(x_t) \right)[\sigma_t] }\right).
	\end{align*}

    Then $\forall g_q \in \mathcal{G}$, the algorithm calculates a probability simplex $b_t \in \Delta(\lvert \mathcal{G} \rvert)$ as follows.
	\begin{align*}
		b_t(g_q) \propto \exp \left( - \sum_{s=1}^{t-1} \ell_{\mathrm{log}}^{(s)} (g_q) \right)
	\end{align*}
	
	Estimate $g_{\hat{q}_t}$ 
	\begin{align*}
		g_{\hat{q}_t}(\cdot) = \EE_{g \sim b_t } \sbr{ g(\cdot)}
	\end{align*}
	
	Let the corresponding $\hat{q}_t$  such that, 
	\begin{align*}
		\hat{q}_t(\cdot) = \EE_{g \sim b_t } \sbr{ q(\cdot)}.
	\end{align*}
	
	 Then Vovk's algorithm guarantees,
	 
	  \[
	 \sum_{t=1}^T \log\frac{1}{g_{\hat{q}_t}(\sigma_t \mid x_t, a_t)} 
	 - 
	 \sum_{t=1}^T \log\frac{1}{g_{q^*}(\sigma_t \mid x_t, a_t)} 
	 \leq 
	 \log|\cG|    
	 \]
	 
	Then, by applying Lemma A.14 from \citet{foster21thestatistical}, the output estimator of Vovk's aggregating algorithm $g_{\hat{q}_t}$ satisfies, with probability of at least $1-\delta$,
	
	\begin{align} \label{foster_result}
		\sum_{t=1}^{T} \EE_{t-1} \sbr{D^2_{\mathrm{H}}\left( g_{\hat{q}_t}(\cdot \mid x_t, A_t) \mid g_{q^*}(\cdot \mid x_t, A_t)\right) } \leq 2\log \frac{\lvert \mathcal{G}\rvert}{\delta} 
	\end{align}
	
	transforming this to our case, with probability at least $1-\delta$, $\hat{q}_t$ satisfies,
	\begin{align*}
		\sum_{t=1}^{T} \EE_{a \sim p_t} \sbr{D^2_{\mathrm{H}}\left( \boldsymbol{S}_a \hat{q}_t(x_t) \mid \boldsymbol{S}_a q^*(x_t)\right) } \leq 2\log \frac{\lvert \mathcal{Q}\rvert}{\delta}.
	\end{align*}

	\begin{remark} \label{iidremark}
		Define filtration $\mathcal{H}$ as,
		\begin{align*}
			\mathcal{H}_{t-1} = \sigma \left( x_1, A_1, \sigma_1, \cdots, x_{t-1}, A_{t-1},\sigma_{t-1} \right)
		\end{align*}
		Equation \eqref{foster_result} also implies that, if $x_1, \cdots, x_T \stackrel{\mathrm{i.i.d.}}{\sim} \mathcal{D}_{x}$ and $p_t, \hat{q}_t$ are $\mathcal{H}_{t-1}$ measurable, then with probability of at least $1-\delta$,
		\begin{align*}
			\sum_{t=1}^{T} \EE_{x \sim \mathcal{D}_{x}, a \sim p_t(\cdot \mid x)} \sbr{D^2_{\mathrm{H}}\left( \boldsymbol{S}_a \hat{q}_t(x) \mid \boldsymbol{S}_a q^*(x)\right) } \leq 2\log \frac{\lvert \mathcal{Q}\rvert}{\delta}.
		\end{align*}
	\end{remark}
\end{proof}

\begin{lemma}[Infinite-class online Hellinger guarantee via contextual smoothed aggregation]
	\label{lem_vovk_online_estimation_infinite_contextual}
	Suppose that the candidate conditional outcome-model class is,
	\[
	\mathcal Q
	\subseteq
	\left\{
	q:\mathcal X\to\Delta(d)
	\right\},
	\]
	is possibly infinite but has finite covering numbers under the uniform Hellinger metric and that the model is realizable \(q^*\in\mathcal{Q}\) and the oracles are allowed to output predictors in $\mathrm{co}(\mathcal{Q})$.
	
	Define $\mathcal{G} \subseteq \Delta(m)$ as a set of functions $g_q : \mathcal{X} \times [k] \to \Delta(m)$ induced by $q$, formally
	
	\begin{align*}
		\mathcal{G} = \{ g_q: g_q(\sigma \mid x ,a )  = \boldsymbol{S}_a q(x)[\sigma] \;\;  q \in \mathcal{Q}\}
	\end{align*}
	Moreover, let us define an \eps-cover $\mathcal{J}$, such that, 
	\begin{align*}
		\forall g_q \in \mathcal{G} \spacex \exists g'_q \in \mathcal{J} \text{ such that } \sup_{x \in \mathcal{X}, a \in [k] } D^2_{\mathrm{H}} \left( g_q'(\cdot \mid x,a) \mid g_q(\cdot \mid x,a)\right) \leq \eps^2
	\end{align*}
	Furthermore, let $\mathcal{N}(\mathcal{G}, \eps )$ denote the size of the smallest such cover. Define,
	\begin{align*}
		\mathrm{est}(\mathcal{G}, T) := \inf_{\eps \geq 0} \{ \log(\mathcal{N}(\mathcal{G}, \eps )) + \eps^2T \}
	\end{align*}
	Then there exists a contextual online estimation oracle
	\(\mathrm{Alg}_{\mathrm{On}}\) such that,
	\begin{align*}
		\overline{\mathrm{Reg}}^{\mathrm{H}}_{\mathrm{On}}(\mathcal{Q}, T, \delta) \leq O\left(\log(2m^2T) \cdot \mathrm{est}(\mathcal{G}, T ) + \log^2(2m^2T)\log\frac{1}{\delta} \right).
	\end{align*} 
	The contextual online estimation oracle can be instantiated by applying
	an online aggregation algorithm \cite{vovk1995game, foster21thestatistical} to a suitably discretized and smoothed
	approximation of the parameter space $\mathcal{Q}$.
\end{lemma}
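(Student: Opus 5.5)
We reduce to the finite-class result of Lemma~\ref{lem_vovk_online_estimation_finite_contextual}, applied to a smoothed $\epsilon$-cover. There is one obstacle: a cover element $g'$ may assign tiny or zero probability to an observed symbol $\sigma_t$. The log loss is then unbounded, and the comparison with $g_{q^*}$ breaks down.

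\textbf{Construction.} Fix $\epsilon$ attaining (up to a factor $2$) the infimum in $\mathrm{est}(\mathcal{G},T)$, and let $\mathcal{J}$ be a minimal $\epsilon$-cover, so $|\mathcal{J}| = \mathcal{N}(\mathcal{G},\epsilon)$. Smooth each element with a uniform mixture,
\[
\tilde g(\cdot\mid x,a) = (1-\alpha)\, g'(\cdot\mid x,a) + \alpha\,\tfrac{1}{m}\boldsymbol{1}, \qquad \alpha = 1/T.
\]
Each $\tilde g(\sigma\mid x,a)$ is then at least $\alpha/m = 1/(mT)$. We run Vovk's aggregating algorithm over the finite class $\tilde{\mathcal{J}}$. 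If we must output $\hat q_t \in \mathrm{co}(\mathcal{Q})$, we map each cover element back to a $q \in \mathcal{Q}$ it approximates. The uniform mixing is then absorbed as an extra $O(\alpha)$ Hellinger perturbation, which is harmless because the squared Hellinger distance between $g$ and its $\alpha$-mixture is at most $\alpha$.

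\textbf{Log-loss regret against $g_{q^*}$.} Vovk's guarantee gives log-loss regret at most $\log|\mathcal{J}|$ against the best $\tilde g \in \tilde{\mathcal{J}}$. Take $\tilde g^*$ to be the smoothed cover element of $g_{q^*}$. We then need to control
\[
\sum_t \log\frac{g_{q^*}(\sigma_t\mid x_t,A_t)}{\tilde g^*(\sigma_t\mid x_t,A_t)}.
\]
Its conditional expectation is a KL divergence. Because $\tilde g^*$ is bounded below by $1/(mT)$, KL is controlled by Hellinger up to a logarithmic factor:
\[
\mathrm{KL}\bigl(g \,\|\, \tilde g\bigr) \le \bigl(2 + \log(m^2 T)\bigr)\, D_{\mathrm H}^2(g,\tilde g)
\]
(e.g.\ Lemma~A.10-type bounds in \citet{foster21thestatistical}). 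The squared Hellinger distance satisfies $D_{\mathrm H}^2(g_{q^*},\tilde g^*) \lesssim \epsilon^2 + \alpha$. Summing over rounds, the expected excess is $O\bigl(\log(2m^2T)(\epsilon^2 T + 1)\bigr)$.

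\textbf{High-probability deviation.} The log-ratio increments are bounded in magnitude by $\log(2m^2T)$. Freedman's inequality, with variance controlled by the same KL/Hellinger quantity, gives a deviation term of order $\log(2m^2T)\cdot \mathrm{est} + \log^2(2m^2T)\log(1/\delta)$.

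\textbf{Conversion to Hellinger estimation error.} Combining the two previous steps, the log-loss regret of the aggregate against the true conditional $g_{q^*}$ is at most
\[
O\bigl(\log(2m^2T)\,\mathrm{est}(\mathcal{G},T) + \log^2(2m^2T)\log(1/\delta)\bigr).
\]
Lemma~A.14 of \citet{foster21thestatistical} turns such a bound into a bound on $\sum_t \mathbb{E}_{t-1}\bigl[D_{\mathrm H}^2\bigr]$, losing an additive $2\log(1/\delta)$. Finally, $g_q(\cdot\mid x,a) = \boldsymbol{S}_a q(x)$ and the aggregate is a mixture of the $\boldsymbol{S}_a q(x)$ over $q\in\mathcal{Q}$, so this is exactly $\mathrm{Reg}^{\mathrm H}_{\mathrm{On}}$, which completes the proof.

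The main obstacle is the second step: the KL-to-Hellinger comparison with the lower bound $1/(mT)$, together with the concentration of unbounded log ratios. This step is what produces the $\log(2m^2T)$ and $\log^2(2m^2T)$ factors in the statement.
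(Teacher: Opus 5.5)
Your proof is correct and uses the same construction as the paper, namely Vovk's aggregating algorithm on a uniformly smoothed $\epsilon$-cover of $\mathcal{G}$. The paper invokes Lemma~A.16 of \citet{foster21thestatistical} as a black box, whereas you rebuild that lemma from Vovk's $\log|\mathcal{J}|$ bound, a KL-to-Hellinger comparison under the $1/(mT)$ density floor, Freedman's inequality and Lemma~A.14, and you also make explicit the map back into $\mathrm{co}(\mathcal{Q})$ that the paper leaves implicit; two small repairs are needed. First, the increments $\log\bigl(g_{q^*}(\sigma_t\mid x_t,A_t)/\tilde g^*(\sigma_t\mid x_t,A_t)\bigr)$ are bounded only from above by $\log(mT)$, not in magnitude (since $g_{q^*}(\sigma_t\mid x_t,A_t)$ can be arbitrarily small), but that is all the one-sided Freedman bound needs once you control the variance by $\mathbb{E}_P[\log^2(P/Q)] \le C\log^2(mT)\,D^2_{\mathrm{H}}(P,Q)$; second, the last step is not an exact identity, because your output mixes the preimages $q_j$ rather than the smoothed cover elements, so you must add the $O\bigl(T(\epsilon^2+\alpha)\bigr)=O\bigl(\mathrm{est}(\mathcal{G},T)+1\bigr)$ perturbation obtained from joint convexity of $D^2_{\mathrm{H}}$ and the triangle inequality (you account for the $\alpha$ part but not the $\epsilon$ part).
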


\begin{proof}

	The regularity assumptions required in Assumption A.2 of \citet{foster21thestatistical} are already satisfied by setting $B =  \max \{m, e\} $
	\begin{enumerate}
		\item Number of possible outcome for a context is upper bounded by $m \leq B$
		\item $g_q(\sigma \mid x, a)$ is a probability simplex ($1 \leq B$).
	\end{enumerate}
	
	Smoothed Vovk's algorithm calculate the $g_{\hat{q}_t}$ in the following way,
	let,
	\begin{align*}
		h(\sigma \mid x, a) &= (1 -\alpha) g_q(\sigma \mid x, a) + \frac{\alpha}{m}
	\end{align*}
	then $\forall g_q \in \mathcal{J}$ calculate the a probability simplex $b_t \in \Delta(\lvert \mathcal{J} \rvert)$ as follows
	\begin{align*}
		b_t(g_q) \propto \exp \left( - \sum_{s=1}^{t-1} \ell_{\mathrm{log}}^{(s)} (h) \right)
	\end{align*}
	Estimate $\hat{g}_t$ 
	\begin{align*}
		g_{\hat{q}_t} = \EE_{g_q \sim b_t } \sbr{ g_q}
	\end{align*}
	
	Hence,
	\begin{align*}
		\hat{q}_t = \EE_{q \sim b_t } \sbr{ q}
	\end{align*}

	Then by applying Lemma A.16 from \citet{foster21thestatistical}, with probability of at least $1-\delta$,
	\begin{align}\label{foster_result2}
		\sum_{t=1}^{T} \EE_{t-1} \sbr{D^2_{\mathrm{H}}\left( g_{\hat{q}_t}(\cdot \mid x_t, A_t) \mid g_{q^*}(\cdot \mid x_t, A_t)\right) } \leq O\left(\log(2m^2T) \cdot \mathrm{est}(\mathcal{G}, T ) + \log^2(2m^2T)\log\frac{1}{\delta} \right)
	\end{align}
	
	Let $\hat{q}_t \in \mathcal{Q}$ be the corresponding solution, with probability of at least $1-\delta$, $\hat{q}_t$ satisfies,
	\begin{align*}
		\sum_{t=1}^{T} \EE_{a \sim p_t} \sbr{D^2_{\mathrm{H}}\left( \boldsymbol{S}_a \hat{q}_t(x_t) \mid \boldsymbol{S}_a q^*(x_t)\right) } \leq  O\left(\log(2m^2T) \cdot \mathrm{est}(\mathcal{G}, T ) + \log^2(2m^2T)\log\frac{1}{\delta} \right).
	\end{align*}

	\begin{remark} \label{iidremark2}
		Define filtration $\mathcal{H}$ as,
		\begin{align*}
			\mathcal{H}_{t-1} = \sigma \left( x_1, A_1, \sigma_1, \cdots, x_{t-1}, A_{t-1},\sigma_{t-1} \right)
		\end{align*}
		Equation \eqref{foster_result} also implies that, if $x_1, \cdots, x_T \stackrel{\mathrm{i.i.d.}}{\sim} \mathcal{D}_{x}$ and $p_t, \hat{q}_t$ are $\mathcal{H}_{t-1}$ measurable, then with probability of at least $1-\delta$,
		\begin{align*}
			\sum_{t=1}^{T} \EE_{x \sim \mathcal{D}_{x}, a \sim p_t(\cdot \mid x)} \sbr{D^2_{\mathrm{H}}\left( \boldsymbol{S}_a \hat{q}_t(x) \mid \boldsymbol{S}_a q^*(x)\right) } \leq  O\left(\log(2m^2T) \cdot \mathrm{est}(\mathcal{G}, T ) + \log^2(2m^2T)\log\frac{1}{\delta} \right).
		\end{align*}
	\end{remark}
\end{proof}

\subsection{Offline estimation oracle}\label{app_section_offline_estimation_oracles}

In this section, first we define \textit{offline estimation oracle} and  we provide examples for the offline estimation oracles \(\mathrm{Alg}_{\mathrm{Off}}\) and prove their estimation bounds. 

\begin{definition}[Offline estimation oracle]
	An \emph{offline estimation oracle} $\mathrm{Alg}_{\mathrm{Off}}$ is an algorithm that estimates the unknown
	parameter $q^\ast \in \mathcal{Q}$ from the observed data in a batch fashion. That is, after observing
	$t$ rounds, given the feedback symbols collected so far, the oracle produces an estimate
	\begin{align*}
		\hat{q} = \mathrm{Alg}_{\mathrm{Off}}\!\left(\big\{(A_s, x_s, \sigma_s)\big\}_{s=1}^{t}\right).
	\end{align*}
\end{definition}

\begin{assumption}[Hellinger-closeness of the offline estimation oracle] \label{assumption_hellinger_closeness_offline_contextual}
	For any $t \in \mathbb{N}$, let $x_1, \dots, x_t \overset{\text{i.i.d.}}{\sim} \mathcal{D}_x$ and let $\hat{q}$
	denote the estimate returned by $\mathrm{Alg}_{\mathrm{Off}}$ on the resulting sample. Define $\mathrm{Reg}_{\mathrm{Off}}^{\mathrm{H}}\left( \mathcal{Q}, t \right)$, such that
	\begin{align*}	
	 \EE_{x \sim \mathcal{D}_x,\, a \sim p(\cdot \mid x)}
		\sbr{ D^2_{\mathrm{H}}\big( \boldsymbol{S}_a\hat{q}(x) \,\big\|\, \boldsymbol{S}_a q^\ast(x) \big) } = \frac{\mathrm{Reg}_{\mathrm{Off}}^{\mathrm{H}}\left( \mathcal{Q}, t \right)}{t}
	\end{align*}
	where $p$ denotes the behavioral policy used to collect the sample, and the expectation is taken over a fresh
	draw $(x, a)$, conditionally on $\hat{q}$. We assume that the offline estimation oracle
	$\mathrm{Alg}_{\mathrm{Off}}$ guarantees, for any $\delta \in (0,1)$, that with probability at least $1-\delta$,
	\begin{align*}
		\mathrm{Reg}_{\mathrm{Off}}^{\mathrm{H}}\left( \mathcal{Q}, t \right)
		\leq \overline{\mathrm{Reg}}_{\mathrm{Off}}^{\mathrm{H}}\left( \mathcal{Q}, t, \delta \right),
	\end{align*}
	where $\overline{\mathrm{Reg}}_{\mathrm{Off}}^{\mathrm{H}}\left( \mathcal{Q}, t, \delta \right)$ is a known,
	deterministic upper bound.
\end{assumption}

Having established in Appendix~\ref{app_section_online_oracles} the existence of online oracles with favorable
guarantees for both finite and infinite model classes, we can always obtain offline oracles for both by
online-to-batch conversion. Beyond this generic construction, we also show that the conditional
maximum-likelihood estimator satisfies Assumption~\ref{assumption_hellinger_closeness_offline_contextual} with a
favorable bound $\overline{\mathrm{Reg}}_{\mathrm{Off}}^{\mathrm{H}}(\mathcal{Q}, t, \delta)$.

\newtheorem*{app_lem_mle_offline_estimation_finite_contextual}{Lemma \ref{lem_mle_offline_estimation_finite_contextual}}

\begin{lemma}[Existence of a contextual offline estimation oracle for a finite class]\label{app_lemma_mle_offline_estimation_finite_contextual}
	Suppose that the candidate conditional outcome-model class
	\[
	\mathcal Q
	\subseteq
	\left\{
	q:\mathcal X\to\Delta(d)
	\right\}
	\]
	is finite and realizable, so that \(q^*\in\mathcal Q\). Then there
	exists a contextual offline estimation oracle
	\(\mathrm{Alg}_{\mathrm{Off}}\) such that
	Assumption~\ref{assumption_hellinger_closeness_offline_contextual}
	is satisfied with
	\begin{align*}
		\overline{\mathrm{Reg}}_{\mathrm{Off}}^{\mathrm{H}}\left( \mathcal{Q}, t, \delta \right) 
		=
		2\log\left(\frac{|\mathcal Q|}{\delta}\right).
	\end{align*}
	The contextual offline estimation oracle can be instantiated using the
	conditional maximum-likelihood estimator
	\begin{align*}
		\hat q_t
		\in
		\argmax_{q\in\mathcal Q}
		\sum_{s=1}^{t}
		\log
		\left(
		\left(S_{A_s}q(x_s)\right)[\sigma_s]
		\right).
	\end{align*}
	\begin{proof}
		Let 
		\begin{align*}
			z_s &= (x_s, A_s) \tag{Where $x_s \sim \mathcal{X}$(i.i.d) and $A_s \sim p_s \left( z_{1:s-1}, y_{1:s-1}, x_s\right)$}\\
			y_s &= \sigma_s
		\end{align*}
		Hence,
		\begin{align*}
			z_s \sim  \mathcal{D}_s \left( z_{1:s-1}, y_{1:s-1}\right)  \tag{for some distribution $\mathcal{D}_s$}
		\end{align*}
		
		Define
		\begin{align*}
			f^*(z_s, y_s) &= \PP(y_s \mid z_s ) \\
			&= \PP (\sigma_s \mid A_s, x_s) \\
			&= \PP (\sigma_s \mid S_{A_s} q^*(x_s)) \\
			&=  S_{A_s} q^*(x_s)[\sigma]
		\end{align*}
		
		Furthermore, define
		\begin{align*}
			\mathcal{F} = \{ f: f(z, y) = \boldsymbol{S}_a q(x)[\sigma] \;\; \forall q \in \mathcal{Q}\}
		\end{align*}
		Hence,
		\begin{align*}
			\lvert \mathcal{F} \rvert \leq \lvert \mathcal{Q} \rvert
		\end{align*}
		Following the work of \citet{agarwal2020flambe} (section E)
		
		\begin{align*}
			\hat{f} &:= \argmax_{f \in \mathcal{F}} \sum_{s=1}^{t} \log f(z_s, y_s) \\
			&= \argmax_{q}   \sum_{s=1}^{t} \PP (\sigma_s \mid S_{A_s} q(x_s))
		\end{align*}
		Therefore, let
		\begin{align*}
			\hat{q}_t &= \argmax_q \sum_{s=1}^{t} \log \left( \PP \left( \sigma_s  \mid x_s, A_s, q \right)\right) \\
			&=  \argmax_q \sum_{s=1}^{t} \log \left( S_{A_s} \cdot q(x_s) [\sigma_s ]  \right) \\
		\end{align*}
		then,
		\begin{align*}
			\hat{f}(z, y) =  \boldsymbol{S}_a \hat{q}_t(x)
		\end{align*}
		
		By Theorem 21 of \citet{agarwal2020flambe}, this maximum likelihood solution, satisfies the following,
		\begin{align*}
			\sum_{s=1}^{t} \EE_{z \sim D_s} \big \lVert \hat{f} (z, \cdot) - f^*(z, \cdot )\big\rVert_{\mathrm{TV} }^2  \leq 2 \log \left( \frac{\lvert \mathcal{F} \rvert}{\delta}\right)
		\end{align*}
		
		Although Theorem 21 is stated in terms of squared total variation distance, Remark 23 observes that its proof directly establishes the corresponding squared-Hellinger bound.

		\begin{align*}
			\sum_{s=1}^{t} \EE_{z \sim D_s} \sbr{D^2_{\mathrm{H}} \left(  \hat{f} (z, \cdot) \big\mid  f^*(z, \cdot ) \right) } \leq 2 &\log \left( \frac{\lvert \mathcal{F} \rvert}{\delta}\right) \\
		\end{align*}
		
		Therefore, in our case
		\begin{align*}
			\sum_{s=1}^{t} \EE_{x \sim \mathcal{X}, a \sim p_s } \sbr{D^2_{\mathrm{H}} \left( \boldsymbol{S}_a \cdot \hat{q}_t(x) \big\mid \boldsymbol{S}_a \cdot q^*(x) \right) } \leq 2 &\log \left( \frac{\lvert \mathcal{Q} \rvert}{\delta}\right) \\
		\end{align*}
	\end{proof}
\end{lemma}

\section{Analysis of \edpm}\label{app_section_analysis_e2dpmside}

\subsection{Proof of Theorem~\ref{theorem_doec_regret}}\label{app_section_proof_thoerem_1}

\newtheorem*{app_theorem_doec_regret}{Theorem \ref{theorem_doec_regret}}
\begin{app_theorem_doec_regret}[Regret of \edpm]\label{app_theorem_doec_regret}
	Suppose $\mathrm{Alg}_{\mathrm{On}}$ has a Hellinger regret bound $\overline{\mathrm{Reg}}_{\mathrm{On}}^{\mathrm{H}}\left( \mathcal{Q}, T, \delta \right)$. Fix $\gamma > 0$. Then, for any $\delta \in (0,1)$,  with probability at least $1-\delta$,
\begin{align*}
	\mathrm{Reg}_T
	\leq T \cdot \max_{t=1}^T \mathcal{V}_{\gamma}\big(\hat{q}_t(x_t), p_t \big)
	+ \gamma \cdot \overline{\mathrm{Reg}}_{\mathrm{On}}^{\mathrm{H}}\left( \mathcal{Q}, T, \delta \right).
\end{align*}
\end{app_theorem_doec_regret}
	\begin{proof}
	Let 
	\begin{align*}
		\lambda^*_{x_t} =  \argmin_{p' \in \Delta(\Pi)} (p')^\T   \boldsymbol{L} q^*(x_t) \tag{The best policy for context $x_t$}
	\end{align*}
	
	\begin{align*}
		\mathrm{Reg}_T &= \sum_{t=1}^{T} p_t^\T \boldsymbol{L} q^*(x_t)   - (\lambda^*_{x_t} )^\T   \boldsymbol{L} q^*(x_t)    \\
		&= \sum_{t=1}^{T} p_t^\T \boldsymbol{L} \hat{q}_t(x_t)  -  (\lambda^*_{x_t}  )^\T \boldsymbol{L} \hat{q}_t(x_t) + (\lambda^*_{x_t}  - p_t )^\T \boldsymbol{L} ( \hat{q}_t(x_t) - q^*(x_t)   )  
	\end{align*}	
	
	\begin{align*}
		  \left((\lambda^*_{x_t}  - p_t )^\T \boldsymbol{L} ( \hat{q}_t(x_t) - q^*(x_t)  ) \right)^2 
		&=   \frac{ \left( \left(\lambda^*_{x_t}  - p_t\right)^\T \boldsymbol{L} (\hat{q}_t(x_t) - q^*(x_t)  )\right)^2 }{  \EE_{a \sim p_t} \sbr{D^2_{\mathrm{H}}\big( \boldsymbol{S}_a \hat{q}_t(x_t) \,\big|\, \boldsymbol{S}_a q^*(x_t)\big) } } \cdot \left(  \EE_{a \sim p_t} \sbr{D^2_{\mathrm{H}}\big( \boldsymbol{S}_a \hat{q}_t(x_t) \,\big|\, \boldsymbol{S}_a q^*(x_t)\big) }  \right) \\
		&\leq  \spacex \max_{q_1,q_2 \in \Delta(d)} \frac{ \left( \left(\lambda^*_{x_t}  - p_t\right)^\T \boldsymbol{L} (q_1 - q_2)\right)^2 }{ \EE_{a \sim p_t } \sbr{ D^2_{\mathrm{H}} \big( \boldsymbol{S}_a q_1 \,\big|\, \boldsymbol{S}_a q_2 \big)}  }\cdot \left(  \EE_{a \sim p_t} \sbr{D^2_{\mathrm{H}}\big( \boldsymbol{S}_a \hat{q}_t(x_t) \,\big|\, \boldsymbol{S}_a q^*(x_t)\big) }  \right).
	\end{align*}

	Therefore,
	
	\begin{align*}
		\left((\lambda^*_{x_t}  - p_t )^\T \boldsymbol{L} ( \hat{q}_t(x_t) - q^*(x_t)   ) \right) ^2  &\leq \mathrm{Coverage}(p_t, \lambda^*_{x_t})  \cdot\left(  \EE_{a \sim p_t} \sbr{D^2_{\mathrm{H}}\big( \boldsymbol{S}_a \hat{q}_t(x_t) \,\big|\, \boldsymbol{S}_a q^*(x_t)\big) }  \right)\\
		\left((\lambda^*_{x_t}  - p_t )^\T \boldsymbol{L} ( \hat{q}_t(x_t) - q^*(x_t)   ) \right)   &\leq \sqrt{\mathrm{Coverage}(p_t, \lambda^*_{x_t})  \cdot\left(  \EE_{a \sim p_t} \sbr{D^2_{\mathrm{H}}\big( \boldsymbol{S}_a \hat{q}_t(x_t) \,\big|\, \boldsymbol{S}_a q^*(x_t)\big) }  \right)}\\
		&\leq \frac{1}{\gamma}	\mathrm{Coverage}(p_t, \lambda^*_{x_t})  +  \gamma \left(  \EE_{a \sim p_t} \sbr{D^2_{\mathrm{H}}\big( \boldsymbol{S}_a \hat{q}_t(x_t) \,\big|\, \boldsymbol{S}_a q^*(x_t)\big) } \right).  \tag{AM-GM}\\
	\end{align*}

	\begin{align*}
		\mathrm{Reg}_T &\leq  \sum_{t=1}^{T} \sup_{\lambda \in \Delta(\Pi)} \sbr{p_t^\T \boldsymbol{L} \hat{q}_t(x_t)  -  \lambda^\T  \boldsymbol{L} \hat{q}_t(x_t) +  \frac{1}{\gamma}\mathrm{Coverage}(p_t, \lambda ) }  +   \sum_{t=1}^{T} \gamma \left(  \EE_{a \sim p_t} \sbr{D^2_{\mathrm{H}}\big( \boldsymbol{S}_a \hat{q}_t(x_t) \,\big|\, \boldsymbol{S}_a q^*(x_t)\big) }  \right)  \\
		&\leq  \sum_{t=1}^{T} \sup_{\lambda \in \Delta(\Pi)} \sbr{p_t^\T \boldsymbol{L} \hat{q}_t(x_t)  -  \lambda^\T  \boldsymbol{L} \hat{q}_t(x_t) +  \frac{1}{\gamma} \cdot \mathrm{Coverage}(p_t, \lambda ) } +  \gamma \left( \mathrm{Reg}_{\mathrm{On}}^{\mathrm{H}} \left( \mathcal{Q}, T\right) \right).
	\end{align*}

	Therefore the regret can be bounded as,
	
	\begin{align*}
		\mathrm{Reg}_T &\leq \sum_{t=1}^{T}  \mathcal{V}_{\gamma}\big(\hat{q}_t(x_t), p_t \big) +  \gamma \left( 	\mathrm{Reg}_{\mathrm{On}}^{\mathrm{H}} \left( \mathcal{Q}, T\right)  \right) \\
		&\leq  T \cdot \max_{t=1}^T \mathcal{V}_{\gamma}\big(\hat{q}_t(x_t), p_t \big)  + \gamma \left( 	\mathrm{Reg}_{\mathrm{On}}^{\mathrm{H}} \left( \mathcal{Q}, T\right)  \right).
	\end{align*}
	
	Hence, with probability of at least $1-\delta$, 
	\begin{align*}
		\mathrm{Reg}_T &\leq  T \cdot \max_{t=1}^T \mathcal{V}_{\gamma}\big(\hat{q}_t(x_t), p_t \big) +	\gamma \cdot \overline{\mathrm{Reg}}_{\mathrm{On}}^{\mathrm{H}}\left( \mathcal{Q}, T, \delta \right).
	\end{align*}
	
\end{proof}

Moreover,
\begin{remark}[DEC bound]\label{remark_dec_bound}
	For all $x \in \mathcal{X}, \hat{q} \in \mathcal{Q}, p \in \Delta(d)$,
	\begin{align*}
		\mathcal{V}^{\mathrm{dec}}_{\gamma}(\hat{q}(x), \mathcal{Q}_{x}, p) \leq \mathcal{V}_{\gamma}(\hat{q}(x), p).
	\end{align*}
\end{remark}

\begin{proof}
	\begin{align*}
		\mathcal{V}^{\mathrm{dec}}_{\gamma}(\hat{q}(x), \mathcal{Q}_{x}, p) &=  \sup_{q \in \mathcal{Q}_{x}, \lambda \in \Delta(k)}
		\sbr{ p^\T \boldsymbol{L} q(x) - \lambda^\T \boldsymbol{L} q(x)
			- \gamma \cdot \EE_{a \sim p}\sbr{ D^2_{\mathrm{H}}\big( \boldsymbol{S}_a \hat{q}(x) \,\big|\, \boldsymbol{S}_a q(x) \big) } }\\
		&=  \sup_{q \in \mathcal{Q}_{x}, \lambda \in \Delta(\Pi)}
		\sbr{ p^\T \boldsymbol{L} q(x) - \lambda^\T \boldsymbol{L} q(x)
			- \gamma \cdot \EE_{a \sim p}\sbr{ D^2_{\mathrm{H}}\big( \boldsymbol{S}_a \hat{q}(x) \,\big|\, \boldsymbol{S}_a q(x) \big) } }\\
		&= \sup_{q \in \mathcal{Q}_{x}, \lambda \in \Delta(\Pi)}
		\sbr{ p^\T \boldsymbol{L} \hat{q}(x) - \lambda^\T \boldsymbol{L} \hat{q}(x) + (\lambda  - p )^\T \boldsymbol{L} ( \hat{q}(x) - q(x)   ) 
			- \gamma \cdot \EE_{a \sim p}\sbr{ D^2_{\mathrm{H}}\big( \boldsymbol{S}_a \hat{q}(x) \,\big|\, \boldsymbol{S}_a q(x) \big) } }\\
		&\leq \sup_{q \in \mathcal{Q}_{x}, \lambda \in \Delta(\Pi)}
		\sbr{ p^\T \boldsymbol{L} \hat{q}(x) - \lambda^\T \boldsymbol{L} \hat{q}(x) +  \frac{1}{\gamma}	\mathrm{Coverage}(p, \lambda) }\\
		&= \sup_{ \lambda \in \Delta(\Pi)}
		\sbr{ p^\T \boldsymbol{L} \hat{q}(x) - \lambda^\T \boldsymbol{L} \hat{q}(x) +  \frac{1}{\gamma}	\mathrm{Coverage}(p, \lambda) }\\
		&= \mathcal{V}_{\gamma}(\hat{q}(x), p)
	\end{align*}
\end{proof}

\subsection{Proofs for~\Cref{alg:explore-go} (\egqpm)}\label{app_section_efficient_global}

Recall that in partial monitoring the loss difference between two actions can be expressed in terms of the
feedback from other actions (\Cref{gl_observability_graph,local_observability_graph}), though the two
observability conditions supply that expression differently. The following lemma abstracts both constructions
into a single reference witness $G$ and bounds the coverage in terms of it.

\begin{lemma}\label{lem:coverage-g-ub}
	Suppose there exists a \textit{reference witness function} $G:\mathcal{A}\times \Sigma \to \mathbb{R}^{\mathcal{A}}$, in the sense that for all $ b, c \in \mathcal{A} \subseteq [k]$,
	\begin{align}
		\sum_{a \in \mathcal{A} } G_b(a, \Phi_{ao}) - \sum_{a \in \mathcal{A} }  G_c(a, \Phi_{ao})  = L_{bo} - L_{co} \spacex \forall o \in [d].
		\label{eqn:unbiased}
	\end{align}
	
	where, $G_b$ denotes $b$-th index of $G$. Then $\forall p, \lambda \in \Delta(\mathcal{A})$,
	\[
	\mathrm{Coverage}(p, \lambda ) 
	\leq 
	4 \sum_{a \in \mathcal{A} }  \frac{( \sum_{b \in \mathcal{A} } (p_b - \lambda_b) \| G_b(a, :) \|_\infty )^2}{p_a}
	\leq 
	8 \sum_{a \in \mathcal{A} }  \sum_{b \in \mathcal{A} }  \frac{p_b + \lambda_b}{p_a} \| G_b(a, :) \|_\infty^2
	\]
\end{lemma}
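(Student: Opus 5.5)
Fix $p,\lambda\in\Delta(\mathcal{A})$ and $q_1,q_2\in\Delta(d)$. The plan is to bound the numerator of the coverage ratio, $((\lambda-p)^\T\boldsymbol{L}(q_1-q_2))^2$, by a constant times $\sum_a \frac{(\cdot)^2}{p_a}$ times the denominator, and then take the supremum over $q_1,q_2$.

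\textbf{Step 1: rewrite the loss difference through feedback distributions.} Since $p$ and $\lambda$ are both probability vectors on $\mathcal{A}$, we have $\sum_{b}(p_b-\lambda_b)=0$. Hence we may subtract a fixed reference index $c$ and apply \eqref{eqn:unbiased}:
\begin{align*}
(p-\lambda)^\T \boldsymbol{L} e_o=\sum_b (p_b-\lambda_b)(L_{bo}-L_{co})=\sum_{b}(p_b-\lambda_b)\sum_{a\in\mathcal{A}} G_b(a,\Phi_{ao}).
\end{align*}
Next write $G_b(a,\Phi_{ao})=\sum_\sigma G_b(a,\sigma)\boldsymbol{S}_a[\sigma,o]$ and take expectation over $o\sim q_1$ and $o\sim q_2$. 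This gives
\begin{align*}
(p-\lambda)^\T\boldsymbol{L}(q_1-q_2)=\sum_{a}\Big\langle g_a,\ \boldsymbol{S}_a q_1-\boldsymbol{S}_a q_2\Big\rangle,
\qquad g_a(\sigma):=\sum_b (p_b-\lambda_b)G_b(a,\sigma).
\end{align*}
The difference of two distributions annihilates constants. So each term satisfies $|\langle g_a,u-v\rangle|\le \|g_a\|_\infty\|u-v\|_1$, and we also have $\|g_a\|_\infty\le \sum_b |p_b-\lambda_b|\,\|G_b(a,:)\|_\infty$.

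\textbf{Step 2: pass from $\ell_1$ to Hellinger and apply Cauchy--Schwarz.} We use $\|u-v\|_1\le 2\,D_{\mathrm{TV}}$-type bounds, specifically $\|u-v\|_1^2\le 4 D^2_{\mathrm{H}}(u|v)$ (up to the normalization convention for Hellinger, which I would fix so that the constant is $4$). Writing $h_a:=D^2_{\mathrm{H}}(\boldsymbol{S}_aq_1|\boldsymbol{S}_aq_2)$, Cauchy--Schwarz with weights $p_a$ gives
\begin{align*}
\Big(\sum_a \|g_a\|_\infty \cdot 2\sqrt{h_a}\Big)^2\le 4\Big(\sum_a \frac{\|g_a\|_\infty^2}{p_a}\Big)\Big(\sum_a p_a h_a\Big).
\end{align*}
Dividing by $\EE_{a\sim p}[h_a]$ and taking the supremum yields the first inequality. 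The $0/0$ convention needs a short separate check. If the denominator vanishes, then every action with $p_a>0$ gives $h_a=0$, and actions with $p_a=0$ contribute $\infty$ on the right unless $\|g_a\|_\infty=0$. The bound is therefore trivially true in either case.

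\textbf{Step 3: expand the square.} The second inequality follows from $(\sum_b |p_b-\lambda_b|\,x_b)^2\le(\sum_b|p_b-\lambda_b|)\sum_b|p_b-\lambda_b|x_b^2$, which is Cauchy--Schwarz with weights $|p_b-\lambda_b|$. We then use $\sum_b|p_b-\lambda_b|\le 2$ and $|p_b-\lambda_b|\le p_b+\lambda_b$, giving the factor $8$.

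\textbf{Main obstacle.} I expect the main obstacle to be the care needed around Step 1. The first statement of the bound writes $(p_b-\lambda_b)$ rather than its absolute value, so I will interpret that expression with absolute values; this is harmless, since the bound I actually prove is stronger in the direction needed. I will also need to confirm the Hellinger normalization used in the paper so that the constant $4$ is correct.
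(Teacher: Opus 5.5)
Your proposal is correct and follows essentially the paper's route. The steps are the same: the witness identity obtained from $\sum_b(p_b-\lambda_b)=0$ (the paper gets the same identity by summing with weights $p_b\lambda_c$), H\"older per action, $\|u-v\|_1^2\le 4D^2_{\mathrm{H}}(u\,|\,v)$ from Lemma~\ref{dist_ineq} (which fixes the normalization you were unsure of), Cauchy--Schwarz with weights $p_a$, and an elementary splitting of the square. There your weighted Cauchy--Schwarz with $\sum_b|p_b-\lambda_b|\le 2$ replaces the paper's $(x+y)^2\le 2x^2+2y^2$ plus Jensen, with the same constant $8$.

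One correction to your closing remark: the absolute-value version you prove is \emph{weaker}, not stronger, than the literal first inequality. The literal version is in fact false. Take a two-action full-information game with $L_{11}=L_{22}=0$, $L_{12}=L_{21}=1$, and $G_b(1,\sigma)=L_{b\sigma}$, $G_b(2,\cdot)=0$. Every $\|G_b(a,:)\|_\infty$ is independent of $b$, so the literal right-hand side is $0$, while $p=(\tfrac12,\tfrac12)$, $\lambda=e_1$ gives coverage at least $\tfrac12$. So the absolute values are genuinely needed. The paper's proof likewise only establishes that version, and the second inequality is unaffected.
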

\begin{proof}
		First, since $D^2_{\mathrm{H}}\big( u \,\big|\, v \big) \geq \frac{1}{4} \| u - v \|_1^2$ (Lemma~\ref{dist_ineq}), we have, 
		\[
		\mathrm{Coverage}(p, \lambda)
		= \sup_{q_1, q_2 \in \Delta(d)}
		\frac{ \left( \left(\lambda - p\right)^\T \boldsymbol{L} \left(q_1 - q_2\right) \right)^2 }
		{  \EE_{a \sim p} \sbr{ D^2_{\mathrm{H}}\big( \boldsymbol{S}_a q_1 \,\big|\, \boldsymbol{S}_a q_2 \big) } }
		\leq 
		4 \sup_{q_1, q_2 \in \Delta(d)}
		\frac{ \left( \left(\lambda - p\right)^\T \boldsymbol{L} \left(q_1 - q_2\right) \right)^2 }
		{  \sum_{a  \in \mathcal{A}}  \| \boldsymbol{S}_a (q_1 - q_2) \|_1^2 }
		\]

	Taking a weighted summation of \Cref{eqn:unbiased} over $b, c$ with weight $p_b \lambda_c$ gives,
	
	\[
	(p-\lambda)^T ( \sum_a G(a, \Phi_{ao}) - L_{:,o} ) = 0, \forall o
	\]
	And thus, 
	\begin{align*}
		(p-\lambda)^T L (q_1 - q_2) 
		= &
		\sum_o (p-\lambda)^T \sum_{a \in \mathcal{A}} G(a, \Phi_{ao}) (q_1 - q_2)_o \\
		= & 
		\sum_o (p-\lambda)^T \sum_{a  \in \mathcal{A} }\sum_\sigma G(a, \sigma) (\boldsymbol{S}_a)_{\sigma,o} (q_1 - q_2)_o  \\
		= & (p-\lambda)^T \sum_\sigma 
		\sum_{a  \in \mathcal{A}} G(a,\sigma) (\boldsymbol{S}_a (q_1 - q_2) )_\sigma \\
		\leq & 
			\sum_{b  \in \mathcal{A} } \sum_\sigma 
			\sum_{a \in \mathcal{A}} (p_b - \lambda_b) G_b(a,\sigma)  (\boldsymbol{S}_a (q_1 - q_2) )_\sigma
	\end{align*}
	And therefore, exchanging the summation by moving 
		$\sum_\sigma$ to the innermost, 
		and by triangle inequality,
		$| \sum_{\sigma} G_b(a, \sigma) \boldsymbol{S}_a(q_1 - q_2)_\sigma | \leq \| G_b(a, :) \|_\infty \| \boldsymbol{S}_a(q_1 - q_2) \|_1
		$; this implies that
	\[
	\abs{ (p-\lambda)^T L (q_1 - q_2) }
	\leq 
	\sum_{a \in \mathcal{A}} \sum_{b \in \mathcal{A}} |p_b - \lambda_b| \| G_{b}(a,:) \|_\infty \| \boldsymbol{S}_a(q_1 - q_2) \|_1.
	\]
	The first inequality of the lemma now follows from the definition of coverage and Cauchy-Schwarz inequality.
	
	The second inequality follows from that 
	\begin{align*}
		(\sum_{b \in \mathcal{A}} (p_b - \lambda_b) \| G_b(a, :) \|_\infty )^2
		\leq &
		2 (\sum_{b \in \mathcal{A} } p_b \| G_b(a, :) \|_\infty )^2 + 2 (\sum_{b  \in \mathcal{A}} \lambda_b \| G_b(a, :) \|_\infty )^2
		\\
		\leq &
		2 \sum_{b \in \mathcal{A}} p_b \| G_b(a, :) \|_\infty^2 + 2 \sum_{b \in \mathcal{A}} \lambda_b \| G_b(a, :) \|_\infty^2
	\end{align*}
\end{proof}

In \Cref{lem:coverage-g-ub} the indices $b$ and $c$ range over a set $\mathcal{A}$, which we take to be $[k]$
for globally observable games and $\Pi$ for locally observable games. In the globally observable case the
behavioral policy places mass on every action, so the sums must run over all of $[k]$; in the locally
observable case both policies lie in $\Delta(\Pi)$ and the witness vanishes outside $\Pi$, so the restriction
is without loss.

\newtheorem*{app_lemma_globally_observable_doec}{Lemma \ref{lemma_globally_observable_doec}}

\begin{app_lemma_globally_observable_doec}[Guarantee for Alg.~\ref{alg:explore-go} for globally observable games]\label{app_lemma_globally_observable_doec}
	Let $\mathcal{G} = (\boldsymbol{L}, \boldsymbol{\Phi})$ be a globally observable partial monitoring game in the
	sense of Definition~\ref{gl_observability_graph}, and set
	$\beta := \max_{(a,b) \in [k] \times [k]} \lVert w_{(a,b)} \rVert_{\infty}$. 
	We have for any $\hat{q} \in \Delta(d)$,
	\[
	\mathcal{V}_{\gamma}\big(\hat{q}, p \big)
	\leq 
	\max_{\lambda \in \Delta(\Pi)} p^T \boldsymbol{L} \hat{q} - \lambda^T \boldsymbol{L} \hat{q}
	+ \frac{16\beta^2}{\gamma}  \sum_{c=1}^k \frac{1}{p_c}.
	\]
	
	Thus, for
	every $\gamma > 16 k^2 \beta^2$, and every $\hat{q} \in \Delta(d)$, Alg.~\ref{alg:explore-go} outputs $p$ such that
	\begin{align*}
		\mathcal{V}_{\gamma}\big(\hat{q}, p \big)
		\leq O\!\left( k\beta \sqrt{\frac{1}{\gamma}} \right).
	\end{align*}
\end{app_lemma_globally_observable_doec}
\begin{proof}
	For globally observable games, we can construct \textit{reference witness function} by simply setting, for all $b \in [k]$
	
	\begin{align*}
		G_b(\cdot, \cdot) = w_{b, a_0} (\cdot, \cdot) \tag{for some reference action $a_0$}
	\end{align*}

	Hence, by Lemma~\ref{lem:coverage-g-ub},
	\begin{align*}
		\mathrm{Coverage}(p, \lambda) &\leq 16 \max_{(a,b) \in [k] \times [k]} \lVert w_{(a,b)} \rVert_{\infty}^2 \sum_{c=1}^k \frac{1}{p_c} \\
		&= 16 \beta^2 \sum_{c=1}^k \frac{1}{p_c}.
	\end{align*}

This finishes the proof of the first part of the lemma.
	
	Let $\hat{a}$ be the empirical best arm based on our current estimate $\hat{q}$. That is,
	\begin{align*}
		\hat{a} := \argmin_{a \in \Pi} e_{a}^\T \boldsymbol{L} \hat{q}(x).
	\end{align*}
	
	Set,
	
	\begin{align*}
		p = (1-\epsilon) e_{\hat{a}} + \epsilon \frac{\boldsymbol{1}}{k},
	\end{align*}
	
	so that $p_c \geq \frac{\epsilon}{k}$ for every $c \in [k]$, hence
	\begin{align*}
		\sum_{c=1}^k \frac{1}{p_c} \leq \frac{k^2}{\epsilon}.
	\end{align*}
	
	\begin{align*}
		\mathcal{V}_{\gamma}\big(\hat{q}(x), p \big) &= \sup_{\lambda \in \Delta(\Pi)}
		\sbr{ p^\T \boldsymbol{L} \hat{q}(x) - \lambda^\T \boldsymbol{L} \hat{q}(x)
			+ \frac{1}{\gamma} \cdot \mathrm{Coverage}(p, \lambda) } \\
		&\leq p^\T \boldsymbol{L} \hat{q}(x) - e_{\hat{a}}^\T \boldsymbol{L} \hat{q}(x)
		+ \frac{1}{\gamma}  16 \beta^2  \sum_{c=1}^k \frac{1}{p_c}  \\
		&\leq \left((1-\epsilon) e_{\hat{a}} + \epsilon \frac{\boldsymbol{1}}{k}\right)^\T \boldsymbol{L} \hat{q}(x) - e_{\hat{a}}^\T \boldsymbol{L} \hat{q}(x)
		+ \frac{16k^2\beta^2}{\gamma \epsilon}  \\
		&\leq \epsilon \, \tfrac{1}{k}\boldsymbol{1}^\T \boldsymbol{L} \hat{q}(x)
		+ \frac{16k^2\beta^2}{\gamma \epsilon}   \\
		&\leq \alpha + \frac{16k^2\beta^2}{\gamma  \epsilon}  \tag{$\boldsymbol{L} \in [0,1]^{k \times d}$} \\
		&= O\!\left( k\beta \sqrt{\frac{1}{\gamma}} \right). \tag{by choosing $\epsilon =  k \beta \sqrt{\frac{1}{\gamma}} < 1$}	
	\end{align*}
This finishes the proof of the second part of the lemma.
\end{proof}

\subsection{Proofs for~\Cref{alg:explore-lo} (\sqpm)}\label{app_section_efficient_local}
We can choose a spanning in-tree over the neighborhood graph $(\Pi, \mathcal{E})$. That is, assign an arbitrary action $\mathrm{root}_{\mathcal{T}}$ as the root and choose an in-tree $\mathcal{T}$ such that, there is a unique directed $\mathrm{path}_{\mathcal{T}}(a)$ from any action $a \in \Pi$ to $\mathrm{root}_{\mathcal{T}} \in \Pi$ (See Figure~\ref{fig:in-tree}). Now, define \textit{witness reference function} $G_{\mathcal{T}} : [k] \times \Sigma \to \mathbb{R}^{ \Pi }$ $\forall a \in \Pi$ such that,
\begin{align}\label{app_eq_g_function}
	G^{\mathcal{T}}_a(c, \sigma) &:= \sum_{e \in \mathrm{path}_{\mathcal{T}}(a)} w_{e}(c, \sigma),
	\qquad a \in \Pi,
\end{align}
where $w_e$ denotes the witness function associated with the edge $e$ by local observability
(Definition~\ref{local_observability_graph}). Here it is noteworthy that, let $e = (e_1,e_2)$ then $w_e(c, \sigma) = 0$ $\forall c \notin \{e_1,e_2\}$. Therefore,
\begin{align}\label{g_sparsity}
	G^{\mathcal{T}}_a(c, \sigma) = 0 \text{ if } c \notin \mathrm{path}_{\mathcal{T}}(a).
\end{align}
Therefore, the local observability condition induces a sparse $G^{\mathcal{T}}_a(\cdot, \sigma)$ function, which will become handy for us to allocate probability only for the actions populating it.
Furthermore, we can also show that, $\forall a \in \Pi$
\begin{align}\label{equation_g_accumulation}
	\sum_{c \in \Pi} G^{\mathcal{T}}_a(c, \Phi_{co}) = \sum_{c \in \Pi}  \sum_{e \in \mathrm{path}_{\mathcal{T}}(a)} w_{e}(c, \Phi_{co}) =   \sum_{e \in \mathrm{path}_{\mathcal{T}}(a)} \sum_{c \in \Pi} w_{e}(c, \Phi_{co}) = L_{ao} - L_{\mathrm{root}_{\mathcal{T}}o}
\end{align}
Hence, \textit{witness reference function} $G_{\mathcal{T}}$ satisfies the conditions of Lemma~\ref{lem:coverage-g-ub}. Hence,

Now we relax $\mathcal{V}_{\gamma}\big(\hat{q}, p \big)$ with sparse $G^{\mathcal{T}}_a(\cdot, \sigma)$.

\newtheorem*{app_lemma_locally_observable_doec_part_1}{Lemma \ref{lemma_locally_observable_doec_part_1}}

\begin{app_lemma_locally_observable_doec_part_1}\label{app_lemma_locally_observable_doec_part_1}
	Let $\mathcal{G} = (\boldsymbol{L}, \boldsymbol{\Phi})$ be a locally observable partial monitoring game in the
sense of Definition~\ref{local_observability_graph}.
Let $\cT$ be an in-tree of the neighborhood graph and $G^\cT$ be its associated \textit{witness reference function}. For all $p \in \Delta(\Pi)$, 
we have 
\begin{align}
	\mathcal{V}_{\gamma}\big(\hat{q}(x), p \big)
	\leq & 
	\max_{\lambda \in \Delta(\Pi)} p^T \boldsymbol{L} \hat{q}(x) - \lambda^T \boldsymbol{L} \hat{q}(x)
	+ \frac{8}{\gamma} \sum_{b \in \Pi }\sum_{c \in \Pi} \left( \frac{\lambda_b}{ p_c }+  \frac{p_b}{p_c} \right)\lVert G^{\mathcal{T}} (c, \cdot)_b \rVert^2_{\infty} \nonumber \\
	\leq &
	\max_{\lambda \in \Delta(\Pi)} p^T \boldsymbol{L} \hat{q}(x) - \lambda^T \boldsymbol{L} \hat{q}(x)
	+ \frac{32}{\gamma} \max_{e \in \mathcal{E}} \lVert w_e \rVert^2_{\infty} \sum_{(b \in \Pi,c \in \Pi):c \in \mathrm{path}_{\mathcal{T}}(b) } \left( \frac{\lambda_b}{ p_c }+  \frac{p_b}{p_c} \right).\nonumber
\end{align}
\end{app_lemma_locally_observable_doec_part_1}
\begin{proof}
	
	By Lemma~\ref{lem:coverage-g-ub},
	
	\begin{align*}
		\mathrm{Coverage}(p, \lambda) &\leq  8\sum_{b \in \Pi }\sum_{c \in \Pi} \left( \frac{\lambda_b}{ p_c }+  \frac{p_b}{p_c} \right)\lVert G^{\mathcal{T}}_b(c, \cdot ) \rVert^2_{\infty}.
	\end{align*}
	
	This finishes the proof of the first inequality of the lemma.
	
	Furthermore, we have
	\begin{align*}
		G^{\mathcal{T}}_b(c, \cdot ) = \sum_{e \in \mathrm{path}_{\mathcal{T}}(b)} w_{e}(c, \cdot),
	\end{align*}
	Let $e = (e_1, e_2)$, then by Definition~\ref{local_observability_graph}, $w_e(c, \cdot ) = 0 \;\; \forall c\neq e_1, e_2$.
	Hence,
	\begin{align*}
		\mathrm{Coverage}(p, \lambda ) &\leq   8\sum_{b \in \Pi }\sum_{c \in \Pi} \left( \frac{\lambda_b}{ p_c }+  \frac{p_b}{p_c} \right)\lVert G^{\mathcal{T}}_b(c, \cdot ) \rVert^2_{\infty} \\
		&= 8\sum_{b \in \Pi }\sum_{c \in \Pi} \left( \frac{\lambda_b}{ p_c }+  \frac{p_b}{p_c} \right)\lVert  \sum_{e \in \mathrm{path}_{\mathcal{T}}(b)} w_{e}(c, \cdot)\onec{c \in \{e_1, e_2\}} \rVert^2_{\infty} \\
		&= 8\max_{e \in \mathcal{E}} \lVert w_e \rVert^2_{\infty}  \sum_{b \in \Pi }\sum_{c \in \Pi} \left( \frac{\lambda_b}{ p_c }+  \frac{p_b}{p_c} \right)\lVert  \sum_{e \in \mathrm{path}_{\mathcal{T}}(b)}  \onec{c \in \{e_1, e_2\}} \rVert^2_{\infty}\\
		&=  32 \max_{e \in \mathcal{E}} \lVert w_e \rVert^2_{\infty}  \sum_{(b,c): c  \in \mathrm{path}_{\mathcal{T}}(b)}  \left( \frac{\lambda_b}{ p_c }+  \frac{p_b}{p_c} \right) \tag{one vertex will be a part of maximum 2 edges in an in-tree path}\\
		&\leq 32 \max_{e \in \mathcal{E}} \lVert w_e \rVert^2_{\infty} \sum_{(b \in \Pi,c \in \Pi): c  \in \mathrm{path}_{\mathcal{T}}(b)} \left( \frac{\lambda_b}{ p_c }+  \frac{p_b}{p_c} \right) \\
	\end{align*}
	This finishes the proof of the second inequality of the lemma.
\end{proof}

Above result is true for any in-tree with an arbitrary root. However, in general, we do not want the paths to be arbitrary: for a ``bad'' action $c$ and a ``good'' action $b$ (i.e., $e_c^\T L \hat{q}$ is large while $e_b^\T L \hat{q}$ is small), we would like $p_c$ to be small and $p_b$ to be large to ensure $p^T \boldsymbol{L} \hat{q}(x) - \lambda^T \boldsymbol{L} \hat{q}(x)$ to be small; however, if $c$ happens to lie in $\mathrm{path}_{\mathcal{T}}(b)$, that can cause the $\frac{\lambda_b}{p_c}$ term to be very large. Hence,

\begin{enumerate}
	\item We need to choose an in-tree $\mathcal{T}$, such that the path to any action $a$ have actions with smaller estimated losses, such that assigning higher probability for them is both informative and rewarding.
	\item We need to find a mechanism to allocate probability along this in-tree $\mathcal{T}$ respecting 1), such that both $p^T \boldsymbol{L} \hat{q}(x) - \lambda^T \boldsymbol{L} \hat{q}(x)$ and $ \frac{32}{\gamma} \max_{e \in \mathcal{E}} \lVert w_e \rVert^2_{\infty} \sum_{(b \in \Pi,c \in \Pi):c \in \mathrm{path}_{\mathcal{T}}(b) } \left( \frac{\lambda_b}{ p_c }+  \frac{p_b}{p_c} \right)$ are controlled.
\end{enumerate}

The best solution for 1) is an $\boldsymbol{L}\hat{q}$-decreasing in-tree. We define $\mathcal{T}$-decreasing in-tree as follows, 

\begin{definition}[$\mathcal{T}$-decreasing in-tree \citep{lattimore2020exploration}]\label{definition_decreasing_tree}
	Let $\mathcal{T}$ be an in-tree over $[\Pi]$. A vector $y \in \mathbb{R}^\Pi$ is called $\mathcal{T}$-decreasing if $y_a \geq y_b$ for all directed edge $(a,b) \in \mathcal{T}$. That is, $a \mapsto y_a$ is decreasing towards the root of $\mathcal{T}$.
\end{definition}

Basically, an $\boldsymbol{L}\hat{q}$-decreasing in-tree $\mathcal{T}$ will have the empirical best action ($\argmin_{a \in \Pi} e_a^\T \boldsymbol{L}\hat{q}(x)$) as the root and for every directed edge $e = (a,b)$, $e_a^\T \boldsymbol{L}\hat{q}(x) \geq e_b^\T \boldsymbol{L}\hat{q}(x)$. Such a $\cT$ is guaranteed to exist by~\cite{lattimore2020exploration} (Lemma 19).

For 2), we observe that, inverse-gap-weighting (IGW), by construction, assigns probability in an increasing fashion toward the root. Now we provide the proof for Lemma~\ref{lemma_locally_observable_doec}.

\newtheorem*{app_lemma_locally_observable_doec}{Lemma \ref{lemma_locally_observable_doec}}

\begin{app_lemma_locally_observable_doec}[Guarantee for Alg.~\ref{alg:explore-lo} for locally observable games]\label{app_lemma_locally_observable_doec}
    Let $\mathcal{G} = (\boldsymbol{L}, \boldsymbol{\Phi})$ be a locally observable partial monitoring game in the
sense of Definition~\ref{local_observability_graph}. For
every $\gamma > 0$, and every $\hat{q} \in \Delta(d)$, Alg.~\ref{alg:explore-go} run with $\kappa = \frac{\gamma}{8k m^2}$ and $\mu = k$, outputs $p$ such that
\begin{align*}
	\mathcal{V}_{\gamma}\big(\hat{q}, p \big)
	\leq O\!\left(  \frac{m^2 k^2}{\gamma} \right).
\end{align*}
\end{app_lemma_locally_observable_doec}

	\begin{proof}
		Recall that we define in Alg.~\ref{alg:explore-lo} that, $\hat{b} = \argmin_{a \in \Pi} e_a^\T\boldsymbol{L}\hat{q}(x)$ and $\forall a \in \Pi \setminus \{\hat{b}\}$, $p(a) = \frac{1}{\mu + \kappa \left( e_a  - e_{\hat{b}} \right)^\T L \hat{q}(x) } $ and $p (\hat{b}) = 1 -\sum_{a \in \Pi\setminus\{\hat{b}\}} p(a)$.

	We choose $\cT$ to be $\boldsymbol{L} \hat{q}(x)$-decreasing in-tree in the neighborhood graph;  
     we have that for any $(b,c)$ such that $c \in \mathrm{path}_{\mathcal{T}}(b)$, $p_c \geq p_b$. This is true because for any $c \in \mathrm{path}(b)$ of $\boldsymbol{L} \hat{q}(x)$-decreasing in-tree, estimated loss of $c$ is lower than estimated loss of $b$. Hence, by construction of IGW based policy $p$,  $p_c \geq p_b$.

By Lemma~\ref{lemma_locally_observable_doec_part_1} we have,
	\begin{align*}
	\mathrm{Coverage}(p, \lambda ) &\leq 32 \max_{e \in \mathcal{E}} \lVert w_e \rVert^2_{\infty} \sum_{(b,c): c  \in \mathrm{path}_{\mathcal{T}}(b)} \left( \frac{\lambda_b}{ p_c }+  \frac{p_b}{p_c} \right).
\end{align*}
Since for every $c$ in $\mathrm{path}_\cT(b)$, $p_c \geq p_b$

\begin{align*}
	\mathrm{Coverage}(p, \lambda) &\leq 32 \max_{e \in \mathcal{E}} \lVert w_e \rVert^2_{\infty} \sum_{b \in \Pi} \left( \frac{\lambda_b}{ p_b }+  \frac{p_b}{p_b} \right)\\
	&\leq 32 \max_{e \in \mathcal{E}} \lVert w_e \rVert^2_{\infty} k^2 + 32 \max_{e \in \mathcal{E}} \lVert w_e \rVert^2_{\infty} k \sum_{b \in \Pi} \frac{\lambda_b}{ p_b } \\
\end{align*}

	Moreover,
	\begin{align*}
		&\sup_{\lambda \in \Delta(\Pi)} \Bigg\{ p^\top \boldsymbol{L}\hat{q}(x) - \lambda^\top \boldsymbol{L}\hat{q}(x) + \frac{1}{\gamma}\cdot 32 \max_{e \in \mathcal{E}} \lVert w_e \rVert^2_{\infty} k \sum_{b \in \Pi} \frac{\lambda_b}{ p_b } \Bigg\} \\
		&\spacex \leq \sup_{\lambda \in \Delta(\Pi)}  \sbr{ \sum_{b \in \Pi} \left(p_b - \lambda_b\right)  \boldsymbol{L}_{b,:} \hat{q}(x)  +  \frac{1}{\gamma}\cdot 32 \max_{e \in \mathcal{E}} \lVert w_e \rVert^2_{\infty} k \sum_{b \in \Pi} \lambda_b \left(\mu + \kappa \left( e_b  - e_{\hat{b}} \right)^\T L \hat{q}(x) \right) } \\
	\end{align*}
	
	In the above inequality, we use the observation that $p_{\hat{b}}$ is lower bounded by $\frac{1}{k}$. This is true, since $\mu = k$.

	\begin{align*}
		&\sup_{\lambda \in \Delta(\Pi)} \Bigg\{ p^\top \boldsymbol{L}\hat{q}(x) - \lambda^\top \boldsymbol{L}\hat{q}(x) + \frac{1}{\gamma}\cdot 32 \max_{e \in \mathcal{E}} \lVert w_e \rVert^2_{\infty} k \sum_{b \in \Pi} \frac{\lambda_b}{ p_b } \Bigg\} \\
		&\spacex \leq  \sup_{\lambda \in \Delta(\Pi)}  \sbr{ \sum_{b \in \Pi} \left(p_b - \lambda_b\right)  \boldsymbol{L}_{b,:} \hat{q}(x)  +  \frac{1}{\gamma}\cdot 32 \max_{e \in \mathcal{E}} \lVert w_e \rVert^2_{\infty} k^2 +  \sum_{b \in \Pi} \lambda_b   \left( e_b  - e_{\hat{b}} \right)^\T L \hat{q}(x)} \tag{$\mu = k$, $\kappa = \frac{\gamma}{ 32 \max_{e \in \mathcal{E}} \lVert w_e \rVert^2_{\infty} k}$}\\
	\end{align*}
	
	Furthermore,
	
	\begin{align*}
		&\sup_{\lambda \in \Delta(\Pi)}  \sbr{ \sum_{b \in \Pi} \left(p_b - \lambda_b\right)  \boldsymbol{L}_{b,:} \hat{q}(x) +  \sum_{b \in \Pi} \lambda_b   \left( e_b  - e_{\hat{b}} \right)^\T L \hat{q}(x)}\\
		 &\spacex =  \sum_{b \in \Pi} p_b \cdot  e_b^\T \boldsymbol{L} \hat{q}(x) - e_{\hat{b}}^\T \boldsymbol{L} \hat{q}(x)  \\
		 &\spacex =  \sum_{b \in \Pi\setminus \{\hat{b}\}} p_b \cdot  e_b^\T \boldsymbol{L} \hat{q}(x) - e_{\hat{b}}^\T \boldsymbol{L} \hat{q}(x)  \\
		 &\spacex=  \sum_{b \in \Pi\setminus \{\hat{b}\}}  \frac{1}{\mu + \kappa \left( e_a  - e_{\hat{b}} \right)^\T L \hat{q}(x) } \cdot   e_b^\T \boldsymbol{L} \hat{q}(x) - e_{\hat{b}}^\T \boldsymbol{L} \hat{q}(x)  \\
		 &\spacex \leq \frac{k}{\kappa} = \frac{1}{\gamma}\cdot 32 \max_{e \in \mathcal{E}} \lVert w_e \rVert^2_{\infty} k^2
	\end{align*}

	Hence, $\forall \hat{q} \in \Delta(d)$
	\begin{align*}
		\mathcal{V}_{\gamma}\big(\hat{q}, p \big)  &\leq   \frac{1}{\gamma}\cdot 96 \max_{e \in \mathcal{E}} \lVert w_e \rVert^2_{\infty} k^2.
	\end{align*}

	Furthermore, revisiting Definition~\ref{local_observability_graph},
	
	\begin{align*}
		\boldsymbol{L}_{ao} - \boldsymbol{L}_{bo} &= \sum_{c \in \{a,b\}} w_e(c,\Phi_{co}),
		\qquad \text{for all } o \in [d].\\
		&= w_e(a,\Phi_{co}) + w_e(b,\Phi_{co}).
	\end{align*}
	
	By Lemma~\ref{lem:bounds}, one can always find, a witness function such that, $\Vert w_e \rVert_\infty \leq \frac{m}{2}$, Hence, $\forall \hat{q} \in \Delta(d)$
	
	\begin{align*}
		\mathcal{V}_{\gamma}\big(\hat{q}, p \big)  &= O\left( \frac{m^2 k^2 }{\gamma}\right).
	\end{align*}
	This completes the proof.
\end{proof}

\subsection{Non-degeneracy assumption}\label{relax_non_degeneracy}

Definition~\ref{local_observability_graph} is applicable only when the game is non-degenerate. When there is degenerate action in $[k]$, the definition for local observability becomes,

\begin{definition}[Local observability, general form; \citet{bartokICML2012}]
	\label{local_observability_graph_degenerate}
	A globally observable partial monitoring game is \emph{locally observable} if for every edge
	$e = (a,b) \in \mathcal{E}$ of the neighborhood graph the witness function $w_e$ of
	Definition~\ref{gl_observability_graph} may be chosen supported on
	\begin{align*}
		N^{+}_{a,b} := \{ c \in [k] : \mathcal{C}_{a} \cap \mathcal{C}_{b} \subseteq \mathcal{C}_{c} \},
	\end{align*}
	that is, $w_e(c, \cdot) \equiv 0$ for all $c \notin N^{+}_{a,b}$, so that
	\begin{align*}
		L_{ao} - L_{bo} = \sum_{c \in N^{+}_{a,b}} w_e(c, \Phi_{co}),
		\qquad \text{for all } o \in [d].
	\end{align*}
\end{definition}

The set $N^{+}_{a,b}$ contains the two endpoints, but may also contain degenerate actions whose cells lie
inside the shared face $\mathcal{C}_a \cap \mathcal{C}_b$. For $d = 3$, for instance, a degenerate action with
$\dim(\mathcal{C}_c) = 1$ is optimal only along a line segment, whereas a Pareto optimal action is optimal on a
two-dimensional region, yet such an action can still be a necessary source of information.

Two parts of our development break under this more general condition. First, the exploration rule must assign
probability to the degenerate actions as well, which requires a modification of \sqpm. Second,
Lemma~\ref{lem:bounds} no longer applies and must be replaced by Proposition 21 of \citet{lattimore2020exploration} (see Lemma~\ref{lem:bounds-global} for restatement), which introduces a exponential factor in $d$. 
The modification to the
algorithm is straightforward, and we conjecture that \sqpm continues to guarantee $O(\sqrt{T})$ regret, but the
analysis requires care. We leave this extension to future work.

\section{Contextual partial monitoring: reduction to offline regression}\label{app_section_offline}

For definitions and examples of offline outcome estimation oracles, Please see Section~\ref{app_section_offline_estimation_oracles}.

\subsection{\oedpm}\label{app_section_oe2dpmside}

The bounds on $\mathcal{V}_{\gamma}(\hat{q}(x), p(x))$ established for globally observable games
(Lemma~\ref{lemma_globally_observable_doec}) and for locally observable games
(Lemma~\ref{lemma_locally_observable_doec}), under the exploration strategies \egqpm and \sqpm respectively, remain
valid in the offline setting, since both hold for every $\hat{q} \in \Delta(d)$ rather than for a particular
estimate.

What does not carry over is the regret decomposition of Theorem~\ref{theorem_doec_regret}. Inspecting the guarantee of
Assumption~\ref{assumption_hellinger_closeness_offline_contextual}, an offline oracle's Hellinger bound makes two demands
of the sampling policy: it must be held fixed over the rounds to which the bound is applied, and it must be the
policy that collected the data from which $\hat{q}$ was estimated.

The first is obtained by freezing the policy within each epoch (Line~\ref{lin_estimate_freeze}). The second is
where coverage does the work. Because $\mathrm{Coverage}(p, \lambda)$ is a supremum over pairs of models in the
class and never references $\hat{q}$, a policy with small coverage distinguishes \emph{any} two members of
$\mathcal{Q}$, in particular $\hat{q}$ and $q^{\ast}$, whichever estimate the oracle happens to
return. Lemma~\ref{ope_lemma} makes this precise.

The DEC of \citet{foster21thestatistical} lacks this property: its exploration term
$-\gamma \cdot \EE_{a \sim p}\sbr{ D^2_{\mathrm{H}}\big( \boldsymbol{S}_a \hat{q}(x) \,\big|\, \boldsymbol{S}_a q \big) }$ is anchored at
$\hat{q}$. In an epoch-based algorithm (Lines~\ref{lin_epoch_start}--\ref{lin_epoch_end}), the estimation error
for a given $\hat{q}$ therefore cannot be controlled using a policy from a different epoch, which played no part
in producing it.

Now we present our algorithm \oedpm.

\begin{algorithm}[h!]
	\caption{\oedpm}
	\label{algorithm_oe2d_pmside}
	\begin{algorithmic}[1]
		\STATE \textbf{parameters:}\\
		\spacex Offline estimation oracle $\mathrm{Alg}_{\mathrm{Off}}$ \\
		\spacex Exploration parameters $\{\gamma_{\ell}\}_{\ell=1}^{L}$ \\
		\spacex Error rates $\{\delta_{\ell}\}_{\ell=1}^{L}$\\
		\spacex Epoch schedule $0 = \tau_0 < \tau_1 < \cdots < \tau_{L}$ \\
		\FOR{$t = 1$ \textbf{to} $\tau_1$}
		\STATE  Observe the context $x_t$
		\STATE  Find $p_t$ that approximately minimizes, 
		\begin{align*}
			\sup_{\lambda \in \Delta(\Pi)}  \mathrm{Coverage}(p, \lambda).
		\end{align*}
		\STATE  Choose action $A_t \sim p_t$ and observe feedback $\sigma_t$
		\ENDFOR
		\FOR {$\ell = 2,3, \cdots, L$}  \label{lin_epoch_start} 
		\STATE \label{lin_estimate_freeze} Compute estimate $\hat{q}_{\ell}  = \mathrm{Alg}_{\mathrm{Off}} \left(\big\{A_s, x_s,  \sigma_s\big\}_{s=\tau_{\ell-2} + 1}^{\tau_{\ell-1}}\right)$ 
		\FOR{$t = \tau_{\ell-1} + 1$ \textbf{to} $\tau_{\ell }$ }
		\STATE Receive context $x_t$\\
		\STATE Find $p_t$ that approximately minimizes
		\begin{align*}
			\mathcal{V}_{\gamma_\ell}(\hat{q}_{\ell}(x_t), p)
			:=
			\sup_{\lambda \in \Delta(\Pi)} \underbrace{p^\T \boldsymbol{L} \hat{q}_{\ell}(x_t)  -  \lambda^\T  \boldsymbol{L} \hat{q}_{\ell}(x_t) }_{\mathrm{exploit}}+  \frac{1}{\gamma_{\ell}} \cdot \underbrace{\mathrm{Coverage}(p, \lambda)}_{\mathrm{explore} } .
		\end{align*}
		\STATE Choose action $A_t \sim p_t$ and observe feedback $\sigma_t$.
		\ENDFOR 
		\ENDFOR \label{lin_epoch_end}
	\end{algorithmic}
	
\end{algorithm}

 Furthermore, we can employ Algorithm~\ref{alg:explore-go} for globally observable games, such that for
 every $\gamma_{\ell} > 16 k^2 \beta^2$, and every $\hat{q} \in \Delta(d)$, Alg.~\ref{alg:explore-go} outputs $p$ such that
 \begin{align*}
 	\mathcal{V}_{\gamma_{\ell}}\big(\hat{q}, p \big)
 	\leq O\!\left( k\beta \sqrt{\frac{1}{\gamma_{\ell}}} \right). \tag{Lemma~\ref{lemma_globally_observable_doec}}
 \end{align*}
 
 and for locally observable games, we can employ Algorithm~\ref{alg:explore-lo}, such that For
 every $\gamma_{\ell} > 0$, and every $\hat{q} \in \Delta(d)$, Alg.~\ref{alg:explore-go} run with $\kappa = \frac{\gamma}{8k^2 m^2}$ and $\mu = k$, outputs $p$ such that
 \begin{align*}
 	\mathcal{V}_{\gamma_{\ell}}\big(\hat{q}, p \big)
 	\leq O\!\left(  \frac{m^2 k^3}{\gamma_{\ell}} \right). \tag{Lemma~\ref{lemma_locally_observable_doec}}
 \end{align*}
 
So in both cases, we have algorithms to find policies that establishes an upper bound for our objective function. Let us denote it by $V_{\ell}$. So,
\begin{align*}
	V_{\ell} &= O\!\left( k\beta \sqrt{\frac{1}{\gamma_{\ell}}} \right) \tag{for globally observable games}\\
	V_{\ell} &=  O\!\left(  \frac{m^2 k^3}{\gamma_{\ell}} \right) \tag{for locally observable games}
\end{align*}

Furthermore, within a single epoch $\ell$ for all round $t \in [\tau_{\ell-1}+1, \tau_{\ell}]$, the estimator does not change($\hat{q}_{\ell}(\cdot)$), only the context $x_t$ changes. So, within an epoch $\ell$, 
\begin{align*}
	x_{t_1} = x_{t_2} &\implies \hat{q}_{\ell}(x_{t_1}) =  \hat{q}_{\ell}(x_{t_2}) \implies p_{t_1} = p_{t_2} \spacex \forall t_1,t_2 \in [\tau_{\ell-1}+1, \tau_{\ell}]
\end{align*}

So, we can always think that within an epoch, every $p_t$ is sampled from a fixed distribution $p_{\ell}(\cdot \mid x_t)$, that is, we can denote,
\begin{align*}
	p_t \leftarrow p_{\ell}(\cdot\mid x_t) \spacex \forall t \in [\tau_{\ell-1}+1, \tau_{\ell}].
\end{align*}

\subsection{Regret analysis} \label{app_section_oe2d_regret_analysis}

Next, for readability and reduce the notations clutter we define the following,

\begin{definition}\label{definition_symbols}
	Define:
	\begin{itemize}
		\item $\mathcal{L}^\star(\lambda \mid x) = \lambda^T \boldsymbol{L} q^*(x)$: The loss incurred by policy $\lambda$ under true outcome distribution $q^*$ for a context $x$.
		\item $\widehat{\mathcal{L}}_{\ell}(\lambda \mid x) = \lambda^T \boldsymbol{L} \hat{q}_{\ell}(x)$: The loss incurred by policy $\lambda$ under estimated distribution $\hat{q}_{\ell}$ for a context $x$.
		\item $\lambda^* = \argmin_{\lambda \in \Delta(\Pi)} \mathcal{L}^\star(\lambda \mid x) $: The true best policy for context $x$.
		\item $\hat{\lambda}_{\ell} = \argmin_{\lambda \in \Delta(\Pi)}\widehat{\mathcal{L}}_{\ell}(\lambda \mid x)$: The estimated best policy for context $x$.
		\item $\mathrm{Reg}(\lambda \mid x) = \mathcal{L}^\star(\lambda \mid x) - \mathcal{L}^\star(\lambda^\star \mid x)$: The regret incurred by policy $\lambda$ under true outcome distribution $q^*$ for a context $x$.
		\item $\widehat{\mathrm{Reg}}_{\ell}(\lambda \mid x) = \widehat{\mathcal{L}}_{\ell}(\lambda \mid x) - \widehat{\mathcal{L}}_{\ell}(\hat{\lambda} \mid x)  $: The regret incurred by policy $\lambda$ under estimated distribution $\hat{q}_{\ell}$ for a context $x$.
		\item $ C(p_{\ell}, \lambda ) = \mathrm{Coverage}(p_{\ell}, \lambda,)$
		\item $E_{\ell} =  \EE_{a \sim p_{\ell-1}} \sbr{D^2_{\mathrm{H}}\left( \boldsymbol{S}_a  \hat{q}_{\ell}(x) \mid \boldsymbol{S}_a  q^*(x)\right) }$
	\end{itemize}
	In all the above definitions, we might drop $(\cdot \mid x)$, when it is evident from the context. For example, we might denote $\mathcal{L}^\star(\lambda \mid x) $ $\rightarrow$ $\mathcal{L}^\star(\lambda)$. 
\end{definition}

Next we present two important properties of our policies and estimators.

\textbf{Certification: }

This property is a direct consequence of our algorithms finding the policies which bounds our objective function by $V_{\ell}$.

$\forall x \in \mathcal{X}$ and $\forall \ell \geq 1$,

\begin{align*}
	\mathcal{V}_{\gamma_{\ell}}(\hat{q}_{\ell}(x), p_{\ell}) = \sup_{\lambda \in \Delta(\Pi)} p_{\ell}^\T \boldsymbol{L} \hat{q}_{\ell}(x)  -  \lambda^\T  \boldsymbol{L} \hat{q}_{\ell}(x) +  \frac{1}{\gamma_{\ell}} \cdot \mathrm{Coverage}(p_{\ell}, \lambda) \leq V_{\ell}
\end{align*}

Hence, 

$\forall x \in \mathcal{X}$, $\forall \ell \geq 1$ and, $\forall \lambda \in \Pi$

\begin{align*}
	 &p_{\ell}^\T \boldsymbol{L} \hat{q}_{\ell}(x)  -  \lambda^\T  \boldsymbol{L} \hat{q}_{\ell}(x) +  \frac{1}{\gamma_{\ell}} \cdot \mathrm{Coverage}(p_{\ell}, \lambda) \leq V_{\ell} \\
	 &\implies \widehat{\mathcal{L}}_{\ell}(p_{\ell} \mid x)  - \widehat{\mathcal{L}}_{\ell}(\lambda \mid x)  +  \frac{1}{\gamma_{\ell}} \cdot C(p_{\ell}, \lambda) \leq V_{\ell}
\end{align*}

Since, this is true for all $x \in \mathcal{X}$, we take the liberty to drop it,

\begin{align}\label{eq_certification}
	\widehat{\mathcal{L}}_{\ell}(p_{\ell})  - \widehat{\mathcal{L}}_{\ell}(\lambda)  +  \frac{1}{\gamma_{\ell}} \cdot C(p_{\ell}, \lambda) \leq V_{\ell}. \tag{certification}
\end{align}

\textbf{Off-policy evaluation (OPE):}

\begin{lemma}[Off-policy evaluation lemma]\label{ope_lemma} 
	With all the definitions in Definition~\ref{definition_symbols},  $\forall \lambda,  \forall \ell \geq 2  $, and $\forall x$,
	\begin{align}\label{eq_offpolicy_evaluation}
		\bigg\lvert \widehat{\mathcal{L}}_{\ell}(\lambda - p_{\ell-1}\mid x) - \mathcal{L}^\star(\lambda  - p_{\ell -1} \mid x) \bigg\rvert \leq \frac{1}{6 \gamma_{\ell}} \cdot C(p_{\ell-1}, \lambda)  + \frac{3\gamma_{\ell}}{2}E_{\ell}. \tag{Off-policy evaluation}  
	\end{align}
\end{lemma}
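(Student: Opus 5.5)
The plan is to repeat the coverage/AM-GM step from the proof of Theorem~\ref{theorem_doec_regret}, with the constant in the AM-GM split chosen to produce the stated weights $\frac{1}{6\gamma_\ell}$ and $\frac{3\gamma_\ell}{2}$.

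First, fix $x$, $\lambda$ and $\ell\geq 2$. By linearity the quantity inside the absolute value is
\[
\widehat{\mathcal{L}}_{\ell}(\lambda - p_{\ell-1}\mid x) - \mathcal{L}^\star(\lambda - p_{\ell-1}\mid x) = (\lambda - p_{\ell-1}(\cdot\mid x))^\T \boldsymbol{L}\,\big(\hat{q}_\ell(x) - q^*(x)\big).
\]
This is exactly the numerator appearing in $\mathrm{Coverage}(p_{\ell-1},\lambda)$ (Definition~\ref{def_contextual_doec_coverage}), evaluated at $q_1=\hat{q}_\ell(x)$ and $q_2=q^*(x)$. Both lie in $\Delta(d)$: $\hat{q}_\ell$ takes values in the simplex, being in $\mathrm{co}(\mathcal{Q})$ or $\mathcal{Q}$.

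Second, because the supremum in the coverage runs over all pairs in $\Delta(d)$, the coverage bounds this particular pair. Hence
\[
\big((\lambda-p_{\ell-1})^\T\boldsymbol{L}(\hat{q}_\ell(x)-q^*(x))\big)^2 \le C(p_{\ell-1},\lambda)\cdot \EE_{a\sim p_{\ell-1}}\big[D^2_{\mathrm{H}}(\boldsymbol{S}_a\hat{q}_\ell(x)\mid \boldsymbol{S}_a q^*(x))\big] = C(p_{\ell-1},\lambda)\,E_\ell .
\]
The edge case where the denominator vanishes needs a separate check. If $E_\ell=0$, then $\boldsymbol{S}_a\hat{q}_\ell(x)=\boldsymbol{S}_aq^*(x)$ on the support of $p_{\ell-1}$. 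Finiteness of the coverage then forces the numerator to be $0$, consistent with the convention $0/0=0$. If instead the coverage is infinite, the bound holds trivially.

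Third, take square roots and apply AM-GM in the form $\sqrt{CE}\le \frac{C}{2\alpha}+\frac{\alpha E}{2}$ with $\alpha=3\gamma_\ell$. This gives $\frac{1}{6\gamma_\ell}C(p_{\ell-1},\lambda)+\frac{3\gamma_\ell}{2}E_\ell$, which is exactly the claimed bound for every $x$.

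The main obstacle is bookkeeping rather than analysis. The coverage term must be taken with respect to $p_{\ell-1}$, the policy that generated the data from which $\hat{q}_\ell$ was fit, because that is the policy appearing in $E_\ell$. The estimate $\hat{q}_\ell$ itself must not enter the coverage, and this is precisely the property Section~\ref{app_section_oe2dpmside} emphasizes. I would also verify that the coverage may be applied to $\lambda\in\Delta(\Pi)$ and $p_{\ell-1}\in\Delta(k)$ jointly, which holds since the definition allows arbitrary $\lambda,p\in\Delta(k)$.
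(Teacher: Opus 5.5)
Your proposal is correct and matches the paper's proof. Both arguments write the error as $(\lambda - p_{\ell-1})^\top \boldsymbol{L}(\hat{q}_\ell(x) - q^*(x))$, bound its square by $C(p_{\ell-1},\lambda)\, E_\ell$ via the supremum in the coverage definition, take square roots, and apply AM-GM split at $3\gamma_\ell$. Your separate handling of the $E_\ell = 0$ / infinite-coverage case is a small piece of care the paper skips, since it divides by $E_\ell$ directly.
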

\begin{proof}
	\begin{align*}
		&\left(	\widehat{\mathcal{L}}_{\ell}(\lambda - p_{\ell -1}\mid x) - \mathcal{L}^\star(\lambda  - p_{\ell-1} \mid x)\right)^2 \\
		&\spacex =\left( (\lambda - p_{\ell -1} )^T \boldsymbol{L} \hat{q}_{\ell}(x) - (\lambda - p_{\ell -1} )^T \boldsymbol{L} q^*(x)\right)^2\\
		&\spacex= \frac{\left((\lambda - p_{\ell -1} )^T \boldsymbol{L} ( \hat{q}_{\ell}(x) - q^*(x) )\right)^2}{ \EE_{a \sim p_{\ell -1}} \sbr{D^2_{\mathrm{H}}\left( \boldsymbol{S}_a  \hat{q}_{\ell}(x) \mid \boldsymbol{S}_a  q^*(x)\right) }  } \cdot \EE_{a \sim p_{\ell-1}} \sbr{D^2_{\mathrm{H}}\left( \boldsymbol{S}_a  \hat{q}_{\ell}(x) \mid \boldsymbol{S}_a  q^*(x)\right) }\\
		&\spacex\leq  \mathrm{Coverage}(p_{\ell-1}, \lambda) \cdot  \EE_{a \sim p_{\ell-1}} \sbr{D^2_{\mathrm{H}}\left( \boldsymbol{S}_a  \hat{q}_{\ell}(x) \mid \boldsymbol{S}_a  q^*(x)\right) } \\
		&\bigg\lvert \widehat{\mathcal{L}}_{\ell}(\lambda - p_{\ell-1}\mid x) - \mathcal{L}^\star(\lambda  - p_{\ell -1} \mid x) \bigg\rvert \\
		&\spacex\leq \sqrt{\mathrm{Coverage}(p_{\ell -1}, \lambda) \cdot \EE_{a \sim p_{\ell-1}} \sbr{D^2_{\mathrm{H}}\left( \boldsymbol{S}_a  \hat{q}_{\ell}(x) \mid \boldsymbol{S}_a  q^*(x)\right) } }\\
		&\spacex\leq \frac{1}{6 \gamma_{\ell}} \cdot \mathrm{Coverage}(p_{\ell-1}, \lambda)  + \frac{3\gamma_{\ell}}{2}\EE_{a \sim p_{\ell-1}} \sbr{D^2_{\mathrm{H}}\left( \boldsymbol{S}_a  \hat{q}_{\ell}(x) \mid \boldsymbol{S}_a  q^*(x)\right) } \tag{AM-GM} \\
		&\spacex= \frac{1}{6 \gamma_{\ell}} \cdot C(p_{\ell-1}, \lambda)  + \frac{3\gamma_{\ell}}{2}E_{\ell}.  
	\end{align*}
	
\end{proof}
The second term is the generalization error of the offline oracle: $\hat{q}_{\ell}$ is estimated from data
collected in the previous epoch under $p_{\ell-1}$, so
Assumption~\ref{assumption_hellinger_closeness_offline_contextual} (Hellinger boundedness) 
applies to it directly. This coupling between the
estimate and the policy that produced it is what makes the offline analysis more delicate than the online one,
where the estimator is updated as the data arrive.

The first term involves no estimate at all: $\mathrm{Coverage}(p_{\ell-1}, \lambda)$ depends
only on the model class, on $\lambda$ and on $p_{\ell-1}$. A policy that makes it small therefore controls the
evaluation error for every $\hat{q} \in \mathrm{co}(\mathcal{Q}_{x})$ at once, in particular for whichever
estimate the oracle returns in epoch $\ell$.

Now we present an abstract regret theorem about epoch-based offline oracle-efficient interactive learning algorithm from \citet{qin2026comb}.

\begin{theorem}[Generalization of regret concentration theorem from \citet{qin2026comb}] \label{app_theorem_regret_concentration}
	Let $\Lambda$ be a set of policies, $\mathcal{L}(\cdot \mid x): \Lambda \to [0, 1]$ a value function and $\mathcal{L}^\star(\cdot \mid x)$ and $\widehat{\mathcal{L}}_{\ell}(\cdot \mid x)$, $\ell \in [L]$ are value functions measured under the true model and estimated models respectively.
	$\lambda^\star \in \Lambda$ and $\hat{\lambda}_{\ell} \in \Lambda$ are respectively the optimal policies under respective true and estimated models. Let $C_{\ell} : \Lambda \times \Lambda \to \RR_+$ be coverage maps, $p_{\ell} \in \Lambda$ are decisions, $0 < \gamma_1 \leq \gamma_2 \leq \cdots \leq \gamma_L$ learning rates, and $V_{\ell}, E_{\ell} \geq 0$ numbers such that,
	\begin{enumerate}
		\item[\textup{(C1)}] \textup{(certificate)} for every $\ell \geq 1$ and every $\lambda \in \Lambda$,
		\[
		\widehat{\mathcal{L}}_{\ell}(p_{\ell} \mid x) - \widehat{\mathcal{L}}_{\ell}(\lambda\mid x)  + \frac{1}{\gamma_{\ell}}\, C_{\ell}(p_{\ell}, \lambda) \leq V_{\ell}.
		\]
		\item[\textup{(C2)}] \textup{(off-policy evaluation)} for every $\ell \geq 2$ and every $\lambda \in \Lambda$,
		\[
		\abs{ \widehat{\mathcal{L}}_{\ell}(\lambda - p_{\ell-1}\mid x) - \mathcal{L}^\star(\lambda- p_{\ell-1}\mid x) }
		\leq
		\frac{1}{6 \gamma_{\ell}}\, C_{\ell-1}(p_{\ell-1}, \lambda) + \frac{3 \gamma_{\ell}}{2}\, E_{\ell} .
		\]
	\end{enumerate}
	Define $G_1 \coloneqq 1$ and, for $\ell \geq 2$,
	\[
	\gamma_{\ell} G_{\ell} \coloneqq \gamma_{\ell-1} G_{\ell-1} + \gamma_{\ell-1} V_{\ell-1} + 9 \gamma_{\ell}^2 E_{\ell} ,
	\qquad\text{i.e.}\qquad
	G_{\ell} = \frac{1}{\gamma_{\ell}} \sbr{ \gamma_1 + \sum_{i=2}^{\ell} ( \gamma_{i-1} V_{i-1} + 9 \gamma_i^2 E_i ) }  .
	\]
	Then, for every $\ell \in [L]$ and every $\lambda \in \Lambda$,
	\begin{align*}
		\REG(\lambda\mid x) \leq 2 \HREG_{\ell}(\lambda\mid x) + G_{\ell}
		\qquad\text{and}\qquad
		\HREG_{\ell}(\lambda\mid x) \leq 2 \REG(\lambda\mid x) + G_{\ell} .
	\end{align*}
	Consequently $\REG(p_{\ell}\mid x) \leq 2 V_{\ell} + G_{\ell}$ for every $\ell$, and
	\begin{align*}
		\sum_{\ell=1}^L (\tau_{\ell} - \tau_{\ell-1})\, \REG(p_{\ell}\mid x)
		\leq
		\tau_1 + \sum_{\ell =2}^L \frac{\tau_{\ell} - \tau_{\ell-1}}{\gamma_{\ell}} \sbr{ 2 \gamma_{\ell} V_{\ell} + \gamma_1 + \sum_{i=2}^{\ell} ( \gamma_{i-1} V_{i-1} + 9 \gamma_i^2 E_i) }  .
	\end{align*}
\end{theorem}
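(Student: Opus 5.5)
We prove the two-sided comparison $\REG \le 2\HREG_\ell + G_\ell$ and $\HREG_\ell \le 2\REG + G_\ell$ by induction on $\ell$, in the style of the FALCON/SquareCB epoch arguments. The base case $\ell = 1$ holds trivially: $G_1 = 1$ and all regrets lie in $[0,1]$, since $\mathcal{L}$ takes values in $[0,1]$.

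For the inductive step from $\ell-1$ to $\ell$, fix $\lambda$. We write the regret through the previous epoch's decision $p_{\ell-1}$:
\[
\REG(\lambda) = \mathcal{L}^\star(\lambda - p_{\ell-1}) + \mathcal{L}^\star(p_{\ell-1} - \lambda^\star),
\]
and similarly $\HREG_\ell(\lambda) = \widehat{\mathcal{L}}_\ell(\lambda - p_{\ell-1}) + \widehat{\mathcal{L}}_\ell(p_{\ell-1} - \hat\lambda_\ell)$. Applying (C2) once to $\lambda$ and once to $\lambda^\star$ (respectively $\hat\lambda_\ell$) swaps true for estimated loss differences. The cost is the coverage terms $\frac{1}{6\gamma_\ell}C_{\ell-1}(p_{\ell-1},\lambda)$ and $\frac{1}{6\gamma_\ell}C_{\ell-1}(p_{\ell-1},\lambda^\star)$, plus $3\gamma_\ell E_\ell$ in total. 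This gives
\[
\REG(\lambda) \le \HREG_\ell(\lambda) + \HREG_\ell(\lambda^\star) + \tfrac{1}{6\gamma_\ell}\bigl[C_{\ell-1}(p_{\ell-1},\lambda)+C_{\ell-1}(p_{\ell-1},\lambda^\star)\bigr] + 3\gamma_\ell E_\ell,
\]
where we used that $\hat\lambda_\ell$ minimizes $\widehat{\mathcal{L}}_\ell$.

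Next we bound the coverage terms via (C1) at epoch $\ell-1$:
\[
C_{\ell-1}(p_{\ell-1},\lambda) \le \gamma_{\ell-1}\bigl(V_{\ell-1} + \HREG_{\ell-1}(\lambda) - \HREG_{\ell-1}(p_{\ell-1})\bigr) \le \gamma_{\ell-1}\bigl(V_{\ell-1} + \HREG_{\ell-1}(\lambda)\bigr).
\]
By the inductive hypothesis, $\HREG_{\ell-1}(\lambda) \le 2\REG(\lambda) + G_{\ell-1}$. Since $\gamma_{\ell-1} \le \gamma_\ell$, the factor $\frac{\gamma_{\ell-1}}{6\gamma_\ell}\cdot 2 \le \frac13$. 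The coverage term therefore contributes at most $\frac13\REG(\lambda)$ plus $\frac{\gamma_{\ell-1}}{6\gamma_\ell}(V_{\ell-1}+G_{\ell-1})$. The $\lambda^\star$ term uses $\REG(\lambda^\star)=0$.

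The remaining piece is $\HREG_\ell(\lambda^\star)$, which must be controlled without circularity. Rather than bounding $\HREG_\ell(\lambda^\star)$ directly, we run the symmetric inequality at $\lambda^\star$ and $\hat\lambda_\ell$ jointly. Concretely, we apply the decomposition to get $\HREG_\ell(\lambda^\star) \le \REG(\lambda^\star) - \REG(\hat\lambda_\ell) + (\text{coverage and }E \text{ terms at }\lambda^\star,\hat\lambda_\ell)$. The negative $-\REG(\hat\lambda_\ell)$ absorbs the $\frac13\REG(\hat\lambda_\ell)$ coming from the coverage of $\hat\lambda_\ell$. This yields $\HREG_\ell(\lambda^\star) \lesssim \frac{\gamma_{\ell-1}}{\gamma_\ell}(V_{\ell-1}+G_{\ell-1}) + \gamma_\ell E_\ell$.

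Collecting terms leaves $\frac{2}{3}\REG(\lambda)$-type slack on the left and the constant $\frac{1}{\gamma_\ell}\bigl[\gamma_{\ell-1}G_{\ell-1}+\gamma_{\ell-1}V_{\ell-1}+9\gamma_\ell^2E_\ell\bigr] = G_\ell$ on the right. Rearranging then gives $\REG \le 2\HREG_\ell + G_\ell$. The reverse inequality follows identically with the roles of true and estimated models swapped.

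The main obstacle is the constant bookkeeping: checking that the coefficients $\frac16$ and $\frac32$ in (C2) really close the induction with factor $2$ and the constant $9$, and handling the cross term $\HREG_\ell(\lambda^\star)$ without circularity. If the direct approach fails, we would introduce a slightly larger constant and re-check the recursion.

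For the consequences, we combine $\REG(p_\ell) \le 2\HREG_\ell(p_\ell)+G_\ell$ with (C1) at $\lambda = \hat\lambda_\ell$. Since $C \ge 0$, this gives $\HREG_\ell(p_\ell) \le V_\ell$, hence $\REG(p_\ell) \le 2V_\ell + G_\ell$. Summing over epochs with lengths $\tau_\ell - \tau_{\ell-1}$, and bounding epoch 1 by $\tau_1$ since regret is at most $1$, gives the displayed total after substituting the closed form of $G_\ell$.
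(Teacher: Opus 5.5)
Your skeleton is the paper's: induction on $\ell$, (C2) at two comparators, the coverage bound $C_{\ell-1}(p_{\ell-1},\lambda)\le\gamma_{\ell-1}(V_{\ell-1}+\HREG_{\ell-1}(\lambda))$ from (C1) at $\ell-1$, the inductive hypothesis, $\gamma_{\ell-1}\le\gamma_\ell$, and $\REG(\lambda^\star)=0$. Your consequences part also matches. However, there is a sign slip in your first display, and it creates the ``circularity'' you then work around. Applying (C2) at $\lambda$ and at $\lambda^\star$ gives
\[
\REG(\lambda)\le \widehat{\mathcal{L}}_\ell(\lambda)-\widehat{\mathcal{L}}_\ell(\lambda^\star)+\dots=\HREG_\ell(\lambda)-\HREG_\ell(\lambda^\star)+\dots,
\]
so the cross term is $-\HREG_\ell(\lambda^\star)\le 0$ and can simply be dropped. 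The paper does exactly this via $\widehat{\mathcal{L}}_\ell(\lambda^\star)\ge\widehat{\mathcal{L}}_\ell(\hat\lambda_\ell)$, and obtains $\tfrac23\REG(\lambda)\le\HREG_\ell(\lambda)+\tfrac13 G_\ell$. Your detour is not wrong: it does give $\HREG_\ell(\lambda^\star)\le\tfrac13G_\ell$. But your collecting step misstates the constant. The right side is $\tfrac13G_\ell+\tfrac13G_\ell=\tfrac23G_\ell$, not $G_\ell$. With $G_\ell$, rearranging would give $\REG\le\tfrac32\HREG_\ell+\tfrac32G_\ell$, and the induction would not close (and falling back to larger constants would not prove the stated theorem). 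With $\tfrac23G_\ell$ it closes, but with no slack in the constant; the correct sign gives $\REG\le\tfrac32\HREG_\ell+\tfrac12G_\ell$.

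The reverse inequality does not follow by swapping roles, because the argument is asymmetric. (C1) controls coverage through \emph{estimated} regret at epoch $\ell-1$, and the inductive hypothesis is used as $\HREG_{\ell-1}\le 2\REG+G_{\ell-1}$, so in both directions coverage costs \emph{true} regret. Written out with the correct sign,
\[
\HREG_\ell(\lambda)\le\REG(\lambda)-\REG(\hat\lambda_\ell)+\tfrac{\gamma_{\ell-1}}{3\gamma_\ell}\bigl[\REG(\lambda)+\REG(\hat\lambda_\ell)\bigr]+\tfrac13G_\ell .
\]
Here the $\tfrac13\REG(\lambda)$ term sits harmlessly on the right. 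The $\REG(\hat\lambda_\ell)$ term is absorbed by $-\REG(\hat\lambda_\ell)$ (your detour trick), giving $\HREG_\ell\le\tfrac43\REG+\tfrac13G_\ell$. The paper instead drops $-\REG(\hat\lambda_\ell)$ and imports $\REG(\hat\lambda_\ell)\le 2\HREG_\ell(\hat\lambda_\ell)+G_\ell=G_\ell$ from the first inequality at the same level $\ell$, getting $\tfrac43\REG+\tfrac23G_\ell$. Your absorption is slightly tighter and decouples the two directions. Note, however, that literally swapping roles in your sign-flipped template would produce a spurious $+\REG(\hat\lambda_\ell)$. Bounding it by $G_\ell$ then yields $\tfrac43\REG+\tfrac53G_\ell$, which does not give $2\REG+G_\ell$. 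So this direction must be written out with the correct sign rather than invoked by symmetry.
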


Proof is deferred to Appendix~\ref{section_lemmata_contextual}.
We can apply this theorem directly and obtain a per-context regret bound. Now we state the main theorem that bounds the regret with high probability for any epoch schedule.

\begin{theorem}[Regret of \oedpm]\label{main_theorem_oe2d_pmside}
	Suppose $\mathrm{Alg}_{\mathrm{Off}}$ has a Hellinger regret bound $\overline{\mathrm{Reg}}_{\mathrm{Off}}^{\mathrm{H}}\left( \mathcal{Q}, T, \delta \right)$. For any non-degenerate globally observable partial monitoring game without duplicate actions, assume that $x_t \stackrel{i.i.d}{\sim} \mathcal{D}_x$, then for any $\delta \in (0,1)$, \oedpm run with
	 \begin{enumerate}
		\item exploration rule Algorithm~\ref{alg:explore-go} for globally observable game (with $\gamma_{\ell} > 16k^2\beta^2 \; \forall \ell \in [L] $).
		\item exploration rule Algorithm~\ref{alg:explore-lo} for locally observable game (with $\kappa = \frac{\gamma_{\ell}}{8k^2m^2} \; \forall \ell \in [L] $ and $\mu = k$)
		\item exploration parameters such that, $0 < \gamma_1 \leq \gamma_2 \leq \cdots \leq \gamma_L$
	\end{enumerate} satisfies,
	
	\begin{align*}
	&\widetilde{\mathrm{Reg}}_T \leq  L \cdot \tau_1 + 2 L \max_{\ell \in [L]}  \frac{\tau_{\ell} - \tau_{\ell-1}}{\gamma_{\ell}} \left( 2 \gamma_{\ell} V_{\ell} + \gamma_1 + \sum_{i=2}^{\ell} \left( \gamma_{i-1} V_{i-1} + 9 \gamma_i^2 \overline{\mathrm{Reg}}_{\mathrm{Off}}^{\mathrm{H}} (\mathcal{Q}, \tau_{i} - \tau_{i-1} , \delta_{i}) \right) \right).
\end{align*}
where,
\begin{itemize}
	\item $\widetilde{\mathrm{Reg}}_T =  \EE_{x_1, \ldots, x_T \stackrel{i.i.d}{\sim} \mathcal{D}_x}\sbr{\mathrm{Reg}_T} $ 
	\item $V_{\ell} = O\!\left( k\beta \sqrt{\frac{1}{\gamma_{\ell}}} \right) $ for globally observable games
	\item $V_{\ell} =  O\!\left(  \frac{m^2 k^3}{\gamma_{\ell}} \right)$ for locally observable games
\end{itemize}
	\begin{proof}	
		From Theorem~\ref{app_theorem_regret_concentration}, the regret per-context is bounded by,
		\begin{align*}
			\sum_{\ell=1}^L (\tau_{\ell} - \tau_{\ell-1})\, \REG(p_{\ell}\mid x)
			&\leq \tau_1 + \sum_{\ell=2}^L (\tau_{\ell} - \tau_{\ell-1}) ( 2 V_{\ell} + G_{\ell} )
			\tag{$\REG(p_1\mid x) \leq 1$}
			\\
			&= \tau_1 + \sum_{\ell=2}^L \frac{\tau_{\ell} - \tau_{\ell-1}}{\gamma_{\ell}} \sbr{ 2 \gamma_{\ell} V_{\ell} + \gamma_1 + \sum_{i=2}^{\ell} ( \gamma_{i-1} V_{i-1} + 9 \gamma_i^2 E_i )}.
		\end{align*}
		Let us denote,
		\begin{align*}
			\widetilde{\mathrm{Reg}}_T =  \EE_{x \stackrel{i.i.d}{\sim} \mathcal{D}_x}\sbr{\mathrm{Reg}_T}
		\end{align*}
		Here, note that, even though the randomness w.r.t to contexts are removed, $\widetilde{\mathrm{Reg}}_T$, is still a random variable in terms of policy.
		Hence,
		\begin{align*}
			\widetilde{\mathrm{Reg}}_T  &= \EE_{x \stackrel{i.i.d}{\sim} \mathcal{D}_x}\sbr{\mathrm{Reg}_T} =  \EE \sbr{\sum_{\ell=1}^L \sum_{t=\tau_{\ell-1}}^{\tau_{\ell} }\REG(p_\ell\mid x_t)}\\
			&\leq \EE\sbr{\sum_{\ell=1}^L  \tau_1 + \sum_{\ell=2}^L \frac{\tau_{\ell} - \tau_{\ell-1}}{\gamma_{\ell}} \left( 2 \gamma_{\ell} V_{\ell} + \gamma_1 + \sum_{i=2}^{\ell} ( \gamma_{i-1} V_{i-1} + 9 \gamma_i^2 E_i )\right)  }\\
			&\leq L \cdot \tau_1 + 2 L \max_{\ell \in [L]}   \frac{\tau_{\ell} - \tau_{\ell-1}}{\gamma_{\ell}} \left( 2 \gamma_{\ell} V_{\ell} + \gamma_1 + \sum_{i=2}^{\ell} \left( \gamma_{i-1} V_{i-1} + 9 \gamma_i^2 \EE_{x \stackrel{i.i.d}{\sim} \mathcal{D}_x}\sbr{E_i} \right)\right) .
		\end{align*}
		
		If the contexts are i.i.d such that, $x_t \stackrel{i.i.d}{\sim} \mathcal{D}_x$, then,
		
		\begin{align*}
			&\widetilde{\mathrm{Reg}}_T  \leq L \cdot \tau_1\\
			&\spacex + 2 L \max_{\ell \in [L]}   \frac{\tau_{\ell} - \tau_{\ell-1}}{\gamma_{\ell}} \left( 2 \gamma_{\ell} V_{\ell} + \gamma_1 + \sum_{i=2}^{\ell} \left( \gamma_{i-1} V_{i-1} + 9 \gamma_i^2 \EE_{x \stackrel{i.i.d}{\sim} \mathcal{D}_x}\EE_{a \sim p_{\ell-1}} \sbr{D_{H}^2 \left( \boldsymbol{S}_a \hat{q}_{\ell}(x) \mid \boldsymbol{S}_a q^*(x)\right)} \right) \right)\\
			&\spacex \leq L \cdot \tau_1 + 2 L \max_{\ell \in [L]}  \frac{\tau_{\ell} - \tau_{\ell-1}}{\gamma_{\ell}} \left( 2 \gamma_{\ell} V_{\ell} + \gamma_1 + \sum_{i=2}^{\ell} \left( \gamma_{i-1} D_{i-1} + 9 \gamma_i^2 \frac{\mathrm{Reg}_{\mathrm{Off}}^{\mathrm{H}} (\mathcal{Q}, \tau_{i} - \tau_{i-1}) }{\tau_{i} - \tau_{i-1}}\right) \right).
		\end{align*}
		
		We know that, with probability of at least 1- $\delta'$,
		\begin{align*}
			\mathrm{Reg}_{\mathrm{Off}}^{\mathrm{H}} (\mathcal{Q}, \tau_{i} - \tau_{i-1})  \leq \overline{\mathrm{Reg}}_{\mathrm{Off}}^{\mathrm{H}} (\mathcal{Q}, \tau_{i} - \tau_{i-1} , \delta)
		\end{align*} 
		Set $\delta_{\ell}  = \frac{\delta}{L} $, with a probability at least $1-\delta$,
		
		\begin{align*}
			&\widetilde{\mathrm{Reg}}_T \leq  L \cdot \tau_1 + 2 L \max_{\ell \in [L]}  \frac{\tau_{\ell} - \tau_{\ell-1}}{\gamma_{\ell}} \left( 2 \gamma_{\ell} V_{\ell} + \gamma_1 + \sum_{i=2}^{\ell} \left( \gamma_{i-1} V_{i-1} + 9 \gamma_i^2 \frac{\overline{\mathrm{Reg}}_{\mathrm{Off}}^{\mathrm{H}} (\mathcal{Q}, \tau_{i} - \tau_{i-1} , \delta_{i})}{ \tau_{i} - \tau_{i-1}} \right) \right).
		\end{align*}
		
	\end{proof}
	
\end{theorem}
	The final regret depends on epoch schedule. We choose a doubling epoch schedule.
	
\begin{cor}
		Suppose $\mathrm{Alg}_{\mathrm{Off}}$ has a Hellinger regret bound $\overline{\mathrm{Reg}}_{\mathrm{Off}}^{\mathrm{H}}\left( \mathcal{Q}, T, \delta \right)$. For any non-degenerate globally observable partial monitoring game without duplicate actions, assume that $x_1, \ldots, x_T \stackrel{i.i.d}{\sim} \mathcal{D}_x$, then for any $\delta \in (0,1)$, the algorithm \oedpm run with,
		\begin{enumerate}
			\item exploration rule Algorithm~\ref{alg:explore-go} for globally observable game (with $\gamma_{\ell} > 16k^2\beta^2 \; \forall \ell \in [L] $).
			\item exploration rule Algorithm~\ref{alg:explore-lo} for locally observable game (with $\kappa = \frac{\gamma_{\ell}}{8k^2m^2} \; \forall \ell \in [L] $ and $\mu = k$)
			\item doubling schedule, such that $\tau_{\ell} = 2^{\ell}$
			\item exploration parameters such that, $0 < \gamma_1 \leq \gamma_2 \leq \cdots \leq \gamma_L$
			\item with assumption $\overline{\mathrm{Reg}}_{\mathrm{Off}}^{\mathrm{H}} (\mathcal{Q}, t , \delta)$ does not depend on $t$. Generally this is the case.
			\item with $\delta_{\ell} = \frac{\delta}{L}$
		\end{enumerate}    satisfies the following guarantees,
		
	\begin{enumerate}
		\item For globally observable games, with $\gamma_{\ell} = \left(\frac{ C \cdot  \tau_{\ell}\cdot k \beta}{\overline{\mathrm{Reg}}_{\mathrm{Off}}^{\mathrm{H}} (\mathcal{Q}, \tau_{\ell}/2, \delta/L)} \right)^{\frac{2}{3}}$ for some constant $C$,
		\begin{align*}
			&\widetilde{\mathrm{Reg}}_T \leq  O\left( T^{\frac{2}{3}}\log^2(T) (k \beta )^{\frac{2}{3}} (\overline{\mathrm{Reg}}_{\mathrm{Off}}^{\mathrm{H}} (\mathcal{Q}, T , \delta/ \log (T)))^{\frac{1}{3}} \right) 
		\end{align*}
		\item For locally observable games, with $\gamma_{\ell}= \sqrt{\frac{ D \cdot  \tau_{\ell}\cdot k^3 m^2 }{\overline{\mathrm{Reg}}_{\mathrm{Off}}^{\mathrm{H}} (\mathcal{Q}, \tau_{{\ell}}/2, \delta_{{\ell}})} }$  for some constant $D$,
			\begin{align*}
			\widetilde{\mathrm{Reg}}_T = O\left( \log^2(T) \sqrt{T\cdot k^3 m^2 \overline{\mathrm{Reg}}_{\mathrm{Off}}^{\mathrm{H}} (\mathcal{Q}, T/2, \delta/\log(T) ) }  \right) \\
		\end{align*}
	\end{enumerate}
	where, $\widetilde{\mathrm{Reg}}_T =  \EE_{x \stackrel{i.i.d}{\sim} \mathcal{D}_x}\sbr{\mathrm{Reg}_T} $.

\end{cor}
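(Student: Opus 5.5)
The plan is to specialize the bound of Theorem~\ref{main_theorem_oe2d_pmside} to the doubling schedule $\tau_\ell = 2^\ell$, so that $L = \lceil \log_2 T\rceil$ and $\tau_\ell - \tau_{\ell-1} = \tau_\ell/2$. Write $R := \overline{\mathrm{Reg}}_{\mathrm{Off}}^{\mathrm{H}}(\mathcal{Q}, \cdot, \delta/L)$, which by item 5 does not depend on its sample-size argument. Theorem~\ref{main_theorem_oe2d_pmside} then bounds $\widetilde{\mathrm{Reg}}_T$ by
\[
L\tau_1 + 2L\max_{\ell}\frac{\tau_\ell}{2\gamma_\ell}\Big(2\gamma_\ell V_\ell + \gamma_1 + \sum_{i=2}^{\ell}\big(\gamma_{i-1}V_{i-1} + 18\,\gamma_i^2 R/\tau_i\big)\Big),
\]
where $E_i$ has been replaced by $R/(\tau_i-\tau_{i-1}) = 2R/\tau_i$. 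The union bound over the $L$ epochs is already built in through $\delta_\ell = \delta/L$, which explains the $\delta/\log T$ appearing in the statement.

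\textbf{Globally observable case.} Here $V_\ell = O(k\beta\gamma_\ell^{-1/2})$. I take $\gamma_\ell = (C\tau_\ell k\beta/R)^{2/3}$, which is nondecreasing in $\ell$, as item 4 requires. This gives $\gamma_\ell V_\ell \asymp (k\beta)\gamma_\ell^{1/2}$ and $\gamma_i^2R/\tau_i \asymp \gamma_i^{1/2}k\beta$, so the two contributions inside the sum balance. Each inner sum over $i \le \ell$ is at most $\ell$ times its largest term, which is of order $k\beta\gamma_\ell^{1/2}$. Multiplying by $\tau_\ell/\gamma_\ell$ gives
\[
\ell\,\tau_\ell\, k\beta\,\gamma_\ell^{-1/2} \asymp \ell\,\tau_\ell^{2/3}(k\beta)^{2/3}R^{1/3}.
\]
Taking $\tau_\ell \le 2T$ and $\ell, L \le \log T$ produces the claimed $T^{2/3}\log^2 T\,(k\beta)^{2/3}R^{1/3}$. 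The leftover terms $L\tau_1$ and $\tau_\ell\gamma_1/\gamma_\ell \le \tau_\ell$ look lower order, but that needs checking: the latter is only small relative to the main term if $\tau_\ell\gamma_1/\gamma_\ell = \tau_1^{2/3}\tau_\ell^{1/3}$ is controlled. Since it is $O(T^{1/3})$, it is indeed lower order. I also need $\gamma_\ell > 16k^2\beta^2$, which holds for large $\tau_\ell$ and can be enforced by starting the schedule late or by absorbing the early epochs into the constant.

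\textbf{Locally observable case.} Here $V_\ell = O(m^2k^3/\gamma_\ell)$, so $\gamma_\ell V_\ell = O(m^2k^3)$ is constant. I take $\gamma_\ell = \sqrt{D\tau_\ell k^3m^2/R}$, which makes $\gamma_i^2R/\tau_i \asymp k^3m^2$. The inner sum is then $O(\ell k^3m^2)$, and multiplying by $\tau_\ell/\gamma_\ell$ gives
\[
\ell\,\tau_\ell\, k^3m^2\,\gamma_\ell^{-1} \asymp \ell\sqrt{\tau_\ell k^3m^2R}.
\]
Bounding $\ell$ and $L$ by $\log T$ yields $\log^2 T\sqrt{Tk^3m^2R}$. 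The $\gamma_1$ and $\tau_1$ terms are again $O(\sqrt T)$ or constant.

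\textbf{Main obstacle.} The delicate part is the bookkeeping around the cumulative $G_\ell$ term: the recursion sums all earlier $\gamma_{i-1}V_{i-1}$ and $\gamma_i^2E_i$, and this must not grow faster than $\gamma_\ell$ times the target per-round rate. The geometric growth of $\gamma_i$ under the doubling schedule is what saves this, and the crude bound of $\ell$ times the largest term suffices for the stated $\log^2 T$ factor. I will also verify the monotonicity and threshold conditions on $\gamma_\ell$ required by Theorem~\ref{app_theorem_regret_concentration}.
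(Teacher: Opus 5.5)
Your proposal is correct and takes essentially the same route as the paper. You specialize the bound of Theorem~\ref{main_theorem_oe2d_pmside} to $\tau_\ell=2^\ell$, so that $\tau_\ell-\tau_{\ell-1}=\tau_\ell/2$ and $E_i$ becomes $2\overline{\mathrm{Reg}}_{\mathrm{Off}}^{\mathrm{H}}/\tau_i$; you bound each inner sum by $\ell$ times its largest term to get the $L^2=O(\log^2 T)$ factor; and you plug in the tuned $\gamma_\ell$ so that $\gamma_iV_i$ and $\gamma_i^2\overline{\mathrm{Reg}}_{\mathrm{Off}}^{\mathrm{H}}/\tau_i$ balance. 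Your explicit checks on the $\gamma_1$ and $L\tau_1$ terms and on $\gamma_\ell>16k^2\beta^2$ go slightly beyond the paper, which simply takes that last condition as a hypothesis.
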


\begin{proof}
	First of all, for doubling epoch schedule,
	\begin{align*}
		\tau_{L} &= O(T)\\
		\tau_{\ell} - \tau_{\ell-1} &= \frac{\tau_{\ell}}{2} \\
		L &= O(\log T)
	\end{align*}
	
	\begin{align*}
		&\widetilde{\mathrm{Reg}}_T \leq  L \cdot \tau_1 + 2 L \max_{\ell \in [L]}  \frac{\tau_{\ell} - \tau_{\ell-1}}{\gamma_{\ell}} \left( 2 \gamma_{\ell} V_{\ell} + \gamma_1 + \sum_{i=2}^{\ell} \left( \gamma_{i-1} V_{i-1} + 9 \gamma_i^2 \frac{\overline{\mathrm{Reg}}_{\mathrm{Off}}^{\mathrm{H}} (\mathcal{Q}, \tau_{i} - \tau_{i-1} , \delta_{i})}{ \tau_{i} - \tau_{i-1}} \right) \right).\\
		&= L \cdot \tau_1 + 2 L \max_{\ell \in [L]}  \frac{\tau_{\ell}}{2 \gamma_{\ell}} \left( 2 \gamma_{\ell} V_{\ell} + \gamma_1 + \sum_{i=2}^{\ell} \left( \gamma_{i-1} V_{i-1} + 9 \gamma_i^2 \frac{\overline{\mathrm{Reg}}_{\mathrm{Off}}^{\mathrm{H}} (\mathcal{Q}, \tau_{i}/2 , \delta_{i})}{ \tau_{i}/2} \right) \right).\\
		&\leq  L \cdot \tau_1 + 2 L \max_{\ell \in [L]}  \frac{\tau_{\ell}}{2 \gamma_{\ell}} \left(\gamma_1 +   \sum_{i=2}^{\ell} \gamma_{i} \left( 2 V_{i} + 18 \gamma_i\frac{\overline{\mathrm{Reg}}_{\mathrm{Off}}^{\mathrm{H}} (\mathcal{Q}, \tau_{i}/2 , \delta_{i})}{ \tau_{i}} \right) \right).\\
		&\leq L \cdot \tau_1 + 2 L^2 \max_{\ell \in [L]}  \frac{\tau_{\ell}}{2 \gamma_{\ell}} \left(\gamma_1 +   \max_{i \in [\ell]}   \gamma_{i} \left( 2 V_{i} + 18 \gamma_i\frac{\overline{\mathrm{Reg}}_{\mathrm{Off}}^{\mathrm{H}} (\mathcal{Q}, \tau_{i}/2 , \delta_{i})}{ \tau_{i}}  \right) \right).\\
		&= L \cdot \tau_1 + 2 L^2 \max_{\ell \in [L]}  \frac{\tau_{\ell}}{2 \gamma_{\ell}} \left(\gamma_1 +   \max_{i \in [\ell]}   \frac{\gamma_{i} }{\tau_i }\left( 2 \tau_i V_{i} + 18 \gamma_i \overline{\mathrm{Reg}}_{\mathrm{Off}}^{\mathrm{H}} (\mathcal{Q}, \tau_{i}/2 , \delta_{i}) \right) \right).\\
	\end{align*}
	
	For globally observable games,
	
	\begin{align*}
		V_i = C \cdot k \beta \sqrt{\frac{1}{\gamma_{i}}} \tag{for some constant $C$} 
	\end{align*}
	
	\begin{align*}
			\widetilde{\mathrm{Reg}}_T &\leq  &= L \cdot \tau_1 + 2 L^2 \max_{\ell \in [L]}  \frac{\tau_{\ell}}{2 \gamma_{\ell}} \left(\gamma_1 +   \max_{i \in [\ell]}   \frac{\gamma_{i} }{\tau_i }\left( 2 \tau_i C \cdot k \beta \sqrt{\frac{1}{\gamma_{i}}}  + 18 \gamma_i \overline{\mathrm{Reg}}_{\mathrm{Off}}^{\mathrm{H}} (\mathcal{Q}, \tau_{i}/2 , \delta_{i}) \right) \right).
	\end{align*}
	
	set $\gamma_i = \left(\frac{ C \tau_{i}\cdot k \beta}{\overline{\mathrm{Reg}}_{\mathrm{Off}}^{\mathrm{H}} (\mathcal{Q}, \tau_{i}/2, \delta_{i})} \right)^{\frac{2}{3}}$
	
	\begin{align*}
		&\widetilde{\mathrm{Reg}}_T \\
		\spacex &\leq   L \cdot \tau_1 + 2 L^2 \max_{\ell \in [L]}  \frac{\tau_{\ell}}{2 \gamma_{\ell}} \left(\gamma_1 +   \max_{i \in [\ell]} \frac{\gamma_{i}}{\tau_i} \left( 2  \cdot ( C \cdot \tau_i k \beta )^{\frac{2}{3}} (\overline{\mathrm{Reg}}_{\mathrm{Off}}^{\mathrm{H}} (\mathcal{Q}, \tau_{i}/2, \delta_{i}))^{ \frac{1}{3}}  + 18 (C \cdot \tau_i k \beta)^{\frac{2}{3}} (\overline{\mathrm{Reg}}_{\mathrm{Off}}^{\mathrm{H}} (\mathcal{Q}, \tau_{i}/2, \delta_{i}))^{ \frac{1}{3}} \right) \right)\\
		\spacex &=   L \cdot \tau_1 + 2 L^2 \max_{\ell \in [L]}  \frac{\tau_{\ell}}{2 \gamma_{\ell}} \left(\gamma_1 +   20 \max_{i \in [\ell]}\frac{\gamma_{i}}{\tau_i}  ( C \cdot \tau_i k \beta )^{\frac{2}{3}} (\overline{\mathrm{Reg}}_{\mathrm{Off}}^{\mathrm{H}} (\mathcal{Q}, \tau_{i}/2, \delta_{i}))^{ \frac{1}{3}}  \right).\\
		\spacex &=   L \cdot \tau_1 + 2 L^2 \max_{\ell \in [L]}  \frac{\tau_{\ell}}{2 \gamma_{\ell}} \left(\gamma_1 +   20 \max_{i \in [\ell]}\frac{1}{\tau_i}  ( C \cdot \tau_i k \beta )^{\frac{4}{3}} (\overline{\mathrm{Reg}}_{\mathrm{Off}}^{\mathrm{H}} (\mathcal{Q}, \tau_{i}/2, \delta_{i}))^{ - \frac{1}{3}}  \right).\\
		\spacex &\leq L \cdot \tau_1 + 2 L^2 \max_{\ell \in [L]}  \frac{\tau_{\ell}}{2 \gamma_{\ell}} \left(\gamma_1 +   20 \tau_{\ell}^{\frac{1}{3}} ( C \cdot k \beta )^{\frac{4}{3}} ( \max_{i \in [\ell]} \overline{\mathrm{Reg}}_{\mathrm{Off}}^{\mathrm{H}} (\mathcal{Q}, \tau_{i}/2, \delta_{i}))^{ - \frac{1}{3}}  \right).\\
		\spacex &= L \cdot \tau_1 + L^2 \max_{\ell \in [L]}  \frac{ (\overline{\mathrm{Reg}}_{\mathrm{Off}}^{\mathrm{H}} (\mathcal{Q}, \tau_{\ell}/2 , \delta_{\ell}))^{\frac{2}{3}} \tau_{\ell}^{\frac{1}{3}} }{ ( C \cdot k \beta )^{\frac{2}{3}}} \left(\gamma_1 +   20 \tau_{\ell}^{\frac{1}{3}} ( C \cdot k \beta )^{\frac{4}{3}} ( \max_{i \in [\ell]} \overline{\mathrm{Reg}}_{\mathrm{Off}}^{\mathrm{H}} (\mathcal{Q}, \tau_{i}/2, \delta_{i}))^{ - \frac{1}{3}}  \right).\\
		&= O\left( L^2  ( C \cdot k \beta )^{\frac{2}{3}} \max_{\ell \in [L]}  \tau_{\ell}^{\frac{2}{3}}  \frac{\max_{\ell \in [L]} (\overline{\mathrm{Reg}}_{\mathrm{Off}}^{\mathrm{H}} (\mathcal{Q}, \tau_{\ell}/2 , \delta_{\ell}))^{\frac{2}{3}} }{\min_{\ell \in [L]} (\overline{\mathrm{Reg}}_{\mathrm{Off}}^{\mathrm{H}} (\mathcal{Q}, \tau_{\ell}/2 , \delta_{\ell}))^{\frac{1}{3}} }\right) \\
		&= O\left( T^{\frac{2}{3}}\log^2(T) (k \beta )^{\frac{2}{3}} (\overline{\mathrm{Reg}}_{\mathrm{Off}}^{\mathrm{H}} (\mathcal{Q}, T , \delta/ \log (T)))^{\frac{1}{3}} \right) \tag{Here we assume $\overline{\mathrm{Reg}}_{\mathrm{Off}}^{\mathrm{H}} (\mathcal{Q}, t , \delta)$ do not depend on $t$. Generally this is the case.}
	\end{align*}
	
	And for locally observable games,
	
	\begin{align*}
		V_i = D \cdot \frac{m^2 k^3}{\gamma_{i}} \tag{for some constant $D$} 
	\end{align*}
	
	\begin{align*}
		\widetilde{\mathrm{Reg}}_T &\leq  &= L \cdot \tau_1 + 2 L^2 \max_{\ell \in [L]}  \frac{\tau_{\ell}}{2 \gamma_{\ell}} \left(\gamma_1 +   \max_{i \in [\ell]}   \frac{\gamma_{i} }{\tau_i }\left( 2 \tau_i D \cdot m^2 k^3 \frac{1}{\gamma_{i}}  + 18 \gamma_i \overline{\mathrm{Reg}}_{\mathrm{Off}}^{\mathrm{H}} (\mathcal{Q}, \tau_{i}/2 , \delta_{i}) \right) \right).
	\end{align*}
	
	set $\gamma_i = \sqrt{\frac{ D \cdot  \tau_{i}\cdot k^3 m^2 }{\overline{\mathrm{Reg}}_{\mathrm{Off}}^{\mathrm{H}} (\mathcal{Q}, \tau_{i}/2, \delta_{i})} }$

	\begin{align*}
	\widetilde{\mathrm{Reg}}_T &\leq   L \cdot \tau_1 + 2 L^2 \max_{\ell \in [L]}  \frac{\tau_{\ell}}{2 \gamma_{\ell}} \left(\gamma_1 +   20 \max_{i \in [\ell]}   \frac{\gamma_{i} }{\tau_i }\left(   \sqrt{ D \cdot  \tau_{i}\cdot k^3 m^2 \overline{\mathrm{Reg}}_{\mathrm{Off}}^{\mathrm{H}} (\mathcal{Q}, \tau_{i}/2, \delta_{i}) } \right) \right)\\
	&=  L \cdot \tau_1 + 2 L^2 \max_{\ell \in [L]}  \frac{\tau_{\ell}}{2 \gamma_{\ell}} \left(\gamma_1 +   20 D \cdot k^3 m^2\right)\\
	&=  L \cdot \tau_1 + 2 L^2 \max_{\ell \in [L]} \frac{ \sqrt{\tau_{\ell}  \overline{\mathrm{Reg}}_{\mathrm{Off}}^{\mathrm{H}} (\mathcal{Q}, \tau_{\ell}/2, \delta_{\ell})}    }{\sqrt{ D \cdot k^3 m^2}}\left(\gamma_1 +   20 D \cdot k^3 m^2\right)\\
	&= O\left( L^2 \sqrt{T\cdot k^3 m^2 \overline{\mathrm{Reg}}_{\mathrm{Off}}^{\mathrm{H}} (\mathcal{Q}, T/2, \delta/L ) }  \right) \\
	&= O\left( \log^2(T) \sqrt{T\cdot k^3 m^2 \overline{\mathrm{Reg}}_{\mathrm{Off}}^{\mathrm{H}} (\mathcal{Q}, T/2, \delta/\log(T) ) }  \right) \\
	\end{align*}
	
\end{proof}

\section{Empirical studies}\label{app_section_experiments}

\paragraph{Additional LLM-routing details.}
The cached BoolQ dataset contains \(12{,}697\) prompts, of which \(12{,}648\) have parseable responses from both \texttt{Qwen2.5-0.5B-Instruct}, used as the weak model, and \texttt{Llama-2-13B-Chat}, used as the strong reference model. The binary routing outcome indicates whether their cached responses disagree; BoolQ gold labels are not used as routing outcomes. Each prompt is represented by a \(138\)-dimensional context consisting of \(10\) weak-model uncertainty and probability statistics, \(64\) whitened principal components of the weak model's hidden states, and \(64\) whitened principal components of a context-free prompt embedding. All \(12{,}648\) eligible examples are presented online, and outcomes are revealed only after querying the strong model. Each policy makes a routing decision at every round, whereas its estimator is updated only once in $32$ rounds, using all feedback revealed before that round. \textsc{CBPSide}~\citep{bartok2012CBPside,byeongwoo26efficient} uses a linear logistic disagreement model and the confidence radius
\[
\beta_t(x_t)
=
\min\!\left\{
0.5s\sqrt{x_t^\top V_{t-1}^{-1}x_t},
\,0.5
\right\},
\]
where \(s\) is the tuned multiplier. The two \sqpm variants use \(\mu=2\), \(\kappa=s\sqrt{T}\), and inverse-propensity weights capped at \(10\). Their only intended difference is the disagreement-probability estimator. The linear variant fits an \(L_2\)-regularized logistic model with \(C=1\), the \texttt{lbfgs} solver, and a weighted standardization step. The nonlinear variant uses scikit-learn's histogram gradient-boosting (HGB) classifier with binary log loss, \(50\) boosting iterations, learning rate \(0.05\), at most \(15\) leaves per tree, a minimum of \(20\) samples per leaf, \(L_2\) regularization \(1\), and no early stopping. At each eligible update, both estimators are refitted using their complete accumulated inverse-propensity-weighted feedback. \textsc{PG-TS}~\citep{grant21apple} uses a Bayesian logistic disagreement model with
\[
\theta\sim\mathcal{N}\!\left(0,(s\sigma_0)^2I\right),
\qquad \sigma_0=1,
\]
and performs \(15\) Pólya--Gamma Gibbs transitions whenever its posterior is updated. 

Both ETC baselines~\citep{garivier16etc} use an initial exploration budget of $s \cdot T^{2/3}$ rounds to fit
a linear-logistic model and a histogram gradient boosting (HGB) model respectively. The fitted models are then
frozen, and the remaining queries are routed according to them. The random baseline instead routes each query to the strong model with a fixed
probability, independent of context, swept over $\{0, 0.2, 0.4, 0.6, 0.8, 1\}$.

Table~\ref{tab:multiplier} shows how $s$ is used to tune different hyper-parameters for different algorithms. For each policy and each \(\ell_{01}\), we select \(s\in\{0.1,0.3,1,3,10\}\) by minimizing mean total cost over \(20\) paired random permutations of the online arrival order. Random is computed analytically by matching the routing rate of the selected nonlinear \sqpm policy. Figure~\ref{fig:llm_routing} reports mean realized total cost with pointwise \(95\%\) Student-\(t\) confidence intervals. The learning curves in Figure~\ref{fig:routing-cumulative-regret-all} report cumulative excess cost relative to a fixed five-fold cross-fitted HGB reference policy. These extend the results discussed in \Cref{main_section_experiments}. Each reference probability is produced by an HGB model that excludes the evaluated example and is used only for evaluation, never as online feedback. Because multiplier selection and evaluation use the same permutations, all selected-policy results are interpreted as exploratory tuned comparisons.

\paragraph{Observations} From Figure~\ref{fig:llm_routing}, it is evident that, both the linear-logistic and nonlinear HGB instantiations of \sqpm achieve lower mean realized total cost than \textsc{CBPSide}, while the nonlinear HGB variant also outperforms PG-TS. These results illustrate a practical advantage of the oracle-efficient \sqpm formulation: it can incorporate flexible nonlinear prediction models without modifying the surrounding routing policy. Furthermore, generally ETC is expected to perform worse due to $O(T^{2/3})$ regret, however here due to limited test data size  \(12{,}648\) compared to larger dimension \(138\), it performs competitively. It is evident in the learning curves as well (Figure~\ref{fig:routing-cumulative-regret-all}). Moreover, \Cref{fig:routing-accuracy-tradeoff} shows the trade-off between routing accuracy and the rate at which the
strong model is invoked; algorithms achieving higher accuracy at a lower invocation rate are preferable.
\sqpm with the HGB model attains the best trade-off among the algorithms compared.

\begin{table}[h]
	\centering
	\caption{Meaning of the multiplier $s$ for each algorithm.}
	\label{tab:multiplier}
	\begin{tabular}{@{}lll@{}}
		\toprule
		Algorithm & Scaled quantity & Effect of larger $s$ \\
		\midrule
		\textsc{CBPside}         & confidence width $s  w_i$              & more exploration \\
		\textsc{RandCBPside}     & sampled confidence width $s  w_i$      & more exploration \\
		\textsc{SquareCB.PMSide} & IGW rate $\kappa = s\sqrt{T}$       & less exploration \\
		\textsc{ETC}             & exploration length $s\,T^{2/3}$     & more exploration \\
		\textsc{PG-TS}          & prior variance $s\sigma_0$       				& more exploration \\
		\bottomrule
	\end{tabular}
\end{table}

\newcommand{\routingfigdir}{figures/llm-routing}

\begin{figure}[t]
	\centering
	\includegraphics[width=\linewidth]
	{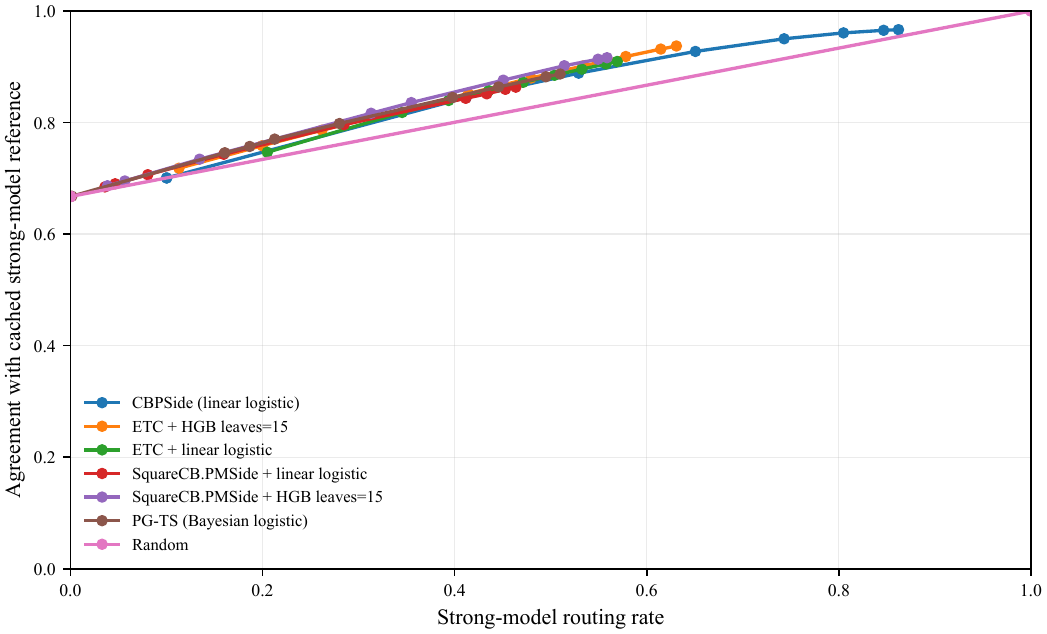}
	\caption{
		Routing–accuracy trade-off on BoolQ.
		The horizontal axis reports the fraction of prompts routed to the
		strong model, while the vertical axis reports agreement with the
		cached strong-model response. Thus, points closer to the upper-left
		corner are preferable, as they achieve higher agreement while making
		fewer strong-model queries.
		Each marker is the mean over 20 shuffled online orders, with confidence
		intervals omitted for visual clarity.
		Along each curve, the connected markers follow increasing values
		$\ell_{01}\in\{1.8,2.0,2.2,2.4,2.6,2.8,3.0,3.2,3.3\}$.
		Increasing $\ell_{01}$ assigns a larger penalty to an incorrect
		weak-model response and therefore generally moves the policy toward
		more strong-model routing and higher agreement.
	}
	\label{fig:routing-accuracy-tradeoff}
\end{figure}

\begin{figure*}[p]
	\centering
	\begin{adjustbox}{
			max totalsize={\textwidth}{0.82\textheight},
			center
		}
		\begin{tabular}{@{}cc@{}}
			\subcaptionbox{\(\ell_{01}=1.8\)
				\label{fig:routing-cum-l01-18}}[0.46\textwidth]{
				\includegraphics[width=\linewidth]
				{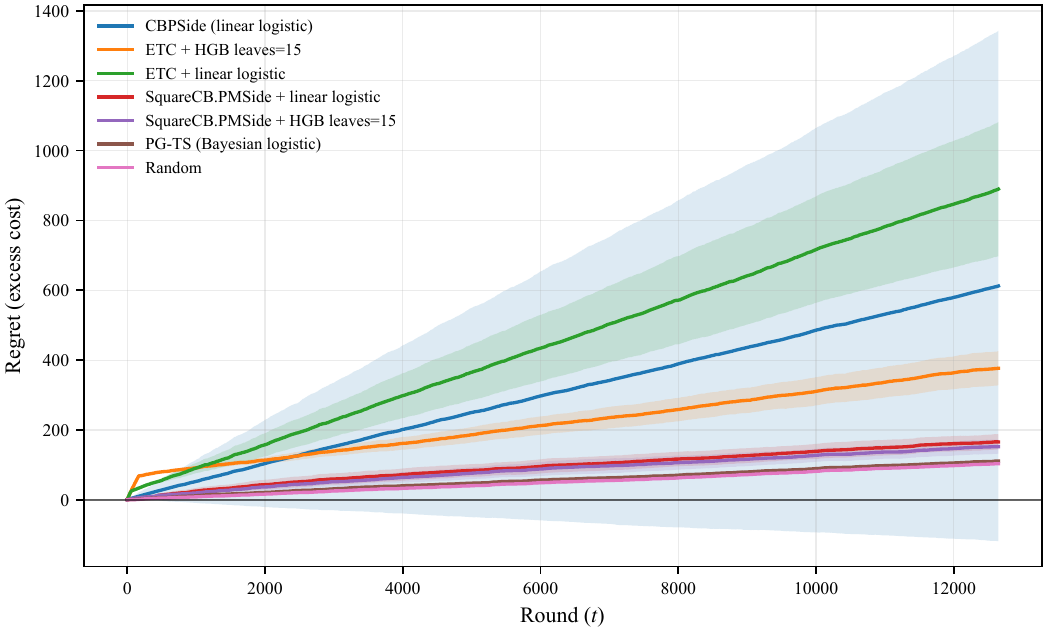}
			}
			&
			\subcaptionbox{\(\ell_{01}=2.0\)
				\label{fig:routing-cum-l01-20}}[0.46\textwidth]{
				\includegraphics[width=\linewidth]
				{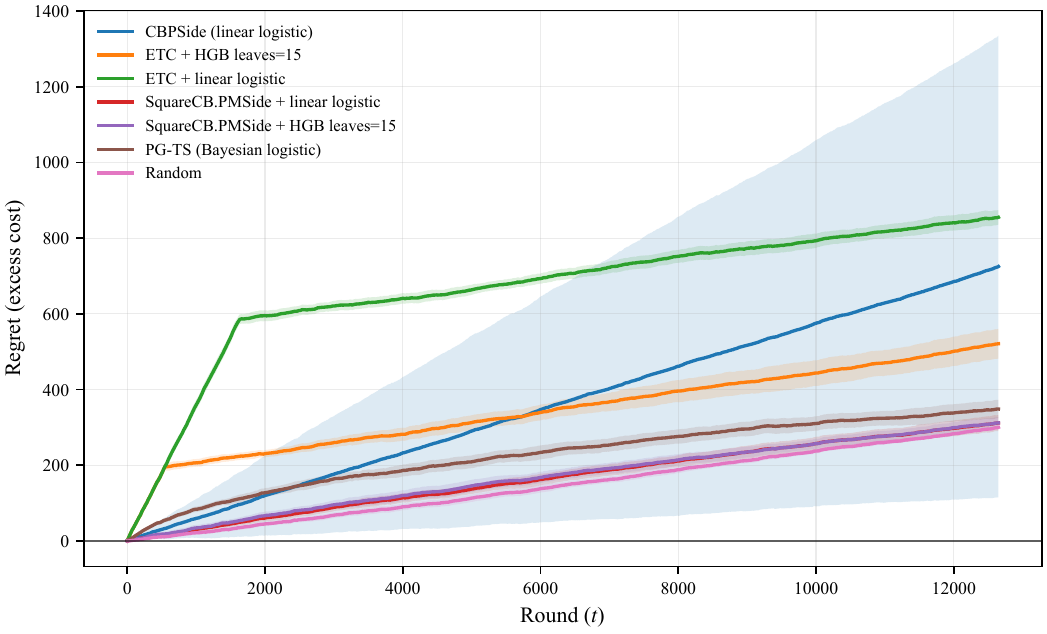}
			}
			\\[2pt]
			
			\subcaptionbox{\(\ell_{01}=2.2\)
				\label{fig:routing-cum-l01-22}}[0.46\textwidth]{
				\includegraphics[width=\linewidth]
				{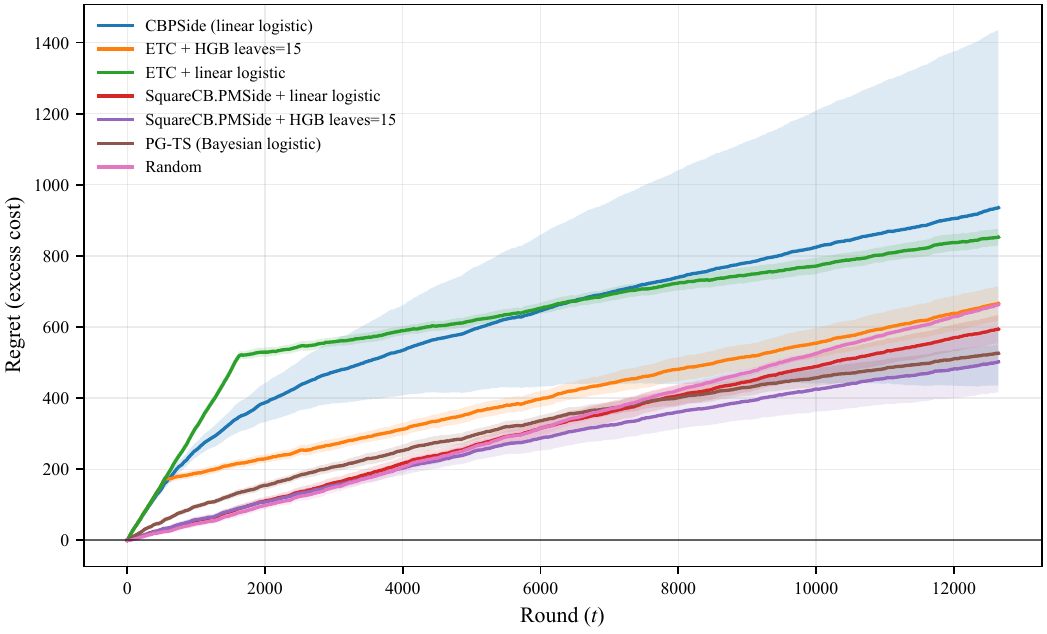}
			}
			&
			\subcaptionbox{\(\ell_{01}=2.4\)
				\label{fig:routing-cum-l01-24}}[0.46\textwidth]{
				\includegraphics[width=\linewidth]
				{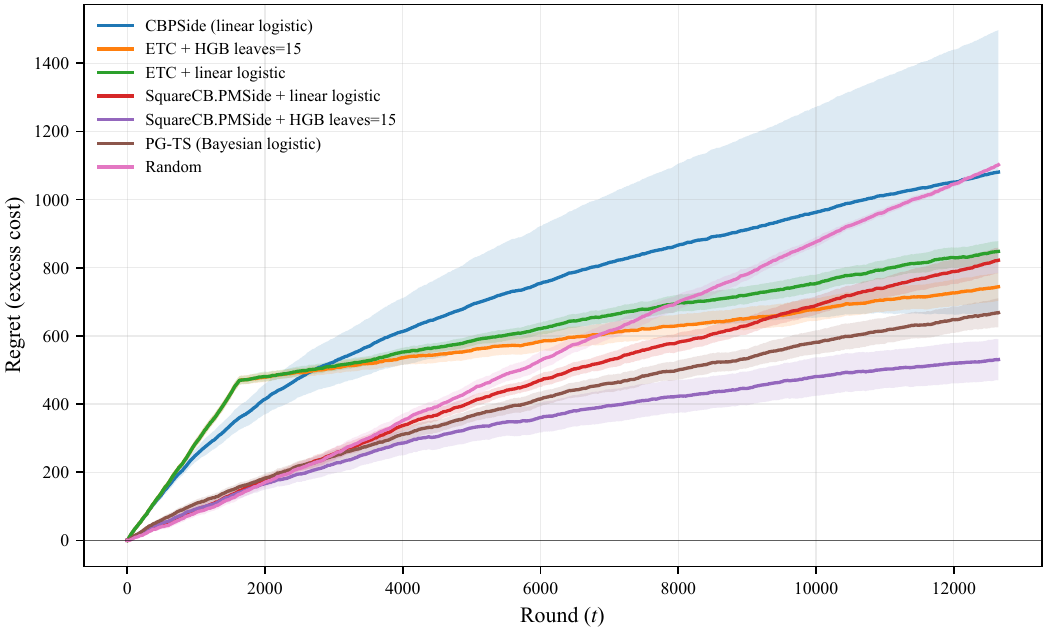}
			}
			\\[2pt]
			
			\subcaptionbox{\(\ell_{01}=2.6\)
				\label{fig:routing-cum-l01-26}}[0.46\textwidth]{
				\includegraphics[width=\linewidth]
				{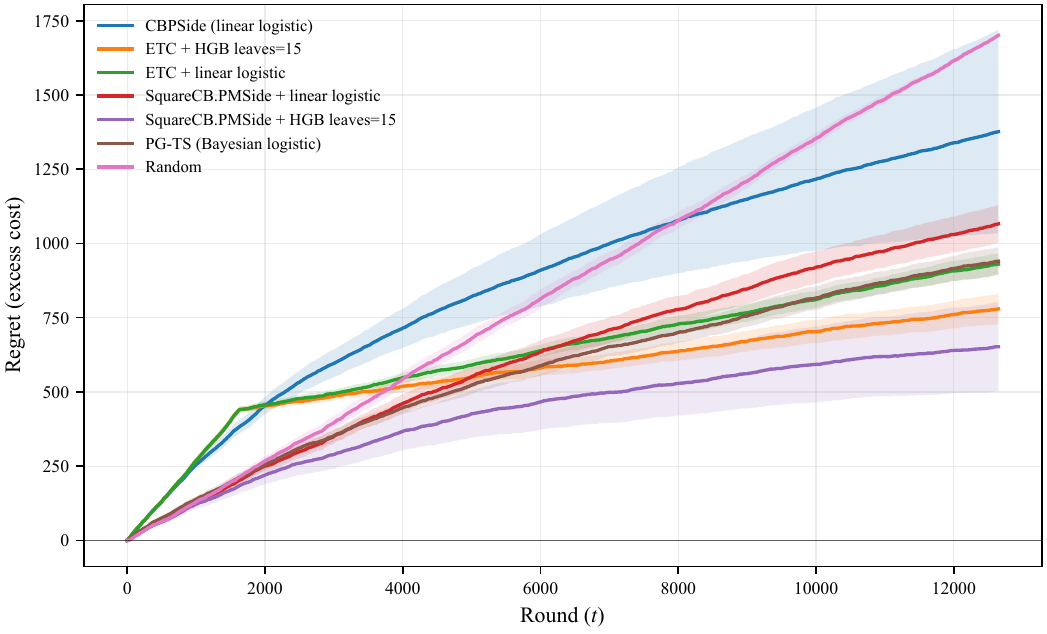}
			}
			&
			\subcaptionbox{\(\ell_{01}=2.8\)
				\label{fig:routing-cum-l01-28}}[0.46\textwidth]{
				\includegraphics[width=\linewidth]
				{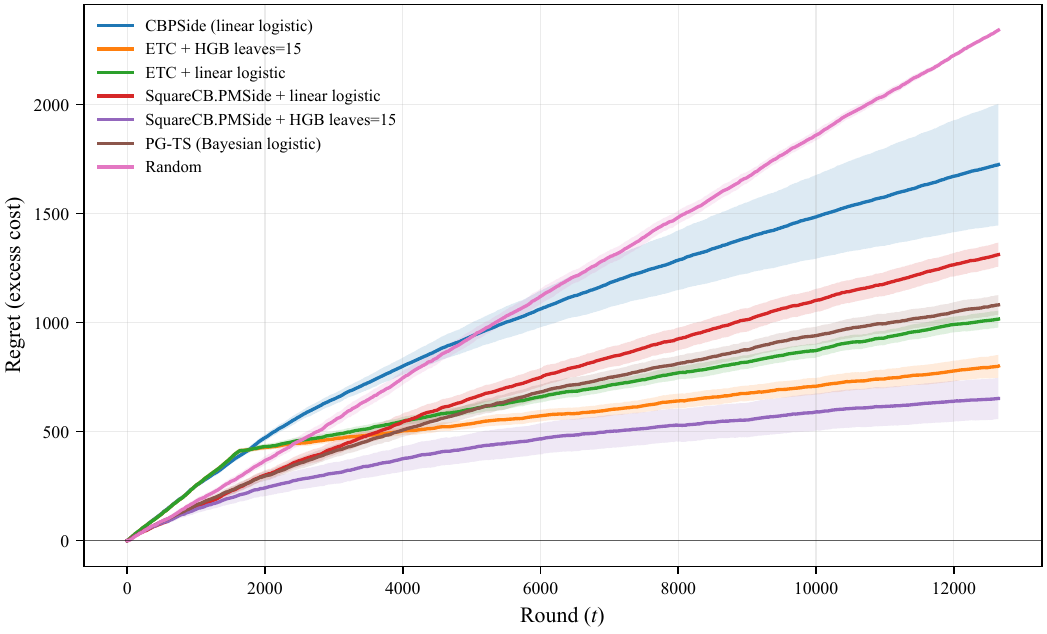}
			}
			\\[2pt]
			
			\subcaptionbox{\(\ell_{01}=3.0\)
				\label{fig:routing-cum-l01-30}}[0.46\textwidth]{
				\includegraphics[width=\linewidth]
				{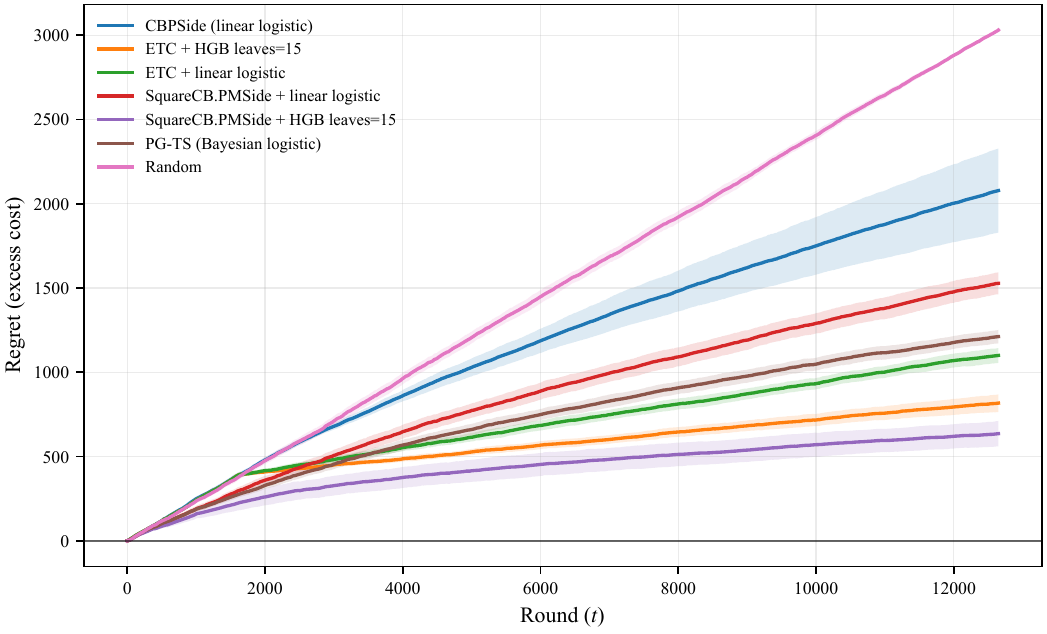}
			}
			&
			\subcaptionbox{\(\ell_{01}=3.2\)
				\label{fig:routing-cum-l01-32}}[0.46\textwidth]{
				\includegraphics[width=\linewidth]
				{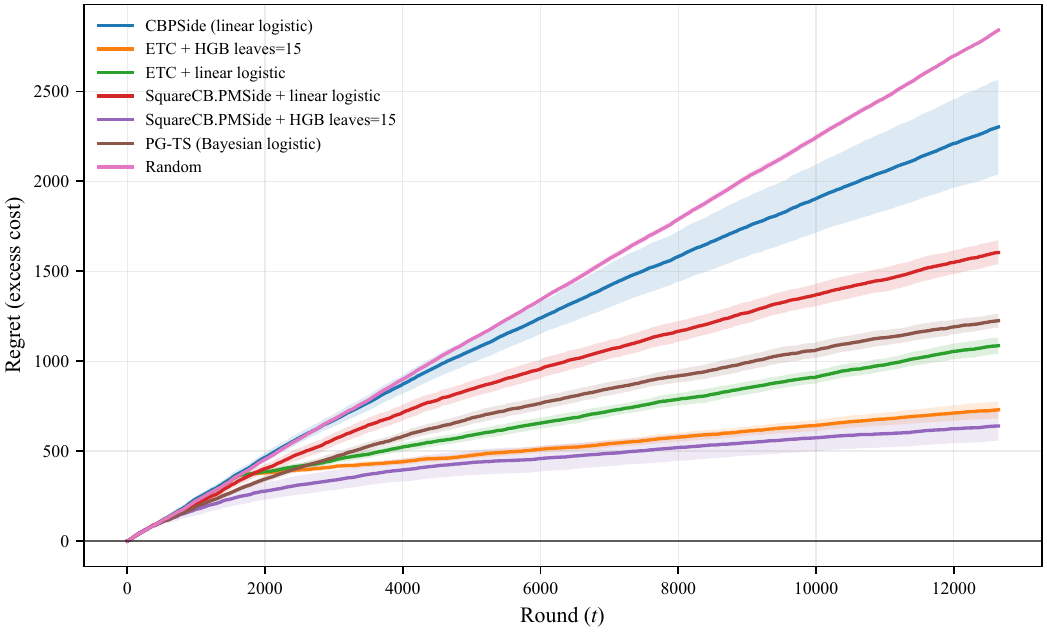}
			}
			\\[2pt]
			
			\multicolumn{2}{c}{
				\subcaptionbox{\(\ell_{01}=3.3\)
					\label{fig:routing-cum-l01-33}}[0.46\textwidth]{
					\includegraphics[width=\linewidth]
					{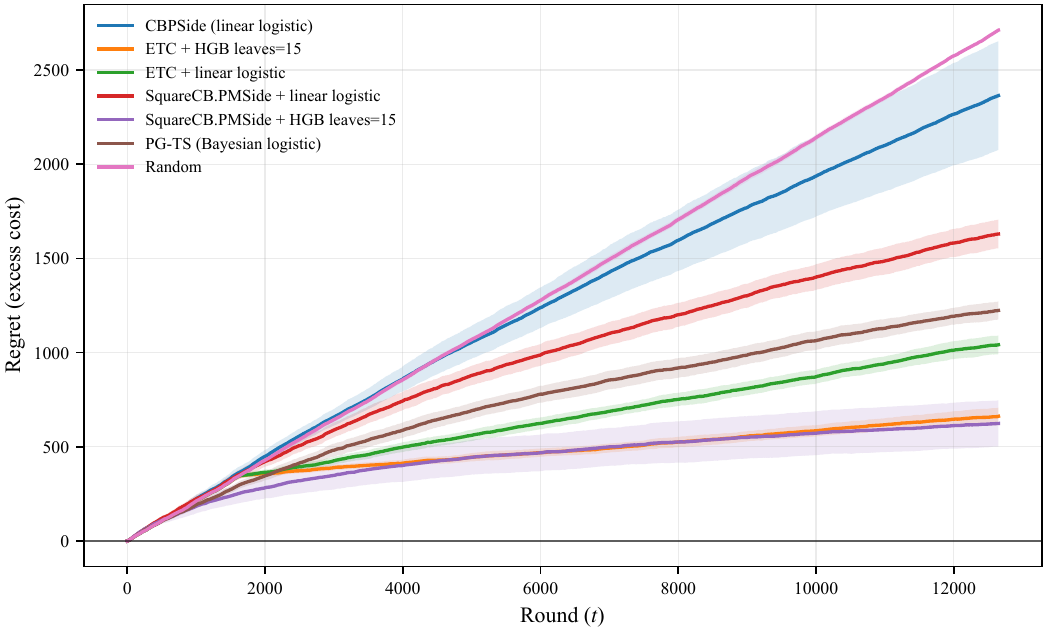}
				}
			}
		\end{tabular}
	\end{adjustbox}
	
	\caption{
		Cumulative empirical regret, measured as excess realized cost relative
		to the fixed five-fold cross-fitted HGB reference policy, for each
		value of \(\ell_{01}\). Curves show means over \(20\) paired random
		arrival orders, and shaded regions are pointwise \(95\%\)
		Student-\(t\) confidence intervals. Lower values are better; negative
		regret indicates that the online policy incurs lower realized cost
		than the fixed reference policy.
	}
	\label{fig:routing-cumulative-regret-all}
\end{figure*}


\paragraph{Additional contextual label-efficient prediction experimental details.}
Contexts are drawn i.i.d.\ as \(x_t\sim\mathrm{Unif}\big([0,1]^{10}\big)\), and a
single weight vector \(w\) is sampled once from \(\mathrm{Unif}(0,0.1)^{10}\) and
normalized so that \(w_i\ge 0\) and \(\sum_i w_i=1\); consequently \(w^\top x\)
is a convex combination of the coordinates of \(x\) and lies in \([0,1]\) by
construction. The binary outcome is the deterministic threshold
\(y_t=\onec{w^\top x_t\ge 0.5}\). We evaluate \(c\in\{0.1,0.2,0.3,0.4,0.45\}\); we stop short of $c = 0.5$ because at that point the cell of the expert action collapses to a single point:
the action becomes degenerate, the two prediction actions become neighbors, and the game falls outside the
non-degenerate class that our analysis covers. Each experiment uses horizon \(T=3000\) and is repeated over \(20\) independent
seeds, and for each policy and each \(c\) we select \(s\in\{0.1,0.3,1,3,10\}\) by
minimizing mean total cost over those \(20\) paired seeds. Each method has exactly one exploration parameter. To place them on a common
footing we scale each by a shared multiplier $s$, sweeping
$s\in\{0.1,0.3,1,3,10\}$ as summarised in Table~\ref{tab:multiplier}. Furthermore, The random baseline instead routes each query to the strong model with a fixed
probability, independent of context, swept over $\{0, 0.2, 0.4, 0.6, 0.8, 1\}$.

\paragraph{Observations } \Cref{fig:le_regret} shows total cost as a function of $c$, and \Cref{fig:curves-all} the learning curves for
all compared algorithms across the full sweep. These extend the results discussed in \Cref{main_section_experiments}.

\begin{figure}[t]
	\centering
	\begin{subfigure}[b]{0.32\linewidth}
		\includegraphics[width=\linewidth]{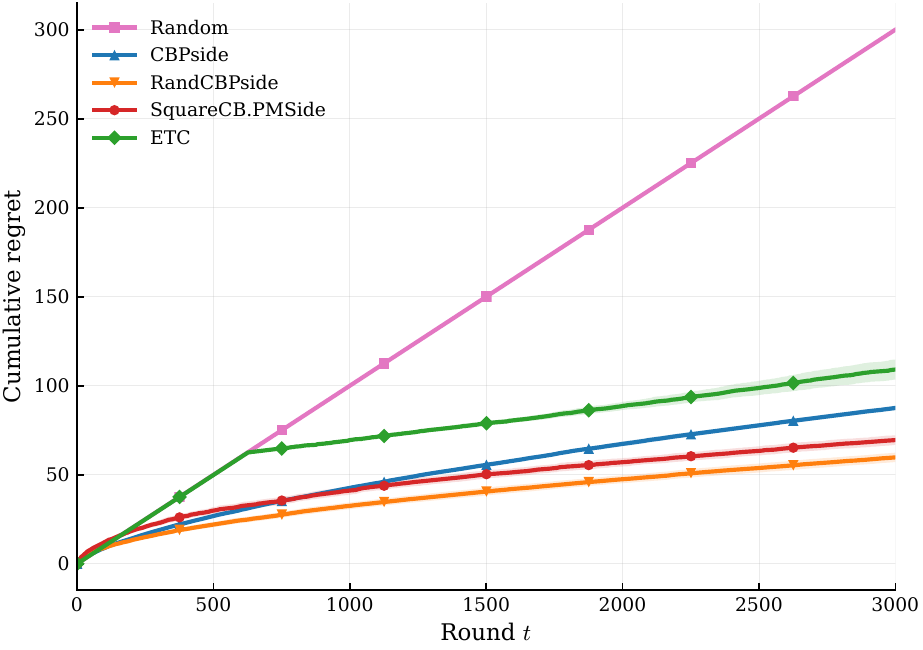}
		\caption{$c=0.1$}
		\label{fig:curves-c01}
	\end{subfigure}
	\hfill
	\begin{subfigure}[b]{0.32\linewidth}
		\includegraphics[width=\linewidth]{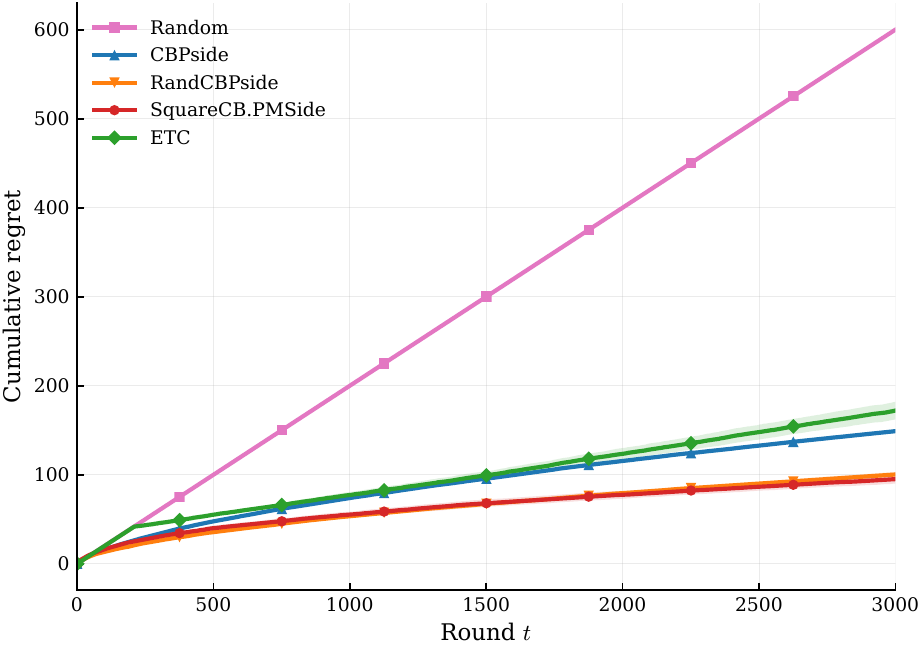}
		\caption{$c=0.2$}
		\label{fig:curves-c02}
	\end{subfigure}
	\hfill
	\begin{subfigure}[b]{0.32\linewidth}
		\includegraphics[width=\linewidth]{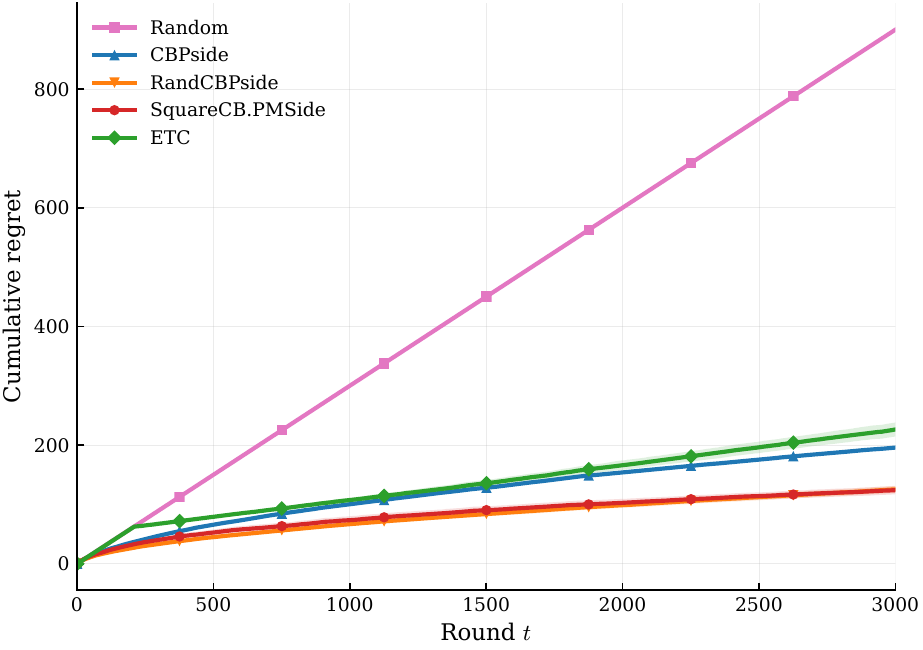}
		\caption{$c=0.3$}
		\label{fig:curves-c03}
	\end{subfigure}
	
	\vspace{0.8em}
	
	\begin{subfigure}[b]{0.32\linewidth}
		\includegraphics[width=\linewidth]{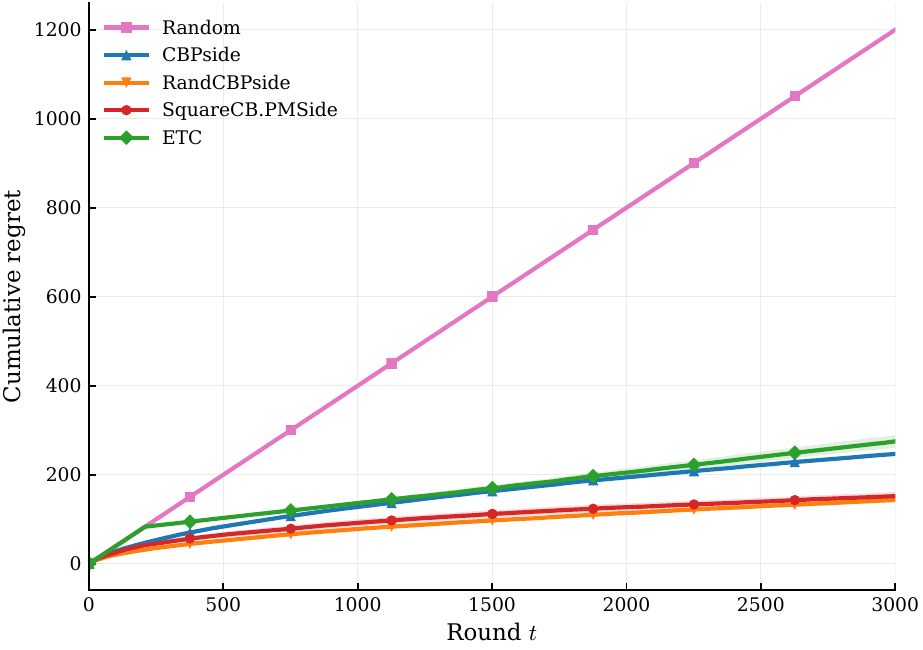}
		\caption{$c=0.4$}
		\label{fig:curves-c04}
	\end{subfigure}
	\hfill
	\begin{subfigure}[b]{0.32\linewidth}
		\includegraphics[width=\linewidth]{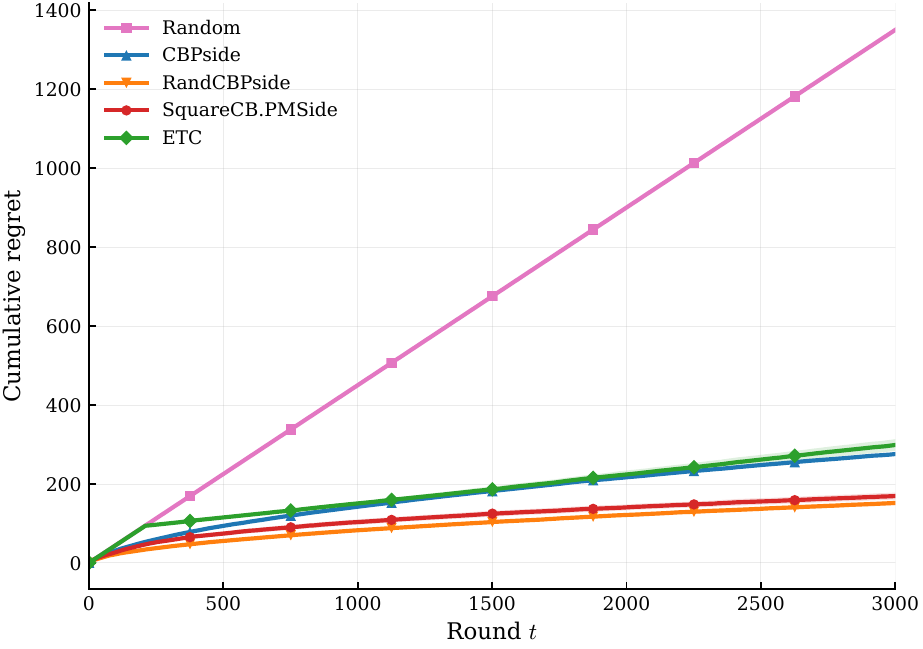}
		\caption{$c=0.45$}
		\label{fig:curves-c045}
	\end{subfigure}
	\hfill
	\begin{subfigure}[b]{0.32\linewidth}
		\phantom{\includegraphics[width=\linewidth]{figures/le_game/sweep_curves_c0.45.pdf}}
	\end{subfigure}
	
	\caption{%
		Cumulative regret against round $t$ for each value of the information cost
		$c$, with every algorithm at its best multiplier. Shaded bands are $95\%$
		Student-$t$ confidence intervals for the mean over $20$ seeds.
	}
	\label{fig:curves-all}
\end{figure}

\section{Algorithm for \texorpdfstring{$\boldsymbol{L}q$}{Lq}-decreasing tree construction}\label{app_section_decreasing_tree}
Given a locally observable game $\mathcal{G} = (\boldsymbol{L}, \boldsymbol{\Phi})$ and an outcome distribution $q \in \Delta(d)$, we denote $\ell_a := e_a^\T \boldsymbol{L} q$ for the loss of action $a$ under the outcome distribution $q$. Since a partial monitoring game can be represented as a neighborhood graph $(\Pi, \mathcal{E})$, we can construct a spanning in-tree $\mathcal{T} \subseteq \mathcal{E}$ rooted at the action with the smallest loss $\hat{a} \in \argmin_{a \in \Pi} \ell_a$. The following algorithm constructs an $\boldsymbol{L}q$-decreasing in-tree by selecting each action's best neighbor to be its parent, then constructing the in-tree $\mathcal{T}$ with the selected neighbors as edges.

\begin{algorithm}[H]
	\caption{Constructing an $\boldsymbol{L}q$-decreasing in-tree by selecting each action's parent}
	\label{alg:construct-in-tree}
	\begin{algorithmic}[1]
		\STATE \textbf{Input:} A locally observable game $\mathcal{G} = (\boldsymbol{L}, \boldsymbol{\Phi})$, its neighborhood graph $(\Pi, \mathcal{E})$, and $q \in \Delta(d)$.
		\STATE \textbf{Output:} A spanning in-tree $\mathcal{T} \subseteq \mathcal{E}$ with root $\mathrm{root}_{\mathcal{T}} = \hat{a}$ such that $\boldsymbol{L} q$ is $\mathcal{T}$-decreasing.
		\STATE For each $a \in \Pi$, compute $\ell_a = e_a^\T \boldsymbol{L} q$, and choose the root $\hat{a} \in \argmin_{a \in \Pi} \ell_a$.
		\STATE Choose $q^\circ \in \operatorname{ri}(\mathcal{C}_{\hat{a}})$, which is the relative interior of the cell $\mathcal{C}_{\hat{a}}$.
		\STATE Initialize $\mathcal{T} \gets \emptyset$.
		\FOR{each $a \in \Pi \setminus \{\hat{a}\}$}
		\STATE Find the best neighbors $B_a \gets \argmin_{b :\, (a,b) \in \mathcal{E}} \ell_b$.
		\STATE Choose $\mathrm{par}_{\mathcal{T}}(a) \in \argmin_{b \in B_a} e_b^\T \boldsymbol{L} q^\circ$, breaking any remaining ties arbitrarily.
		\STATE $\mathcal{T} \gets \mathcal{T} \cup \{(a, \mathrm{par}_{\mathcal{T}}(a))\}$.
		\ENDFOR
		\RETURN the in-tree $\mathcal{T}$ with $\mathrm{root}_{\mathcal{T}} = \hat{a}$.
	\end{algorithmic}
\end{algorithm}

The proof of such an $\boldsymbol{L}q$-decreasing in-tree construction is provided in \citet[Lemma 19]{lattimore2020exploration}.

\section{Auxiliary Lemmata}\label{section_lemmata_contextual}

\begin{lemma}[Pareto optimal comparators suffice]\label{lemma_pareto_suffices}
	For every $q \in \Delta(d)$,
	\begin{align*}
		\min_{\lambda \in \Delta(\Pi)} \lambda^\T \boldsymbol{L} q
		= \min_{a \in \Pi} e_a^\T \boldsymbol{L} q
		= \min_{a \in [k]} e_a^\T \boldsymbol{L} q
		= \min_{\lambda \in \Delta(k)} \lambda^\T \boldsymbol{L} q .
	\end{align*}
\end{lemma}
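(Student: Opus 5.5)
The plan is to prove the chain of equalities by showing each side is sandwiched between the two extreme quantities. We use two facts: a linear function on a simplex attains its minimum at a vertex, and every dominated or degenerate action is weakly beaten by a Pareto optimal action. Since linear minimization over a face of the simplex is attained at a vertex of that face, we get $\min_{\lambda \in \Delta(\Pi)} \lambda^\T \boldsymbol{L} q = \min_{a \in \Pi} e_a^\T \boldsymbol{L} q$ and $\min_{\lambda \in \Delta(k)} \lambda^\T \boldsymbol{L} q = \min_{a \in [k]} e_a^\T \boldsymbol{L} q$. Indeed, $\lambda^\T \boldsymbol{L} q = \sum_a \lambda_a e_a^\T \boldsymbol{L} q \geq \min_a e_a^\T \boldsymbol{L} q$, and the bound is attained by a point mass. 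Also, $\Pi \subseteq [k]$ gives $\min_{a \in \Pi} \geq \min_{a \in [k]}$ trivially. So everything reduces to the middle inequality $\min_{a \in \Pi} e_a^\T \boldsymbol{L} q \leq \min_{a \in [k]} e_a^\T \boldsymbol{L} q$.

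For that step, fix $q$ and let $a^\star \in \argmin_{a \in [k]} e_a^\T \boldsymbol{L} q$, so $q \in \mathcal{C}_{a^\star}$. It suffices to show $q \in \mathcal{C}_b$ for some $b \in \Pi$. The approach is to show that the Pareto cells cover the simplex, $\bigcup_{b \in \Pi} \mathcal{C}_b = \Delta(d)$.

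\begin{enumerate}
\item The cells $\{\mathcal{C}_a\}_{a \in [k]}$ are closed polytopes, and they cover $\Delta(d)$ because some action always attains the minimum.
\item Every non-Pareto cell has dimension at most $d-2$ (degenerate) or is empty (dominated). A finite union of such sets is a closed, measure-zero (nowhere dense) subset of the $(d-1)$-dimensional simplex.
\item Hence $U := \bigcup_{b \in \Pi} \mathcal{C}_b$ is closed and contains the complement of a nowhere dense set, so $U$ is dense in $\Delta(d)$.
\item A dense closed subset equals the whole space, so $U = \Delta(d)$.
\end{enumerate}

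Alternatively, one can argue by continuity. Take $q_n \to q$ with each $q_n$ avoiding all non-Pareto cells. Each $q_n$ then lies in some $\mathcal{C}_{b_n}$ with $b_n \in \Pi$. Passing to a subsequence with a constant $b_n = b$, closedness gives $q \in \mathcal{C}_b$, so $e_b^\T \boldsymbol{L} q = \min_{a \in [k]} e_a^\T \boldsymbol{L} q$.

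The main obstacle is the measure-theoretic or density step: justifying that lower-dimensional cells cannot cover an open subset of the simplex. I would handle it by noting that any polytope of affine dimension at most $d-2$ lies in a proper affine subspace of the $(d-1)$-dimensional affine hull of $\Delta(d)$. A finite union of such subspaces has empty interior relative to $\Delta(d)$, for example by a Baire or Lebesgue-measure argument. This is where Assumption~\ref{assumption_ono_degen} is not even needed; the lemma holds for general finite games.
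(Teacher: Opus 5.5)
Your proposal is correct and follows essentially the same route as the paper. You reduce the outer equalities to linear minimization over a simplex face, then show $\bigcup_{b \in \Pi} \mathcal{C}_b = \Delta(d)$ because the non-Pareto cells form a closed set with empty relative interior, so the Pareto cells form a dense closed subset of $\Delta(d)$. Your sequential variant is the same density-plus-closedness argument, and you are right that non-degeneracy is not needed.
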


\begin{proof}
	A linear function on a simplex attains its minimum at a vertex, which gives the outer two equalities, so it
	suffices to show $\bigcup_{a \in \Pi} \mathcal{C}_a = \Delta(d)$. The cells cover $\Delta(d)$ by
	Definition~\ref{def_cells}, and $\bigcup_{a \notin \Pi} \mathcal{C}_a$ is a finite union of polytopes of dimension at
	most $d-2$, hence has empty interior relative to the $(d-1)$-dimensional set $\Delta(d)$. So
	$\bigcup_{a \in \Pi} \mathcal{C}_a$ is dense in $\Delta(d)$; being a finite union of closed sets it is closed,
	and therefore equals $\Delta(d)$.
\end{proof}

\newtheorem*{app_lemmata_theorem_regret_concentration}{Theorem \ref{app_theorem_regret_concentration}}

\begin{app_lemmata_theorem_regret_concentration}[Regret concentration generalized from \citet{qin2026comb}] \label{app_lemmata_theorem_regret_concentration}
	Let $\Lambda$ be a set of policies, $\mathcal{L}(\cdot \mid x): \Lambda \to [0, 1]$ a value function and $\mathcal{L}^\star(\cdot \mid x)$ and $\widehat{\mathcal{L}}_{\ell}(\cdot \mid x)$, $\ell \in [L]$ are value functions measured under the true model and estimated models respectively.
	$\lambda^\star \in \Lambda$ and $\hat{\lambda}_{\ell} \in \Lambda$ are respectively the optimal policies under respective true and estimated models. Let $C_{\ell} : \Lambda \times \Lambda \to \RR_+$ be coverage maps, $p_{\ell} \in \Lambda$ are decisions, $0 < \gamma_1 \leq \gamma_2 \leq \cdots \leq \gamma_L$ learning rates, and $V_{\ell}, E_{\ell} \geq 0$ numbers such that,
	\begin{enumerate}
		\item[\textup{(C1)}] \textup{(certificate)} for every $\ell \geq 1$ and every $\lambda \in \Lambda$,
		\[
		\widehat{\mathcal{L}}_{\ell}(p_{\ell} \mid x) - \widehat{\mathcal{L}}_{\ell}(\lambda\mid x)  + \frac{1}{\gamma_{\ell}}\, C_{\ell}(p_{\ell}, \lambda) \leq V_{\ell}.
		\]
		\item[\textup{(C2)}] \textup{(off-policy evaluation)} for every $\ell \geq 2$ and every $\lambda \in \Lambda$,
		\[
		\abs{ \widehat{\mathcal{L}}_{\ell}(\lambda - p_{\ell-1}\mid x) - \mathcal{L}^\star(\lambda- p_{\ell-1}\mid x) }
		\leq
		\frac{1}{6 \gamma_{\ell}}\, C_{\ell-1}(p_{\ell-1}, \lambda) + \frac{3 \gamma_{\ell}}{2}\, E_{\ell} .
		\]
	\end{enumerate}
	Define $G_1 \coloneqq 1$ and, for $\ell \geq 2$,
	\[
	\gamma_{\ell} G_{\ell} \coloneqq \gamma_{\ell-1} G_{\ell-1} + \gamma_{\ell-1} V_{\ell-1} + 9 \gamma_{\ell}^2 E_{\ell} ,
	\qquad\text{i.e.}\qquad
	G_{\ell} = \frac{1}{\gamma_{\ell}} \sbr{ \gamma_1 + \sum_{i=2}^{\ell} ( \gamma_{i-1} V_{i-1} + 9 \gamma_i^2 E_i ) }  .
	\]
	Then, for every $\ell \in [L]$ and every $\lambda \in \Lambda$,
	\begin{align}
		\REG(\lambda\mid x) \leq 2 \HREG_{\ell}(\lambda\mid x) + G_{\ell}
		\qquad\text{and}\qquad
		\HREG_{\ell}(\lambda\mid x) \leq 2 \REG(\lambda\mid x) + G_{\ell} .
		\label{eqn:oe2d_pm_concentration}
	\end{align}
	Consequently $\REG(p_{\ell}\mid x) \leq 2 V_{\ell} + G_{\ell}$ for every $\ell$, and
	\begin{align}
		\sum_{\ell=1}^L (\tau_{\ell} - \tau_{\ell-1})\, \REG(p_{\ell}\mid x)
		\leq
		\tau_1 + \sum_{\ell =2}^L \frac{\tau_{\ell} - \tau_{\ell-1}}{\gamma_{\ell}} \sbr{ 2 \gamma_{\ell} V_{\ell} + \gamma_1 + \sum_{i=2}^{\ell} ( \gamma_{i-1} V_{i-1} + 9 \gamma_i^2 E_i) }  .
		\label{eqn:oe2d-pm-total-regret}
	\end{align}
\end{app_lemmata_theorem_regret_concentration}

\begin{proof}
	We prove Eq.~\eqref{eqn:oe2d_pm_concentration} by induction on $\ell$. For $\ell = 1$ both inequalities hold with $G_1 = 1$, since $\REG$ and $\HREG_1$ take values in $[0, 1]$. Fix $\ell \geq 2$ and assume Eq.~\eqref{eqn:oe2d_pm_concentration} to hold at $\ell - 1$. 
	
	First, we establish that, for every $\lambda$, our policy and estimator guarantees Low Regret (LR) and Good Coverage(GC).
	
	That is, from (C1) for epoch $\ell-1$, we have
	
	\begin{align*}
		\widehat{\mathcal{L}}_{\ell-1}(p_{\ell-1}\mid x) - \widehat{\mathcal{L}}_{\ell-1}(\lambda\mid x)  + \frac{1}{\gamma_{\ell-1}}\, C_{\ell-1}(p_{\ell-1}, \lambda) &\leq V_{\ell-1} \\
		\widehat{\mathcal{L}}_{\ell-1}(p_{\ell-1}\mid x) - \widehat{\mathcal{L}}_{\ell-1}(\hat{\lambda}_{\ell-1}\mid x) - (\widehat{\mathcal{L}}_{\ell-1}(\lambda\mid x) -\widehat{\mathcal{L}}_{\ell-1}(\hat{\lambda}_{\ell-1}\mid x) )+ \frac{1}{\gamma_{\ell-1}}\, C_{\ell-1}(p_{\ell-1}, \lambda) &\leq V_{\ell-1} \\
		\HREG_{\ell-1}(p_{\ell-1}\mid x) + \left(\frac{1}{\gamma_{\ell-1}}\, C_{\ell-1}(p_{\ell-1}, \lambda) - \HREG_{\ell-1}(\lambda\mid x) \right) &\leq V_{\ell-1} \\
	\end{align*}
	
	Above statement is true for all $\lambda$, including the the case where  $\HREG_{\ell-1}(\lambda\mid x) = 0$ and $C_{\ell}(p_{\ell-1}, \lambda)$ is always $\geq 0$.
	Hence,
	
	\begin{align}\label{eq_low_regret}
		\HREG_{\ell-1}(p_{\ell-1}\mid x)   &\leq V_{\ell-1}. \tag{Low Regret (LR)} 
	\end{align}
	
	Also,
	\begin{align*}
		\left(\frac{1}{\gamma_{\ell-1}}\, C_{\ell-1}(p_{\ell-1}, \lambda) - \HREG_{\ell-1}(\lambda\mid x) \right) &\leq V_{\ell-1}.
	\end{align*}
	Hence,
	\begin{align}
		C_{\ell-1}(p_{\ell-1}, \lambda) \leq \gamma_{\ell-1} \sbr{ V_{\ell-1} + \HREG_{\ell-1}(\lambda\mid x) }.\tag{Good Coverage (GC)}
		\label{eqn:oe2d-pm-certificate-use}
	\end{align}
	
	Moreover,
	\begin{align*}
		&\mathrm{Reg}(\lambda \mid x ) - \widehat{\mathrm{Reg}}_{\ell} (\lambda \mid x)\\
		&\spacex=  \mathcal{L}^\star(\lambda \mid x) - \mathcal{L}^\star(\lambda^* \mid x) - \left( \widehat{\mathcal{L}}_{\ell}(\lambda \mid x) - \widehat{\mathcal{L}}_{\ell}(\hat{\lambda}_{\ell} \mid x) \right) \\
		&\spacex=  \mathcal{L}^\star(\lambda - p_{\ell-1} \mid x) - \mathcal{L}^\star(\lambda^*- p_{\ell-1}  \mid x) - \left( \widehat{\mathcal{L}}_{\ell}(\lambda - p_{\ell-1}  \mid x) - \widehat{\mathcal{L}}_{\ell}(\hat{\lambda}_{\ell} - p_{{\ell}-1} \mid x) \right) \\
		&\spacex=  \mathcal{L}^\star(\lambda - p_{\ell-1} \mid x) - \widehat{\mathcal{L}}_{\ell}(\lambda - p_{\ell-1}  \mid x)  - \left( \mathcal{L}^\star(\lambda^*- p_{\ell-1}  \mid x)  - \widehat{\mathcal{L}}_{\ell}(\hat{\lambda}_{\ell} - p_{{\ell}-1} \mid x) \right) \\
		&\spacex\leq  \mathcal{L}^\star(\lambda - p_{{\ell}-1} \mid x) - \widehat{\mathcal{L}}_{\ell}(\lambda - p_{{\ell}-1}  \mid x)  - \left( \mathcal{L}^\star(\lambda^*- p_{{\ell}-1}  \mid x)  - \widehat{\mathcal{L}}_{\ell}(\lambda^*  - p_{{\ell}-1} \mid x) \right) \tag{$\widehat{\mathcal{L}}_{\ell}(\lambda^* \mid x ) \geq \widehat{\mathcal{L}}_{\ell}(\hat{\lambda}_{\ell} \mid x)$}\\
		&\spacex\leq \lvert \mathcal{L}^\star(\lambda - p_{\ell-1} \mid x) - \widehat{\mathcal{L}}_{\ell}(\lambda - p_{\ell-1}  \mid x)  \rvert + \lvert   \mathcal{L}^\star(\lambda^\star- p_{\ell-1}  \mid x)  - \widehat{\mathcal{L}}_{\ell}(\lambda^* - p_{\ell-1} \mid x)  \rvert \\
		&\spacex\leq \frac{1}{6 \gamma_{\ell}}\, C_{{\ell}-1}(p_{{\ell}-1}, \lambda) + \frac{3 \gamma_{\ell}}{2}\, E_{\ell}  + \frac{1}{6 \gamma_{\ell}}\, C_{{\ell}-1}(p_{{\ell}-1}, \lambda^*) + \frac{3 \gamma_{\ell}}{2}\, E_{\ell} \tag{C2}\\
		&\spacex\leq \frac{1}{6 \gamma_{\ell}}\, \gamma_{\ell-1} \sbr{ V_{\ell-1} + \HREG_{\ell-1}(\lambda \mid x) }   + \frac{3 \gamma_{\ell}}{2}\, E_{\ell}  + \frac{1}{6 \gamma_{\ell}}\, \gamma_{\ell-1} \sbr{ V_{\ell-1} + \HREG_{\ell-1}(\lambda^\star \mid x) }  + \frac{3 \gamma_{\ell}}{2}\, E_{\ell} \tag{ \eqref{eqn:oe2d-pm-certificate-use}} \\
		&\spacex\leq \frac{1}{6 \gamma_{\ell}}\, \gamma_{\ell-1} \sbr{ V_{\ell-1} + 2 \REG(\lambda \mid x) + G_{\ell-1}  }   + \frac{3 \gamma_{\ell}}{2}\, E_{\ell}  + \frac{1}{6 \gamma_{\ell}}\, \gamma_{\ell-1} \sbr{ V_{\ell-1} +2 \REG(\lambda^\star \mid x) + G_{\ell-1} }  + \frac{3 \gamma_{\ell}}{2}\, E_{\ell} \tag{ \eqref{eqn:oe2d_pm_concentration} true for $\ell-1$} \\
		&\spacex\leq \frac{1}{6 \gamma_{\ell}}\, \gamma_{\ell-1} \sbr{ D_{\ell-1} + 2 \REG(\lambda \mid x) + G_{\ell-1}  }   + \frac{3 \gamma_{\ell}}{2}\, E_{\ell}  + \frac{1}{6 \gamma_{\ell}}\, \gamma_{\ell-1} \sbr{ V_{\ell-1} +  G_{\ell-1} }  + \frac{3 \gamma_{\ell}}{2}\, E_{\ell} \tag{ $\REG(\lambda^\star \mid x) = 0$} \\
		&\spacex= \frac{1}{ 3 \gamma_{\ell}} \left( \gamma_{\ell-1} G_{\ell-1} + \gamma_{\ell-1} V_{\ell-1} + 9 \gamma^2_{\ell} E_{\ell} \right) + \frac{\gamma_{\ell-1} }{3 \gamma_{\ell} }\REG(\lambda \mid x)  \\
		&\spacex\leq \frac{1}{3}\REG(\lambda \mid x)+ \frac{1}{3}G_{\ell} \tag{$\gamma_{\ell-1}  \leq \gamma_{\ell}  $}	
	\end{align*}
	
	Hence
	\begin{align*}
		\mathrm{Reg}(\lambda \mid x ) - \widehat{\mathrm{Reg}}_{\ell} (\lambda \mid x) &\leq \frac{1}{3}\REG(\lambda \mid x)+ \frac{1}{3}G_{\ell} \\
		\frac{2}{3} \mathrm{Reg}(\lambda \mid x ) &\leq  \widehat{\mathrm{Reg}}_{\ell} (\lambda \mid x) +  \frac{1}{3}G_{\ell} \\ 
		\mathrm{Reg}(\lambda \mid x )  &\leq \frac{3}{2}  \widehat{\mathrm{Reg}}_{\ell} (\lambda \mid x) +  \frac{1}{2}G_{\ell} \\ 
		&\leq  2 \widehat{\mathrm{Reg}}_{\ell} (\lambda \mid x) +  G_{\ell} \tag{Step **}\\ 
	\end{align*}
	
	Furthermore,
	
	\begin{align*}
		& \widehat{\mathrm{Reg}}_{\ell} (\lambda \mid x) - \mathrm{Reg}(\lambda \mid x )\\
		&\spacex=   \widehat{\mathcal{L}}_{\ell}(\lambda \mid x) - \widehat{\mathcal{L}}_{\ell}(\hat{\lambda}_{\ell} \mid x)   - \left(\mathcal{L}^\star(\lambda \mid x) - \mathcal{L}^\star(\lambda^* \mid x)\right) \\
		&\spacex=  \widehat{\mathcal{L}}_{\ell}(\lambda - p_{\ell-1}  \mid x) - \widehat{\mathcal{L}}_{\ell}(\hat{\lambda}_{\ell} - p_{\ell-1} \mid x) - \left(  \mathcal{L}^\star(\lambda - p_{\ell-1} \mid x) - \mathcal{L}^\star(\lambda^*- p_{\ell-1}  \mid x) \right) \\
		&\spacex=  \widehat{\mathcal{L}}_{\ell}(\lambda - p_{\ell-1}  \mid x)  -  \mathcal{L}^\star(\lambda - p_{\ell-1} \mid x)  - \left( \widehat{\mathcal{L}}_{\ell}(\hat{\lambda}_{\ell} - p_{\ell-1} \mid x)  - \mathcal{L}^\star(\lambda^*- p_{\ell-1}  \mid x)\right) \\
		&\spacex\leq \widehat{\mathcal{L}}_{\ell}(\lambda - p_{\ell-1}  \mid x)  -  \mathcal{L}^\star(\lambda - p_{\ell-1} \mid x)  - \left( \widehat{\mathcal{L}}_{\ell}(\hat{\lambda}_{\ell} - p_{\ell-1} \mid x)  - \mathcal{L}^\star(\hat{\lambda}_{\ell}- p_{\ell-1}  \mid x)\right)   \tag{$ \mathcal{L}^\star(\hat{\lambda}_{\ell} \mid x) \geq \mathcal{L}^\star(\lambda^\star \mid x) $}\\
		&\spacex\leq \lvert \widehat{\mathcal{L}}_{\ell}(\lambda - p_{\ell-1}  \mid x)  -  \mathcal{L}^\star(\lambda - p_{\ell-1} \mid x) \rvert + \lvert  \widehat{\mathcal{L}}_{\ell}(\hat{\lambda}_{\ell} - p_{\ell-1} \mid x)  - \mathcal{L}^\star(\hat{\lambda}_{\ell}- p_{\ell-1}  \mid x) \rvert \\
		&\spacex\leq \frac{1}{6 \gamma_{\ell}}\, C_{\ell-1}(p_{\ell-1}, \lambda) + \frac{3 \gamma_{\ell}}{2}\, E_{\ell}  + \frac{1}{6 \gamma_{\ell}}\, C_{\ell-1}(p_{\ell-1}, \hat{\lambda}_{\ell}) + \frac{3 \gamma_{\ell}}{2}\, E_{\ell} \tag{C2}\\
		&\spacex\leq \frac{1}{6 \gamma_{\ell}}\, \gamma_{\ell-1} \sbr{ V_{\ell-1} + \HREG_{\ell-1}(\lambda \mid x) }   + \frac{3 \gamma_{\ell}}{2}\, E_{\ell}  + \frac{1}{6 \gamma_{\ell}}\, \gamma_{\ell-1} \sbr{ V_{\ell-1} + \HREG_{\ell-1}( \hat{\lambda}_{\ell} \mid x) }  + \frac{3 \gamma_{\ell}}{2}\, E_{\ell} \tag{ \eqref{eqn:oe2d-pm-certificate-use}} \\
		&\spacex\leq \frac{1}{6 \gamma_{\ell}}\, \gamma_{\ell-1} \sbr{ V_{\ell-1} + 2 \REG(\lambda \mid x) + G_{\ell-1}  }   + \frac{3 \gamma_{\ell}}{2}\, E_{\ell}  + \frac{1}{6 \gamma_{\ell}}\, \gamma_{\ell-1} \sbr{ V_{\ell-1} +2 \REG( \hat{\lambda}_{\ell} \mid x) + G_{\ell-1} }  + \frac{3 \gamma_{\ell}}{2}\, E_{\ell} \tag{ \eqref{eqn:oe2d_pm_concentration} true for $\ell-1$} \\
		&\spacex\leq \frac{1}{6 \gamma_{\ell}}\, \gamma_{\ell-1} \sbr{ V_{\ell-1} + 2 \REG(\lambda \mid x) + G_{\ell-1}  }   + \frac{3 \gamma_{\ell}}{2}\, E_{\ell}  + \frac{1}{6 \gamma_{\ell}}\, \gamma_{\ell-1} \sbr{ V_{\ell-1} + 4  \HREG_{\ell}( \hat{\lambda}_{\ell} \mid x) + 2G_{\ell} + G_{\ell-1} } \\
		&\spacex\spacex + \frac{3 \gamma_{\ell}}{2}\, E_{\ell} \tag{ \eqref{eqn:oe2d_pm_concentration} true for $\ell$ (Step **)} \\
		&\spacex= \frac{1}{6 \gamma_{\ell}}\, \gamma_{\ell-1} \sbr{ V_{\ell-1} + 2 \REG(\lambda \mid x) + G_{\ell-1}  }   + \frac{3 \gamma_{\ell}}{2}\, E_{\ell}  + \frac{1}{6 \gamma_{\ell}}\, \gamma_{\ell-1} \sbr{ V_{\ell-1} + 2G_{\ell} + G_{\ell-1} }  + \frac{3 \gamma_{\ell}}{2}\, E_{\ell} \tag{$ \HREG_{\ell}( \hat{\lambda}_{\ell} \mid x)  = 0$ } \\
		&\spacex= \frac{1}{ 3 \gamma_{\ell}} \left( \gamma_{\ell-1} G_{\ell-1} + \gamma_{\ell-1} V_{\ell-1} + 9 \gamma^2_{\ell} E_{\ell} \right) + \frac{\gamma_{\ell-1} }{3 \gamma_{\ell} }\REG(\lambda \mid x)  + \frac{\gamma_{\ell-1} }{3 \gamma_{\ell}}G_{\ell}\\
		&\spacex\leq \frac{1}{3}\REG(\lambda \mid x)+ \frac{2}{3}G_{\ell} \tag{$\gamma_{\ell-1}  \leq \gamma_{\ell}  $}
	\end{align*}
	
	Hence,
	
	\begin{align*}
		\widehat{\mathrm{Reg}}_{\ell} (\lambda \mid x) - \mathrm{Reg}(\lambda \mid x ) &\leq \frac{1}{3}\REG(\lambda \mid x)+ \frac{2}{3}G_{\ell} \\
		\widehat{\mathrm{Reg}}_{\ell} (\lambda \mid x) &\leq  \frac{4}{3}\mathrm{Reg}(\lambda \mid x ) + \frac{2}{3}G_{\ell} \\
		\mathrm{Reg}(\lambda \mid x )  &\leq \frac{3}{2}  \widehat{\mathrm{Reg}}_{\ell} (\lambda \mid x) +  \frac{1}{2}G_{\ell} \\ 
		&\leq  2 \widehat{\mathrm{Reg}}_{\ell} (\lambda \mid x) +  G_{\ell} \\ 
	\end{align*}
	
	Finally, we prove the consequences. Apply (C1) at $\lambda = \hat{\lambda}_{\ell}$, we have
	\begin{align*}
		\HREG_{\ell}(p_{\ell} \mid x) &\leq  V_{\ell} \tag{\eqref{eq_low_regret}}\\
		\Rightarrow
		\REG(p_{\ell} \mid x)
		&\leq 2 \HREG_{\ell}(p_{\ell }\mid x) + G_{\ell} \leq 2 V_{\ell} + G_{\ell} .
		\tag{By \eqref{eqn:oe2d_pm_concentration}}
	\end{align*}
	Summing over epochs, we have
	\begin{align*}
		\sum_{\ell=1}^L (\tau_{\ell} - \tau_{\ell-1})\, \REG(p_{\ell}\mid x)
		&\leq \tau_1 + \sum_{\ell=2}^L (\tau_{\ell} - \tau_{\ell-1}) ( 2 V_{\ell} + G_{\ell} )
		\tag{$\REG(p_1\mid x) \leq 1$}
		\\
		&= \tau_1 + \sum_{\ell=2}^L \frac{\tau_{\ell} - \tau_{\ell-1}}{\gamma_{\ell}} \sbr{ 2 \gamma_{\ell} V_{\ell} + \gamma_1 + \sum_{i=2}^{\ell} ( \gamma_{i-1} V_{i-1} + 9 \gamma_i^2 E_i )} ,
		\tag{By the closed form of $G_{\ell}$}
	\end{align*}
	which is Eq.~\eqref{eqn:oe2d-pm-total-regret}.
\end{proof}

\begin{lemma}\label{dist_ineq}
	Let $P$, $Q$ are two probability simplices. $D_{\mathrm{H}}(P \,\big|\, Q)$, $D_{\mathrm{KL}}(P \,\big|\, Q)$, $\lVert P - Q\rVert_{\mathrm{TV}}$, $\lVert P - Q\rVert_{1}$ be the \textit{Hellinger} distance, \textit{Kullback} divergence, \textit{total variation} distance and level-1 norm between $P$, $Q$ respectively, then the following holds,
	\begin{align*}
		\frac{\lVert P - Q\rVert_{1}}{2} = 	\lVert P - Q\rVert_{\mathrm{TV}} \leq D_{\mathrm{H}}(P \,\big|\, Q) 
	\end{align*}
	and,
	\begin{align*}
		\frac{\lVert P - Q\rVert_{1}}{2} = 	\lVert P - Q\rVert_{\mathrm{TV}} \leq \sqrt{D_{\mathrm{KL}}(P \,\big|\, Q) }
	\end{align*}
	\begin{proof}		
		\begin{align*}
			\lVert P - Q\rVert_{\mathrm{TV}} \leq D_{\mathrm{H}}(P \,\big|\, Q)  \tag{ \citet{Tsybakov2008IntroductionTN}, Lemma 2.3}
		\end{align*}
		and,
		\begin{align*}
			\lVert P - Q\rVert_{\mathrm{TV}} \leq  \sqrt{D_{\mathrm{KL}}(P \,\big|\, Q) } \tag{ \citet{Tsybakov2008IntroductionTN}, Lemma 2.5}
		\end{align*}
		Furthermore,
		\begin{align*}
			\frac{\lVert P - Q\rVert_{1}}{2} = 	\lVert P - Q\rVert_{\mathrm{TV}}  \tag{\citet{levin2017markov}, Proposition 4.2  }
		\end{align*}
	\end{proof}
\end{lemma}

\begin{lemma}[Local observability, \citet{lattimore2020exploration}]\label{lem:bounds}
	If $\mathcal{G} = (\Phi, \boldsymbol{L})$ is locally observable and non-degenerate and actions $e = (a, b) \in \mathcal{E}$ are neighbours, then there exist functions $w_e(a,\cdot), w_e(b,\cdot) : \Sigma \to \mathbb{R}$ such that
	$\norm{w_e}_\infty \leq m/2$ and
	\begin{align*}
		\boldsymbol{L}_{ao} - \boldsymbol{L}_{bo} = w_e(a, \Phi_{ao}) + w_e(b, \Phi_{bo}) \text{ for all } o \in[d]\,. 
		\label{eq:w-cond}
	\end{align*} 
\end{lemma}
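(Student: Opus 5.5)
The plan is to reduce the claim to a one-dimensional linear-algebra problem about the two rows $\boldsymbol{L}_a, \boldsymbol{L}_b$ and the two signal matrices $\boldsymbol{S}_a,\boldsymbol{S}_b$. Local observability (Definition~\ref{local_observability_graph}) already supplies \emph{some} $w_e$ supported on $\{a,b\}$. In the language of signal matrices this means
\[
\ell := \boldsymbol{L}_a - \boldsymbol{L}_b \in \mathrm{rowspace}\begin{pmatrix}\boldsymbol{S}_a\\ \boldsymbol{S}_b\end{pmatrix}.
\]
The task is therefore only to pick a representative of this affine family of witnesses with small $\ell_\infty$ norm. I will use two structural facts.

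First, the freedom in $w_e$. Adding a constant $c$ to $w_e(a,\cdot)$ and subtracting it from $w_e(b,\cdot)$ leaves the identity unchanged, and so does any element of the kernel of $(\boldsymbol{S}_a^\T, \boldsymbol{S}_b^\T)$.

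Second, non-degeneracy and neighbourliness. The difference $\ell$ vanishes on the $(d-2)$-dimensional face $\mathcal{C}_a\cap\mathcal{C}_b$. Taking two outcomes $o,o'$ with $\Phi_{ao}=\Phi_{ao'}$ and $\Phi_{bo}=\Phi_{bo'}$, the witness identity forces $\ell_o=\ell_{o'}$. So $\ell_o$ is a function of the pair of symbols $(\Phi_{ao},\Phi_{bo})$. I would build a bipartite graph $H$ whose vertices are the symbols seen by $a$ (at most $m$) and by $b$ (at most $m$), with one edge per outcome $o$ joining $\Phi_{ao}$ and $\Phi_{bo}$ and labelled $\ell_o\in[-1,1]$. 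Finding $w_e$ is then exactly assigning potentials to the vertices so that each edge label equals the sum of its endpoint potentials.

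Next I construct the witness explicitly, component by component of $H$. In each connected component, fix a root vertex, give it potential $0$, and propagate along a spanning tree. Existence of some witness guarantees that the propagation is consistent on non-tree edges. A path in a bipartite graph on at most $2m$ vertices, alternating between the $a$-side and the $b$-side, has length at most $2m-1$. Naively this gives $|w|\le 2m-1$, off by a factor of $4$, so the naive bound must be sharpened.

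The sharpening is the main obstacle. Along a path the potentials are alternating partial sums $\ell_{o_1}-\ell_{o_2}+\ell_{o_3}-\cdots$. I would try to use the geometry of the neighbouring cells so that consecutive terms partially cancel. Concretely, $\ell=(e_a-e_b)^\T\boldsymbol{L}$ has entries in $[-1,1]$, and I would exploit the per-component shift freedom: within a component, add $c$ to every $a$-side potential and subtract $c$ from every $b$-side potential. This recentres each side's potentials at its midrange, halving the spread. Since each side of a component has at most $m$ vertices, a side's path-distance contributes at most about $m/2$ per side after centering.

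If the cancellation argument fails to reach exactly $m/2$, I would fall back to citing the corresponding bound in \citet{lattimore2020exploration} for neighbouring actions in non-degenerate games. That bound is precisely this statement, and a slightly worse constant would only change the $O(\cdot)$ in Lemma~\ref{lemma_locally_observable_doec}.
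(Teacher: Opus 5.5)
The paper gives no proof of this lemma; it only cites \citet{lattimore2020exploration}. So your argument stands or falls on the sharpening step, and that step fails.

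Within a component you have only one shift, $w(a,\cdot)+c$ and $w(b,\cdot)-c$, so the two sides cannot be recentred independently. The right bookkeeping is the signed potential $\phi$, equal to $w(a,\cdot)$ on $a$-symbols and to $-w(b,\cdot)$ on $b$-symbols. Each outcome edge imposes $\phi(\Phi_{ao})-\phi(\Phi_{bo})=\ell_o\in[-1,1]$, and the shift translates all of $\phi$ by $c$. A component has at most $2m$ vertices, so any two are joined by a path of at most $2m-1$ edges. Hence the spread of $\phi$ is at most $2m-1$, and centring gives $\|w_e\|_\infty\le m-\tfrac12$. That is what your construction actually proves.

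The per-side claim is wrong. The $a$-side values alone can spread by $2(m-1)$, since each double step changes them by $\ell_{o'}-\ell_o\in[-2,2]$. Note also that the non-degeneracy and neighbour facts you list never enter the argument.

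No finer cancellation argument can close the gap, because the constant $m/2$ is false. Take $k=2$, $d=3$, $m=2$, with rows $a,b$ of
\[
\boldsymbol{L}=\begin{pmatrix}0&1&0\\1&0&1\end{pmatrix},\qquad
\boldsymbol{\Phi}=\begin{pmatrix}1&2&2\\1&1&2\end{pmatrix}.
\]
Here $(e_a-e_b)^\top\boldsymbol{L}q=2q_2-1$, so $\mathcal{C}_a=\{q\in\Delta(3):q_2\le\frac12\}$ and $\mathcal{C}_b=\{q\in\Delta(3):q_2\ge\frac12\}$ are two-dimensional and meet in a segment. Thus the game is non-degenerate with no duplicates, and $a,b$ are neighbours. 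Your graph $H$ is a path, so a witness supported on $\{a,b\}$ exists. Writing $x=w(a,1)$, the three outcome equations force $w(b,1)=-1-x$, $w(a,2)=2+x$, $w(b,2)=-3-x$. Hence $\|w_e\|_\infty\ge\max(|x|,|x+3|)\ge\frac32>\frac m2$.

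The same zig-zag path works for every $m\ge2$: use $d=2m-1$ outcomes with $\Phi_{ao}=\lfloor o/2\rfloor+1$ and $\Phi_{bo}=\lceil o/2\rceil$. Set $\boldsymbol{L}_{bo}=1-\boldsymbol{L}_{ao}$, where $\boldsymbol{L}_{ao}=1$ exactly for even $o$. Then $\phi$ climbs by one per edge, so $m-\frac12$ is attained.

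Falling back on the citation therefore cannot rescue the statement as written; the correct conclusion is $\|w_e\|_\infty\le m-\frac12$. Downstream this changes only constants, but the tuning must follow. The proof of Lemma~\ref{lemma_locally_observable_doec} needs $32\max_e\|w_e\|_\infty^2\,k\,\kappa\le\gamma$. So $\kappa$ in Algorithm~\ref{alg:explore-lo} must shrink from $\gamma/(8km^2)$ to, e.g., $\gamma/(32km^2)$, after which the $O(m^2k^2/\gamma)$ bound goes through.
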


In Lemma \ref{lem:bounds}, the assumption of non-degeneracy is important here for $\norm{w_e}_\infty \leq m/2$ to hold. The Proposition \ref{lem:bounds-global} bounds $\norm{w_e}_\infty$ for more general case.

\begin{lemma}[\citet{lattimore2020exploration}]\label{lem:bounds-global}
	Suppose that $(L, \Phi)$ is globally observable and $a$ and $b$ are neighbours. Then there exists a function $w_e : [k] \times \Sigma \to \mathbb{R}$ such that for all $x \in [d]$,
	\begin{align*}
		\sum_{c=1}^k w_e(c, \Phi_{co}) = L_{ao} - L_{bo}\,.
	\end{align*}
	Furthermore, $w_e$ can be chosen so that $\norm{w_e}_\infty \leq d^{1/2} k^{d/2}$.
\end{lemma}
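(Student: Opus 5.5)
The plan is to treat global observability as a linear-algebra statement and bound a suitably chosen solution with Cramer's rule and Hadamard's inequality. This is the route of \citet{lattimore2020exploration}, Proposition 21.

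Stack the signal matrices $\boldsymbol{S}_1,\dots,\boldsymbol{S}_k$ into a $km\times d$ matrix $\boldsymbol{S}$, with rows indexed by pairs $(c,\sigma)$. A function $w:[k]\times\Sigma\to\mathbb{R}$ is then a vector in $\mathbb{R}^{km}$. Since $(\boldsymbol{S}_c)_{\sigma o}=\onec{\Phi_{co}=\sigma}$, the condition $\sum_c w(c,\Phi_{co})=L_{ao}-L_{bo}$ for all $o$ reads $\boldsymbol{S}^\T w=\ell$ with $\ell:=(e_a-e_b)^\T\boldsymbol{L}\in[-1,1]^d$. By Definition~\ref{gl_observability_graph}, global observability says exactly that $\ell$ lies in the row space of $\boldsymbol{S}$. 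The neighbour assumption plays no real role beyond guaranteeing that the pair $(a,b)$ is of interest.

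\textbf{Step 1 (reduce to a square system).} Let $r\le d$ be the rank of $\boldsymbol{S}$.
\begin{itemize}
\item Choose $r$ linearly independent rows, forming $B\in\{0,1\}^{r\times d}$; these span the row space, so $\ell^\T=u^\T B$ for a unique $u\in\mathbb{R}^r$.
\item Choose $r$ columns $J$ of $B$ such that the submatrix $M=B_{:,J}$ is invertible.
\item Since $u$ is the unique representation, it is determined by the $J$-coordinates: $M^\T u=\ell_J$.
\item Set $w(c,\sigma)=u_i$ if $(c,\sigma)$ is the $i$-th chosen row, and $w=0$ otherwise. Then $\boldsymbol{S}^\T w=B^\T u=\ell$, so all $d$ equations hold, not just those in $J$.
\end{itemize}

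\textbf{Step 2 (Cramer's rule).} We have $u_i=\det(M_i)/\det(M)$, where $M_i$ is $M^\T$ with its $i$-th column replaced by $\ell_J$. Because $M$ is an integer matrix and invertible, $|\det M|\ge 1$.

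\textbf{Step 3 (Hadamard's inequality).} Bound $|\det M_i|$ by the product of the Euclidean norms of its columns.
\begin{itemize}
\item The replaced column $\ell_J$ has entries in $[-1,1]$, so its norm is at most $\sqrt r\le\sqrt d$.
\item Every other column of $M^\T$ is a row of $M$, i.e. a row of $\boldsymbol{S}_c$ restricted to $J$, which is a $0/1$ vector. The key counting fact is that for each outcome $o$ and each action $c$, exactly one row $(c,\sigma)$ of $\boldsymbol{S}$ has a $1$ in column $o$. So each column of $B$ has at most $k$ ones among the chosen rows.
\end{itemize}
Hadamard's inequality can equally be applied to the rows of $M_i$ instead of its columns. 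A row of $M_i$ indexed by $o\in J$ has the form (column $o$ of $B$ with one entry swapped for $\ell_o$). Its norm is at most $\sqrt{k+1}$, or, after discarding the swapped entry and handling it separately, the product comes out as $\sqrt d\cdot k^{(r-1)/2}$.

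\textbf{Main obstacle.} The delicate step is this bookkeeping in Hadamard's inequality: one must orient it along the dimension where the at-most-one-per-action sparsity applies, so that one gets the clean bound $|\det M_i|\le\sqrt d\,k^{(r-1)/2}$ rather than something like $(k+1)^{r/2}$. I would first try expanding $\det M_i$ along the replaced column. This gives $|\det M_i|\le\sum_j|\ell_j|\,|\det(\text{minor}_j)|\le\|\ell_J\|_2\big(\sum_j\det(\text{minor}_j)^2\big)^{1/2}$, and then Hadamard is applied to each $(r-1)\times(r-1)$ $0/1$ minor, using the column-sparsity bound $\sqrt k$ per column. Either way, the result is $\|w\|_\infty\le d^{1/2}k^{d/2}$, with $r\le d$ absorbing any constant slack in the exponent.
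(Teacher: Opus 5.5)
Your Steps 1 and 2 are fine, and Cramer's rule plus Hadamard's inequality is the route of Proposition 21 in \citet{lattimore2020exploration}; the paper gives no proof of its own beyond that citation. The gap is in Step 3, which is the whole content of the lemma, since existence of $w_e$ is immediate from global observability.

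The sparsity you identify is per outcome: each column of $M$ (equivalently, each row of $M^\top$) has at most $k$ ones. In $M_i$, however, $\ell_J$ replaces a \emph{column} of $M^\top$. The remaining $r-1$ columns are the rows $M_{i',:}$ of $M$, i.e.\ pair-rows $(c,\sigma)$ restricted to $J$, and each can contain up to $r$ ones. So in the orientation where $\ell_J$ is a single factor, there is no per-factor $\sqrt{k}$ bound.

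Neither fallback you sketch closes this.
\begin{itemize}
\item Row-wise Hadamard gives only $(k+1)^{r/2}$. For $k=2$, $r=d$ this exceeds $\sqrt d\,k^{d/2}$ by a factor $(3/2)^{d/2}/\sqrt d$.
\item Expanding along the replaced column and bounding each of the $r$ minors separately by $k^{(r-1)/2}$ gives $|\det M_i|\le\sqrt r\cdot\sqrt r\,k^{(r-1)/2}=r\,k^{(r-1)/2}$. For $r=d$ this exceeds the target by $\sqrt{d/k}$.
\end{itemize}
The regime $r=d>k$ does occur, for example $k=2$ with one fully revealing action, which forces $\operatorname{rank}\boldsymbol{S}=d$. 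So the claim that ``$r\le d$ absorbs the slack'' is false: the excess is a non-constant prefactor, not something in the exponent. The intermediate bound $\sqrt d\,k^{(r-1)/2}$ is asserted, not derived.

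The missing step is to keep column-wise Hadamard, $|\det M_i|\le\lVert\ell_J\rVert_2\prod_{i'\neq i}\lVert M_{i',:}\rVert_2$, but to bound the product \emph{jointly} by AM--GM. This transfers the per-outcome sparsity to the other orientation. The squared norms $\lVert M_{i',:}\rVert_2^2$ sum to the number of ones of $M$ outside row $i$. That number is at most $rk-1$: each of the $r$ columns of $M$ has at most $k$ ones, and row $i$ is nonzero because $M$ is invertible. Hence, for $r\ge 2$,
\[
\prod_{i'\neq i}\lVert M_{i',:}\rVert_2^2\le\Bigl(\frac{rk-1}{r-1}\Bigr)^{r-1}=k^{r-1}\Bigl(1+\frac{k-1}{k(r-1)}\Bigr)^{r-1}\le k^{r-1}e^{1-1/k}\le k^{r},
\]
using $e^{1-1/k}\le k$ for all $k\ge 1$. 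This gives $|u_i|\le|\det M_i|\le\sqrt r\,k^{r/2}\le\sqrt d\,k^{d/2}$. The case $r=1$ is trivial, since then $|u_1|=|\ell_o|\le 1$.

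Equivalently, in your Laplace route, note that $\sum_j\det(\mathrm{minor}_j)^2$ is, by Cauchy--Binet, the Gram determinant of the $r-1$ untouched rows of $M$. Bound that single determinant by Hadamard plus AM--GM. Bounding the $r$ minors one at a time is exactly where the extra factor $\sqrt{r/k}$ is lost.
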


\begin{lemma}[Water-transfer operator adapted from \citet{lattimore2020exploration}]\label{lemma_water_transfer}
	Suppose that $\mathcal{G}$ is non-degenerate and locally observable and
	$q \in \Delta(d)$ Assume that we have an in-tree $\cT$ that is $Lq$-decreasing. Then there exists a function $W_q^{\cT} : \Delta(\Pi) \to \Delta(\Pi)$ such that the following hold for all $p \in \Delta(\Pi)$:
	\begin{enumerate}
		\item[(a)] The expected loss does not increase: $\left(W_q(p) 
		- p\right)^\top \boldsymbol{L} q \leq 0$.
		\item[(b)] Action probabilities are not too small: $W_q(p)[a] \geq p[a]/k$ for all $a \in [k]$.
		\item[(c)] Probabilities increase towards the root of the in-tree $\mathcal{T}$. That is, $W_q(p)[a] \leq W_q(p)[b]$ for all $e = (a, b) \in \mathcal{T}$.
	\end{enumerate}
\end{lemma}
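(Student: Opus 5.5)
The plan is to construct $W_q^{\cT}$ explicitly by pushing probability mass up the in-tree toward the root. Because $\cT$ is $\boldsymbol{L}q$-decreasing, moving mass from a vertex to its parent can only lower the expected loss. The existence of an $\boldsymbol{L}q$-decreasing spanning in-tree is taken from \citet[Lemma 19]{lattimore2020exploration}, via Algorithm~\ref{alg:construct-in-tree}; this is the only place where non-degeneracy and local observability enter. For $a \in \Pi$ let $\mathrm{sub}(a)$ be the set of vertices $b$ with $a \in \mathrm{path}_{\cT}(b)$, i.e.\ the subtree rooted at $a$, including $a$ itself. I would define, for $p \in \Delta(\Pi)$,
\[
W(p)[a] := \frac{1}{k}\sum_{b \in \mathrm{sub}(a)} p_b \quad (a \neq \mathrm{root}_{\cT}), \qquad W(p)[\mathrm{root}_{\cT}] := 1 - \sum_{a \neq \mathrm{root}_{\cT}} W(p)[a].
\]

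Properties (b) and (c) then follow directly. For (b), $a \in \mathrm{sub}(a)$ gives $W(p)[a] \geq p_a/k$ for non-root $a$. For (c), if $(a,b) \in \cT$ with $b$ not the root, then $\mathrm{sub}(a) \subseteq \mathrm{sub}(b)$, so $W(p)[a] \leq W(p)[b]$.

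For the root I would exchange the order of summation:
\[
\sum_{a \neq \mathrm{root}_{\cT}} W(p)[a] = \frac{1}{k}\sum_b p_b \cdot \#\{\text{non-root vertices on }\mathrm{path}_{\cT}(b)\}.
\]
Each path has at most $|\Pi|-1 \leq k-1$ non-root vertices, so this sum is at most $(k-1)/k$. Hence $W(p)[\mathrm{root}_{\cT}] \geq 1/k$. This single bound gives three things at once. It shows $W(p) \in \Delta(\Pi)$. It gives (b) at the root, since $1/k \geq p_{\mathrm{root}}/k$. And it gives (c) on edges into the root, because every non-root vertex has $W(p)[a] \leq 1/k$.

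For (a), I would write $W(p) - p$ as a nonnegative upward flow on the tree edges. For each edge $(a, \mathrm{par}(a))$, set
\[
f_a := \sum_{c \in \mathrm{sub}(a)} \bigl(p_c - W(p)[c]\bigr),
\]
the net mass leaving the subtree of $a$. Flow conservation gives $W(p) - p = \sum_{a \neq \mathrm{root}} f_a \,(e_{\mathrm{par}(a)} - e_a)$. The key check is $f_a \geq 0$. By the same exchange of summation,
\[
\sum_{c \in \mathrm{sub}(a)} W(p)[c] = \frac{1}{k}\sum_{b \in \mathrm{sub}(a)} p_b \cdot \#\{c \in \mathrm{sub}(a) \cap \mathrm{path}_{\cT}(b)\} \leq \sum_{b\in\mathrm{sub}(a)} p_b,
\]
since the count is at most $k$. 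With $f_a \geq 0$ in hand,
\[
(W(p)-p)^\T \boldsymbol{L} q = \sum_a f_a\bigl(e_{\mathrm{par}(a)} - e_a\bigr)^\T \boldsymbol{L} q \leq 0,
\]
because the tree is $\boldsymbol{L}q$-decreasing.

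I expect the main obstacle to be this last step: verifying the flow representation carefully, in particular that conservation at every vertex, including the root, matches the definitions. Everything else is the counting bound on path lengths.
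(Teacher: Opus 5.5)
Your construction is correct, and the paper gives no argument of its own to compare it with: the lemma is restated from \citet{lattimore2020exploration} and used only in the ExO comparison of Appendix~\ref{doec_vs_ExO}, never in the paper's own guarantees. So your proof serves as a self-contained replacement for that citation. Each step checks out. Subtree nesting gives (b) away from the root, and (c) on edges whose head is not the root. Since every $\mathrm{path}_{\mathcal{T}}(b)$ has at most $|\Pi|-1\le k-1$ non-root vertices, $W(p)[\mathrm{root}_{\mathcal{T}}]\ge 1/k$; this gives $W(p)\in\Delta(\Pi)$, property (b) at the root, and (c) on edges into the root, since every non-root entry is at most $1/k$. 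The edge-flow identity with $f_a\ge 0$ gives (a). Conservation at the root holds because $p$ and $W(p)$ both sum to one. Your operator is explicit and linear in $p$, and it needs only the assumed spanning $\boldsymbol{L}q$-decreasing in-tree on at most $k$ vertices. Since that tree is a hypothesis of the statement, you do not need \citet[Lemma 19]{lattimore2020exploration}, and non-degeneracy and local observability play no role. Two small additions. First, (b) for $a\in[k]\setminus\Pi$ is trivial because $p_a=0$. Second, when you substitute the formula for $W(p)[c]$ with $c\in\mathrm{sub}(a)$, you use that $\mathrm{sub}(a)$ excludes the root whenever $a$ is not the root, which is worth stating. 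Finally, the flow step you flagged as the main obstacle can be bypassed. Write $W(p)=\sum_{b\in\Pi}p_b\,\mu_b$, where $\mu_b\in\Delta(\Pi)$ puts mass $1/k$ on each non-root vertex of $\mathrm{path}_{\mathcal{T}}(b)$ and the rest on the root. Because $\boldsymbol{L}q$ is non-increasing along $\mathrm{path}_{\mathcal{T}}(b)$, $\mu_b^\top\boldsymbol{L}q\le e_b^\top\boldsymbol{L}q$. Averaging over $b\sim p$ then gives (a) directly.
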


\clearpage

\end{document}